\documentclass{article}

\usepackage{microtype}
\usepackage{graphicx}
\usepackage{subcaption}
\usepackage{booktabs}

\usepackage{hyperref}

\usepackage[accepted]{icml2026}

\usepackage{amsmath}
\usepackage{amssymb}
\usepackage{mathtools}
\usepackage{amsthm}
\usepackage{mathrsfs}
\usepackage{tabularx}
\usepackage{enumitem}

\usepackage[capitalize,noabbrev]{cleveref}

\newcommand{\R}[0]{\mathbb{R}}
\newcommand{\Z}[0]{\mathbb{Z}}

\newcommand{\J}[0]{\mathcal{J}}

\newcommand{\inv}[1]{#1^{-1}}

\DeclareMathOperator{\tr}{tr}

\theoremstyle{plain}
\newtheorem{theorem}{Theorem}[section]
\newtheorem{proposition}[theorem]{Proposition}

\theoremstyle{definition}
\newtheorem{definition}[theorem]{Definition}

\theoremstyle{remark}
\newtheorem{remark}[theorem]{Remark}
\newtheorem{example}[theorem]{Example}

\usepackage[textsize=tiny]{todonotes}

\icmltitlerunning{Multiplication Beyond Groups: Stratified Fourier Mechanisms in Transformer Circuits}

\begin{document}

\twocolumn[
  \icmltitle{Multiplication Beyond Groups: \\ Stratified Fourier Mechanisms in Transformer Circuits}

  \icmlsetsymbol{equal}{*}

  \begin{icmlauthorlist}
    \icmlauthor{Zitong Andrew Chen}{univ,equal}
    \icmlauthor{Junaid Hasan}{univ,equal}
    \icmlauthor{Akhil Srinivasan}{univ,equal}
    \icmlauthor{Hemkesh Bandi}{univ}
    \icmlauthor{Jarod Alper}{univ}
  \end{icmlauthorlist}

  \icmlaffiliation{univ}{UW Math AI Lab, University of Washington}

  \icmlcorrespondingauthor{Zitong A. Chen}{zchen66@uw.edu}

  \icmlkeywords{Machine Learning, ICML}

  \vskip 0.3in
]

\printAffiliationsAndNotice{\icmlEqualContribution}

\begin{abstract}
  Transformers have demonstrated a remarkable ability to learn algorithmic reasoning, yet mechanistic analyses have mostly focused on globally invertible operations such as cyclic addition and group composition. In this work, we investigate how small transformers learn modular integer multiplication over composite moduli, a fundamentally non-invertible operation due to the presence of zero-divisors. We propose the \emph{monoid extension}: a localized generalization of Group Composition via Representation (GCR) that suggests the learned computation does not rely on a single global representation space. Instead, the model partitions the input space into local hierarchical algebraic regions, where group-like structure survives and Fourier mechanisms can be applied. In transformers trained on square-free modular multiplication, we find that embeddings organize around these regions, attention exhibits class-sensitive routing and low-rank write directions, and local character features explain a large fraction of the model's output logits. Our results suggest that representation-theoretic mechanisms previously identified for group operations can extend beyond groups to more general structures.
\end{abstract}

\section{Introduction}
Neural networks can learn to perform structured mathematical tasks, often generalizing far beyond the examples seen during training. A central goal of \emph{mechanistic interpretability} is to reverse-engineer these learned algorithms to be able to deeply understand the internal computations that produce the correct answer.

Early work on modular addition and related arithmetic tasks showed that small transformers often learn rigid mathematical structure rather than relying on rote memorization \citep{power2022grokking,nanda2023progress,gromov2023grokking,zhong2023clock}. These analytical works often involved concepts from the mathematical fields of group theory, Fourier analysis, and representation and character theory, which bridge the gap between the theoretical abstract operations and the physical linear algebra as computed by the networks.

In particular, prior work on learning group operations has found that models represent the group elements using Fourier features and compute group operations through representation-theoretic mechanisms. These results are generalized by Group Composition via Representation (GCR) as introduced by \citet{chughtai2023toy}, which explains how models can compute group operations by embedding elements into representation space, compose the representations internally, and use the unembedding to score candidates via character-like identity tests.

However, this line of work largely assumes that the underlying algebraic structure is a group, which ensures global invertibility for all elements. An extension to arbitrary associative multiplication tables breaks this assumption, where the underlying structures now involve non-invertible elements. This raises a natural question: what happens to GCR-style mechanisms when global invertibility is no longer available?

This is exactly the question posed by \citet{chughtai2023toy}. In this work, we address this question by exploring the non-invertible algebraic setting through analyzing how models learn modular multiplication over composite moduli. We show that, in a transformer trained on modular multiplication over $\Z_n$ for select values of $n$, \textbf{GCR-style computation appears to localize to disjoint algebraic strata} that partitions the input space, where \emph{local} Fourier characters and \emph{local} inverses explain a substantial fraction of the model’s learned representations and logits. Our key contributions are: \begin{itemize}[topsep=0pt, itemsep=2pt]
    \item We leverage local representation theory to formulate square-free modular multiplication as a non-group extension point for GCR-style mechanistic explanations.
    
    \item We provide circuit-level evidence for routing with respect to algebraic structure, where attention and OV circuits move information into low-dimensional local subspaces.

    \item We show that local Fourier characters and local inverses explain a large fraction of the model's output logits.
\end{itemize}
The remainder of this paper is structured as follows: Section \ref{sec:related-work} outlines previous adjacent literature. Section \ref{sec:setup} explains the problem set-up, and formalizes locality by defining relevant algebraic concepts. Section \ref{sec:hypothesis} outlines our exact hypothesis and how we generalized the GCR algorithm to the non-invertible case. Section \ref{sec:interp} dissects and interprets the entire transformer model layer-by-layer, matching relevant architecture to corresponding hypotheses.

\section{Related Work}
\label{sec:related-work}
\textbf{Mechanistic Interpretability of Algorithmic Tasks.} Over the past several years, advances in mechanistic interpretability have successfully reverse-engineered how transformers perform simple arithmetic tasks \citep{power2022grokking,nanda2023progress,gromov2023grokking,zhong2023clock}. Early investigations of modular multiplication by \citet{gromov2023grokking} empirically showcased periodic structures via preactivation visualizations. Works have leveraged empirical techniques such as Principal Component Analysis \citep{jolliffe2002principal}, Discrete Fourier Transform, Canonical Correlation Analysis \citep{raghu2017svcca,morcos2018insights}, and a variety of ways to assess Fraction of Variance Explained in different metrics, to discover that models are able to learn rigid algorithms rather than rote memorization of data.

\textbf{Group-Theoretic Explanations of Networks.} Extending beyond simple arithmetic tasks, recent works have proposed generalized algorithms for abstract operations learned by networks \citep{chughtai2023toy,stander2024cosets}, oftentimes used as toy model explorations of broader hypotheses in mechanistic interpretability. However, these mechanistic analyses have predominantly focused on classical group theory where the underlying operations admit clean, invertible geometric structure. This leaves open how mechanistic explanations should extend to more complex, non-invertible algebraic structures, where operations can collapse information. Our work addresses this gap by studying modular multiplication on composite moduli over complete $n \times n$ multiplication tables, thereby introducing non-invertible elements.

\textbf{Transformer Circuitry and Routing.} A parallel line of work studies networks by decomposing them into smaller individual circuits for interpretability \citep{olah2020zoom,cammarata2020circuits,elhage2021framework}. These mechanisms aim to distinguish where information is learned, and track how that information moves throughout the network. In many algorithmic settings, routing is relatively simple: the network mainly needs to move information between positions while preserving a single global representation space. However, by introducing non-invertibility, we may require networks to form multiple algebraic strata which lead to more subtle and nuanced routing mechanisms.
\section{Setup and Background}
\label{sec:setup}
\subsection{Main Task}
\label{sec:task}
We train our model (see Section \ref{sec:model}) on a simple task: integer modular multiplication. Given two integers $a, b \in \Z_n$, we train a 1-layer transformer to predict the element $c = a \cdot b \pmod n$.

We note that previous work \citep{chughtai2023toy,stander2024cosets} has explored toy models' relative competence in learning basic finite group operations. Given this, we aim to explain the model's complete mastery over the \emph{complete} $n \times n$ multiplication table, thereby introducing non-invertible information-collapsing elements within our inputs.

To this end, we have explored the following list of moduli shown in Table \ref{tab:moduli}. For the remainder of this paper, we focus our discussion on $n = 165$. The plots and analyses for all aforementioned moduli can be found in the Appendix.

\begin{table}[t]
    \centering
    \footnotesize
    \setlength{\tabcolsep}{4pt}
    \caption{\textbf{List of moduli we have explored and their corresponding properties.} $\omega(n)$ counts the number of unique prime factors of $n$. ``Square-free'' refers to a positive integer that is not divisible by any perfect square other than 1; such integers comprise approximately $60.8\%$ of all positive integers (Remark~\ref{rem:square-free}).}
    \begin{tabular}{cccc}
        \toprule
        \textbf{Moduli ($n$)} & \textbf{Prime Decomp} & $\omega(n)$ & \textbf{Square-free} \\
        \midrule
        $113$ & $113$ & $1$ & True \\
        $143$ & $11 \times 13$ & $2$ & True \\
        $154$ & $2 \times 7 \times 11$ & $3$ & True \\
        $\mathbf{165^*}$ & $\mathbf{3 \times 5 \times 11}$ & $\mathbf{3}$ & \textbf{True}\\
        \bottomrule
    \end{tabular}
    \label{tab:moduli}
\end{table}
\subsection{Model}
\label{sec:model}

We study a single-layer decoder-only transformer \citep{vaswani2017attention} trained to perform modular multiplication over $\mathbb{Z}_n$, to predict $c = a \cdot b \mod n$.

Each example is a three-token sequence $(a, b, =)$, where the final token is a special token whose residual stream is used for prediction. Tokens are first one-hot encoded and embedded into dimension $d_{\text{model}} = 128$, which forms the initial residual stream. We then apply multi-head causal self attention to the embedded tokens and add them to the residual stream.

The residual stream is then updated via a feedforward MLP network, which projects up to $d_\text{hidden} = 512$, and applies a ReLU nonlinearity, followed by a down projection back to $d_{\text{model}}$. Finally, we have an unembedding layer $W_U$ that produces logits over $\R^n$. The model is trained with cross-entropy loss using the AdamW optimizer \citep{loshchilov2019decoupled} for 25,000 epochs. Unless otherwise specified, all results are reported on seed 1. Refer to Appendix \ref{app:model} for a detailed mathematical overview of the architecture.

\subsection{Mathematics Background}
\label{sec:monoids}

As we are analyzing the transformer's mastery over the entire $n \times n$ multiplication table, we will naturally come across non-invertible elements. As such, our analysis is done over a broader algebraic structure: monoids. We briefly introduce key definitions and results in this section, with relevant motivations also provided in Appendix \cref{app:monoids-primer}. Further details and proofs may be found in \citet{howie1995semigroup,steinberg2016representation}.

\begin{definition}
A \textbf{monoid} is a set equipped with an associative binary operation, denoted multiplicatively, together with an identity element 1. Invertible elements within the set are called \textbf{units}. Conversely, in integer monoids under multiplication, the non-units are \textbf{zero-divisors}. Multiplying by zero-divisors can collapse distinct residues to the same value.
\end{definition}
\textbf{Example. }The set of integers modulo $n$ forms a finite commutative monoid under multiplication, and we denote it as such with $(\Z_n, \cdot)$, or sometimes simply $\Z_n$. We refer to $n$ as the modulus of the multiplication operation. Crucially, this is not to be confused with the multiplicative \emph{group} of integers modulo $n$, denoted $(\Z / n\Z)^\times$. 

Take the modulus $n = 6$ for example:
\begin{align*}
    &(\Z / 6\Z)^\times = \{1, 5\}\\
    &(\Z_6, \cdot) = \{0, 1, 2, 3, 4, 5\}
\end{align*}
For readers less familiar with the mathematical background, we highly recommend skimming Appendix \ref{app:monoids-primer} (specifically examples \ref{ex:group}, \ref{ex:monoid}).

\begin{definition}
In the commutative monoid $\Z_n$, we define \textbf{$\J$-classes} by grouping elements with the same greatest common divisor with $n$: \[
  J_d = \{a \in \Z_n : \gcd(a, n) = d\}
\]
These classes form a disjoint partition of $\mathbb Z_n$.
\end{definition}
\begin{remark}
    For square-free \(n\), every $\mathcal J$-class is \textbf{regular}. The precise definition of a regular $\mathcal J$-class is provided in the Appendix, Definition \ref{def:regular-j}.
\end{remark}

\begin{theorem}
\label{thm:cong}
In the multiplication monoid $(\mathbb Z_n, \cdot)$, every regular $\mathcal J$-class $J_d$ forms a group with local identity $e_{J_d}$ called the \emph{idempotent}, and
\[
    J_d \cong \left(\mathbb Z\Big/\left(\frac{n}{d}\right)\mathbb Z\right)^\times.
\]
For square-free moduli such as $n = 165$, \textbf{all $\mathcal J$-classes are regular}. We give the full statement and proof in Appendix \cref{thm:jclass-group}.
\end{theorem}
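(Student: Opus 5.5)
The plan is to work directly with the explicit description $J_d=\{a\in\Z_n:\gcd(a,n)=d\}$ and reduce everything to the Chinese Remainder Theorem. The cleanest route is to exhibit the group structure through an explicit bijection. Set $m=n/d$. When $J_d$ is regular, the relevant case for square-free $n$, we have $\gcd(d,m)=1$, and I assume this is how regularity enters. The CRT gives a ring isomorphism $\Z_n\cong \Z_d\times\Z_m$ (with $\Z_d$, $\Z_m$ meaning integers modulo $d$ and $m$).

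The key steps are as follows.

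\begin{enumerate}
\item Identify the image of $J_d$ under the CRT map. An element $a$ satisfies $\gcd(a,n)=d$ exactly when $d\mid a$ and $\gcd(a/d,m)=1$. Since $\gcd(d,m)=1$, this is equivalent to $a\equiv 0 \pmod d$ together with $\gcd(a,m)=1$, i.e.\ $a \bmod m$ being a unit. So $J_d$ corresponds to $\{0\}\times(\Z/m\Z)^\times$.

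\item Read off the group structure. The set $\{0\}\times(\Z/m\Z)^\times$ is closed under componentwise multiplication, and it is a group isomorphic to $(\Z/m\Z)^\times$ via projection to the second factor. Its identity is $(0,1)$. Pulling this back, the local identity $e_{J_d}$ is the unique $e\in\Z_n$ with $e\equiv 0 \pmod d$ and $e\equiv 1 \pmod m$. This $e$ is idempotent, since $e^2\equiv e$ in both factors.

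\item Conclude. Local inverses exist because every $(0,u)$ has inverse $(0,u^{-1})$, and multiplication in $\Z_n$ restricted to $J_d$ matches the group operation. This gives $J_d\cong(\Z/(n/d)\Z)^\times$.

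\item Cover the square-free case. If $n$ is square-free and $d\mid n$, then $d$ and $n/d$ share no prime, so $\gcd(d,n/d)=1$ always holds. Hence the argument above applies to every $J_d$.
\end{enumerate}

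The main obstacle is reconciling the hypothesis ``regular'' with the coprimality $\gcd(d,n/d)=1$ used throughout. The appendix's Definition~\ref{def:regular-j} presumably defines a regular $\mathcal J$-class as one containing an idempotent, or one in which some element $a$ has an $x$ with $axa=a$. I would prove that, for $J_d$, regularity is equivalent to $\gcd(d,n/d)=1$:

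\begin{itemize}
\item If $\gcd(d,n/d)=1$, the CRT idempotent constructed in step 2 lies in $J_d$, since its gcd with $n$ is exactly $d$.
\item Conversely, suppose $e\in J_d$ is idempotent. Then $e(e-1)\equiv 0 \pmod n$. Because $\gcd(e,e-1)=1$, $n$ factors as a product of coprime parts dividing $e$ and $e-1$ respectively. The part dividing $e$ must be $d$, which forces $\gcd(d,n/d)=1$.
\end{itemize}

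Routing through idempotents this way avoids any general Green's-relations machinery. In a commutative monoid, a $\mathcal J$-class containing an idempotent is a maximal subgroup, but the explicit CRT description makes that general fact unnecessary.
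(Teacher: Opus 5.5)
Your proof is correct and is essentially the paper's argument: the map $\phi\colon a \mapsto a \bmod (n/d)$ in Theorem~\ref{thm:jclass-group} is exactly your projection of $J_d \leftrightarrow \{0\}\times(\Z/m\Z)^\times$ onto the second CRT factor, and your local idempotent is the same CRT element constructed in Proposition~\ref{prop:squarefree-regular}. Your version is more careful, because you prove that regularity of $J_d$ is equivalent to $\gcd(d,n/d)=1$; that coprimality is exactly what makes $a \bmod m$ a unit, whereas the paper's well-definedness check only shows that $a/d$ is a unit mod $m$ and leaves this step implicit.
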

\begin{remark} \label{rem:j_class_z_n}
The $\J$-class $J_1$ of any monoid $\Z_n$ is exactly the group of units $(\Z / n\Z)^\times$.
\end{remark}
\textbf{Example. }The monoid $(\Z_6, \cdot)$ can be partitioned into \[
    \Z_{6} = J_1 \sqcup J_2 \sqcup J_3 \sqcup J_6 = \{1, 5\} \sqcup \{2, 4\} \sqcup \{3\} \sqcup \{0\}
\] 
where, crucially, \emph{every $\J$-class is regular}. Notice $J_1 = (\Z / 6\Z)^\times$ as mentioned in Remark \ref{rem:j_class_z_n}.

See Appendix \ref{app:monoids-primer} for a breakdown of the $\J$-classes and related isomorphisms of $(\Z_{165}, \cdot)$, which we will be working over for our analysis later on.
\begin{definition}
For a finite group \(G\), a real \textbf{representation} is a homomorphism $\rho: G \to GL(\mathbb R^d)$. The \textbf{character} of \(\rho\) is the trace function $\chi_\rho(g)=\mathrm{tr}(\rho(g))$.

In the orthogonal/Fourier representations used by GCR-style mechanisms, the character acts as an identity detector: after normalization, \(\chi_\rho(g)\) is maximized when \(\rho(g)\) is the identity matrix. For faithful representations, this occurs precisely when \(g\) is the identity element of \(G\).
\end{definition}

\section{An Extension to the GCR Algorithm}
\label{sec:hypothesis}
\subsection{Review: The Group Composition (GCR) Algorithm}
\label{sec:review}
To formalize the network's behavior, we build upon the Group Composition via Representation (GCR) algorithm introduced by \citet{chughtai2023toy}. This section serves as a brief review, with more details found in the original work. For an arbitrary finite \emph{group} $G$ with a representation $\rho$, GCR posits that small networks learn group operations via the following procedure:

 (1) The embedding layer translates $a, b \overset{W_E}{\longmapsto} \rho(a), \rho(b)$.\\
 (2) The MLP performs group operations through matrix multiplication on the representations\[
 \rho(a), \rho(b) \overset{MLP}{\longmapsto} \rho(a)\rho(b) = \rho(ab)
 \]
 (3) The unembedding layer computes output logit $c$ by taking trace: $\text{Logit}(c) \propto \tr(\rho(ab)\rho(c^{-1})) = \chi_\rho(ab\inv c)$.

Crucially, GCR formalizes the \textbf{key frequencies} first observed by \citet{nanda2023progress} in modular addition. For cyclic groups, each Fourier frequency \(k\) gives a 2D irreducible representation, which appears architecturally as (superpositioned) pairs of entries in the residual-stream, storing
\[
    \cos\left(\frac{2\pi ka}{n}\right) \quad \text{and} \quad \sin\left(\frac{2\pi ka}{n}\right)
\]
for each token \(a\). Empirically, models use only a sparse set of such frequency planes, and compute the group operation by composing these planes through the network.

\subsection{The Failure of Global Invertibility}

The GCR mechanism depends crucially on invertibility. In the group setting, every candidate output $c \in G$ has an inverse $c^{-1}$, allowing the unembedding to assign a logit score for $c$ by testing whether $ab c^{-1} = 1$. 

However, this decoding rule no longer applies to the full multiplication monoid $\mathbb Z_n$. When $n$ is composite, many elements are non-invertible. For instance, $5 \in \mathbb Z_{165}$ has no multiplicative inverse, so there is no element $5^{-1}$ for the unembedding to represent. Thus, while GCR can directly explain computation inside the unit group $J_1 = (\mathbb Z/n\mathbb Z)^\times$, it does not by itself explain how the model scores outputs lying in the zero-divisor classes, implying the model cannot simply run the same fixed-dimensional inverse-lookup algorithm everywhere in $\mathbb Z_n$.

\textbf{Example.}
In \(\mathbb Z_{15}\), multiplication by the zero-divisor \(3\) collapses distinct units: $4\cdot 3 \equiv 14\cdot 3 \equiv 12 \pmod{15}$. Since \(12\) has no multiplicative inverse, the GCR decoding rule has no global inverse representation $\rho^{-1}(12)$ to use, thus breaking the standard GCR assumption.

\subsection{The Monoid Extension}
\label{sec:monoid-ext}
Despite the absence of global invertibility in finite monoids, we claim that the core mechanisms of GCR can generalize beyond the group setting. We formalize this through the \textbf{Monoid Extension}.

\textbf{Monoid partitioning with $\J$-classes.}
Because each element has a unique gcd with \(n\), this stratification partitions the monoid into disjoint \(\mathcal J\)-classes. This gives a natural target for a routing mechanism: instead of representing $\mathbb Z_n$ as one undifferentiated multiplication table, the model can first identify the $\J$-class of the product and then perform class-specific computation within the corresponding subspace.

As shown in Section~\ref{sec:monoids} and Appendix~\ref{app:monoids-primer}, $\mathbb Z_{165}$ partitions into eight disjoint $\mathcal J$-classes. Standard GCR explains computation within the unit group $J_1=(\mathbb Z/165\mathbb Z)^\times$, but not compositions involving zero-divisors. The Monoid Extension generalizes this picture by modeling computation both within regular $\mathcal J$-classes and across classes, where multiplication by a zero-divisor collapses the output into a non-invertible class.

\textbf{While global invertibility fails, local invertibility survives.} The key insight to resolving zero-divisor compositions lies in \emph{local invertibility},  a structural property guaranteed in the local groups associated with regular \(\mathcal J\)-classes. In our modular integer setting, this gives the group structure described by Theorem \ref{thm:cong}. There are two immediate consequences:
\begin{itemize}[itemsep=0pt, topsep=-2.5pt]
    \item The idempotent $e \in J_d$ maps to the identity $1 \in \left(\mathbb Z/\left(\frac{n}{d}\right)\mathbb Z\right)^\times$. We refer to $e$ as the ``local identity'' of $J_d$ and denote it $e_{J_d}$.
    \item Every element $c \in J_d$ has a ``local inverse'' $c^\sharp \in J_d$, where $c c^\sharp = e_{J_d} \pmod n$.
\end{itemize}
Given this result, we hypothesize that for each candidate $c \in J_d$, the unembedding layer $W_U$ learns (in weights) the representation $\rho_d(c^\sharp)$ of the local inverse $c^\sharp \in J_d$. Here, $\rho_d$ denotes the representation with respect to the group $J_d$. 

\textbf{Routing before scoring.}
The local inverse test only makes sense after the model has identified the $\mathcal J$-class of the product. We therefore hypothesize that the computation decomposes into two phases. First, the model routes the represented product $ab$ into the subspace associated with its class $J_d$. Second, conditioned on this class, the unembedding scores candidates $c \in J_d$ by testing whether $ab c^\sharp = e_{J_d}$.

Assuming the network first executes this routing phase, the monoid extension is a \emph{local generalization} of the procedure outlined in standard GCR: $e_{J_1}$ is exactly $1 \in (\Z / n\Z)^\times$, and each $c^\sharp$ corresponds to $\inv c$ whenever $c \in J_1$. 

We claim then the unembedding layer's logit computation can be universally rephrased as:
\begin{equation}
\label{eq:logit-formula}
    \text{Logit}(c) \propto \tr(\rho_d(ab)\rho_d(c^\sharp)) = \chi_{\rho_d}(abc^\sharp)
\end{equation}
which is exactly maximized for the correct token $c$ where $abc^\sharp = e$ for the corresponding local identity $e \in \Z_n$ (Proposition~\ref{prop:logit-max}).

\textbf{Example.} Recall that \(4\) is a unit and \(3\) is a zero-divisor in \(\mathbb Z_{15}\). Their product lands in the zero-divisor class $4 \cdot 3 \equiv 12 \in J_3$. By Theorem~\ref{thm:cong}, this class carries a local group structure (since $J_3 \cong (\Z / 5 \Z)^\times$) with generator \(3\) and local identity \(e_{J_3}=6\).

To compute the logit for the correct output candidate $c = 12$, the unembedding matrix $W_U$ retrieves its local inverse $12^\sharp = 3$, since $12 \cdot 3 \equiv 6 \pmod{15}$. The network evaluates the character:
\[
\text{Logit}(12) \propto \tr(\rho_3(4 \cdot 3)\rho_3(3)) = \chi_{\rho_3}(4 \cdot 3 \cdot 3) = \chi_{\rho_3}(6)
\]
Because the product simplifies exactly to the local identity, $\chi_{\rho_3}(abc^\sharp)$ is maximized. This gives the correct candidate $c=12$ the maximal local identity score, demonstrating how the Monoid Extension mathematically resolves a composition that would otherwise trigger structural collapse under the standard GCR assumption.

\textbf{Takeaway.} The network handles zero-divisor multiplication by projecting the product into the appropriate subspace, where it calculates and applies a local inverse which maximizes the logits for the correct output.

\section{Interpreting Monoid Compositions}
\label{sec:interp}
We adopt a reverse-engineering methodology similar to those outlined by
\citet{chughtai2023toy} and \citet{nanda2023progress}. Specifically, we employ several classic mechanistic interpretability techniques to provide evidence for a local extension of GCR-style mechanisms to a non-group setting, as described in Section \ref{sec:monoid-ext}. As previously mentioned, our analytical work will be done on the transformer architecture (see Section \ref{sec:model}) trained on the finite commutative monoid $(\Z_{165}, \cdot)$. 

There are three primary hypotheses as presented in the monoid extension, and we seek to address each hypothesis with an analysis of the corresponding layer of the network. First, the monoid extension predicts that, prior to mapping inputs to their representations as described in standard GCR, the \textbf{embedding layer} first partitions the set of all inputs into disjoint $\J$-classes, where elements of the same $\J$-class would occupy the same corresponding subspace of the total 128-dimensional space of the model. 

The most pivotal step of the monoid extension is the routing mechanism, where inputs are projected to the appropriate subspace corresponding to the $\J$-class of the correct output. We explore the broader subspace projection capabilities of the network's \textbf{multi-head attention} while ensuring the nuanced representations of the inputs are maintained, later to be used for logit computations. 

Finally, the monoid extension postulates that \textbf{logits} are computed using the local inverses existing within the correct output's corresponding subspace. To verify this theoretical alignment, we construct synthetic logits using the explicit group characters $\chi_\rho(abc^\sharp)$ and demonstrate a direct geometric correspondence with the model's empirical outputs.
\subsection{Embeddings}
\label{sec:embed}
We verify the $\J$-class partitioning claim by first directly looking at the model embedding weights. We additionally seek to permute the weights of elements by their $\J$-classes, such that the internal Fourier geometry representations (Section \ref{sec:review}) can be exposed.

\textbf{Fourier Features Analysis.} We begin by partitioning the embedding matrix $W_E$ into sub-matrices $W_E^{J_d}$, each corresponding to a $\J$-class of $\Z_{165}$. Because each class forms a finite abelian group (Theorem \ref{thm:cong}), we can cleanly permute the rows of each sub-matrix lexicographically by their minimal generating set to yield $\widetilde{W}_E^{J_d}$.

\begin{figure*}[t!]
    \centering
    \includegraphics[width=0.9\linewidth]{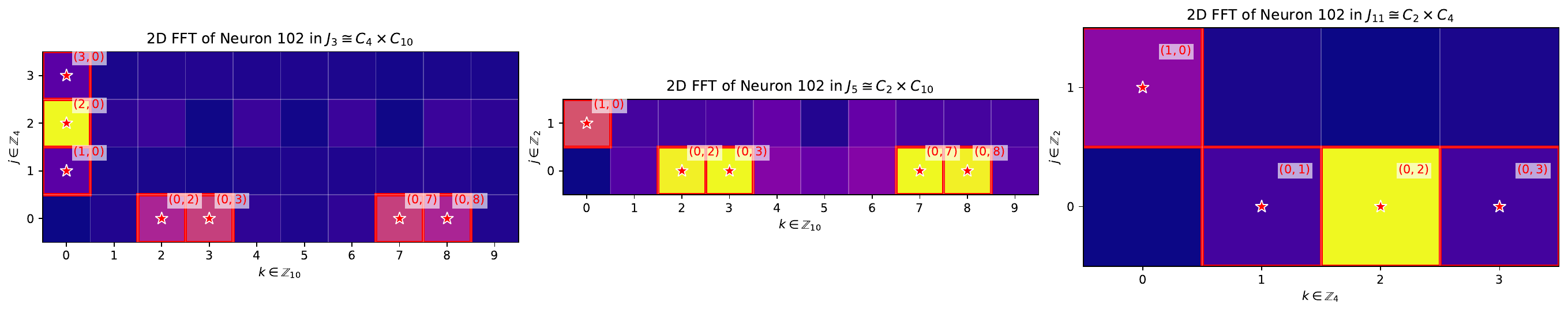} 
    \caption{\textbf{Compositional factorization of Fourier features within Neuron 102.} A 2D DFT of the permuted embeddings isolates the neuron's active frequencies (red stars). The network constructs non-cyclic representations by reusing frequencies from shared ``atomic'' cyclic factors. For instance, $J_3$ and $J_5$ utilize the exact same frequencies along their shared $k \in \Z_{10}$ axis. This provides visual evidence that the learned representations reflect the monoid’s direct product structure. Color intensity denotes \emph{relative} amplitude.}
    \label{fig:hierarchy}
\end{figure*}

\begin{table}[t]
    \centering
    \caption{\textbf{Fraction of variance explained (FVE) by key frequencies across non-trivial $\mathcal{J}$-classes.} For each $\mathcal{J}$-class, a small subset of the total available frequencies (Key Freq) accounts for the vast majority of the spectral energy in the permuted embedding space. Note that $J_{33}$, $J_{55}$, and $J_{165}$ are excluded; as these groups become sufficiently small and trivial, all frequencies become similarly crucial.}
    \label{tab:key-freqs}
    \small
    \begin{tabular}{cccc}
        \toprule
        \textsc{$\J$-class} & \textsc{Key Freq} & \textsc{Total Freq} & FVE \\
        \midrule
        $J_1$ & $8$ & $79$ & $97.0\%$\\
        $J_3$ & $7$ & $39$ & $96.8\%$\\
        $J_5$ & $5$ & $19$ & $95.0\%$\\
        $J_{11}$ & $4$ & $7$ & $99.1\%$\\
        $J_{15}$ & $4$ & $9$ & $94.3\%$\\
        \bottomrule
    \end{tabular}
\end{table}

We then extract element representations by applying a discrete Fourier transform (DFT) to each $\widetilde{W}_E^{J_d}$. As seen in Table \ref{tab:key-freqs}, a smaller fraction of the available frequencies account for over 94\% of the spectral energy variance across all non-trivial $\J$-classes.

This is consistent with prior observations that learned arithmetic representations often concentrate on sparse Fourier modes (\citet{nanda2023progress,gromov2023grokking,zhong2023clock}; Section \ref{sec:review}), and suggests that analogous structure appears in the non-cyclic local groups arising from $\J$-classes.

More importantly, this supports our hypothesis that representations of \emph{all} elements, including zero-divisors, \textbf{are subordinately structured by the $\J$-class partitions} applied by the network's embeddings.

Furthermore, we observe a clear compositional structure within these representations. Because non-cyclic $\J$-classes are isomorphic to direct products of smaller, cyclic $\J$-classes (which we call the ``atomic factors'' of the monoid), their corresponding Fourier features directly reflect this factorization. Rather than learning entirely unique representations for composite classes like $J_3$ or $J_5$, it appears the network chooses to build their representations by systematically reusing the key frequencies of their underlying atomic components. We hereon refer to this phenomenon along our interpretation as the \emph{atomic factorization hypothesis}.

As visually supported by Figure \ref{fig:hierarchy}, applying a 2D DFT to individual neurons reveals that composite classes (e.g., $J_3$ and $J_5$) explicitly construct their representations by reusing the active frequency coordinates of their shared atomic factors. For more details, see Appendix \ref{app:embed-hierarchy}.

\textbf{PCA on Embedding.} We provide further evidence that $\J$-classes are isolated into specific subspaces of the available $d_{\text{model}} = 128$ dimensions by applying principal component analysis (PCA; \citealt{jolliffe2002principal}) to each $W_E^{J_d}$. We found that fewer than 20\% of the principal components are needed to explain over 95\% of the variance in the embedding weights across all $\J$-classes.

Crucially, to ensure the model is not simply learning a generally low-dimensional space due to the small size of the $\J$-classes themselves, we compared the number of components needed to explain 95\% of the variance for each $\J$-class against random subsets of the vocabulary of the same size (Figure \ref{fig:pca-dim}). The results show that the network compresses elements of the same $\J$-class together much more tightly than expected by random chance.

\begin{figure}[t!]
    \centering
    \includegraphics[width=0.8\linewidth]{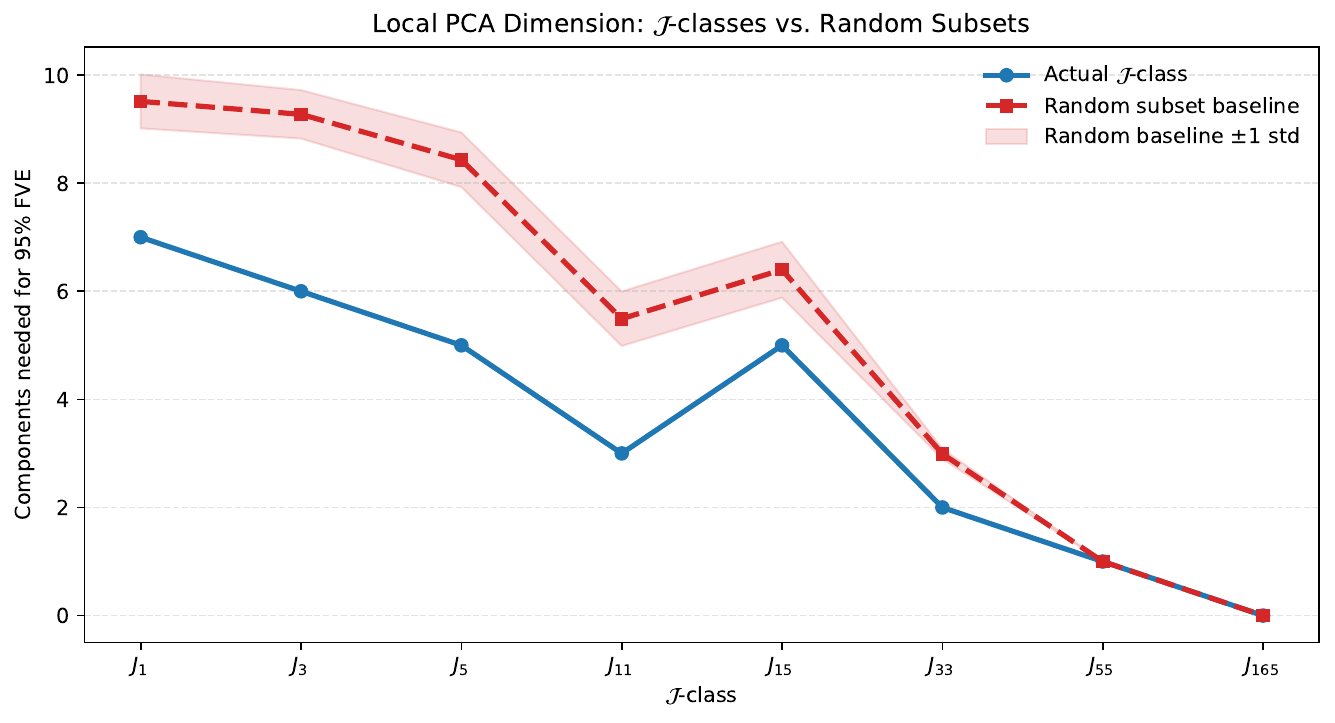} 
    \caption{\textbf{Principal components required to explain 95\% of embedding variance.} $\J$-classes (solid) require substantially fewer dimensions than equivalently sized random subsets of the vocabulary (dashed). This dimensional collapse provides evidence that the network explicitly compresses elements of the same class into tight, localized subspaces.}
    \label{fig:pca-dim}
\end{figure}
Comprehensive 2D and 3D PCA projections of the embeddings, which synthesizes the highly structured Fourier geometry with the hierarchical subspace of distinct $\J$-classes, are provided in Appendix \ref{app:embed-visual}.

\subsection{Multi-Head Attention}
\label{sec:attn}
\begin{figure*}[ht]
    \centering
    \includegraphics[width=0.95\linewidth]{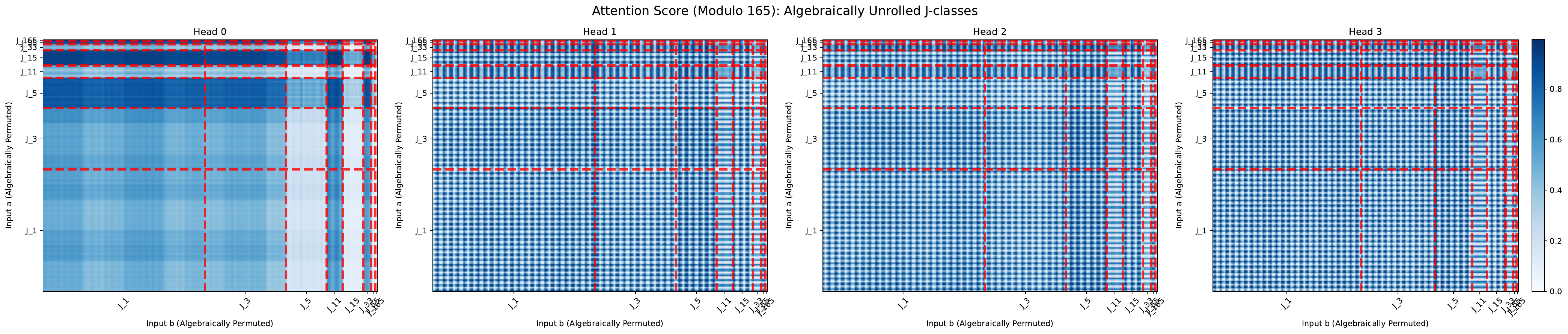} 
    \caption{\textbf{Attention scores $A^{(h)}$ computed by the QK circuit for the destination token `='.} Head 0 (left) exhibits a block-diagonal structure aligned with $\mathcal{J}$-class boundaries(red dashed lines), functioning as a macroscopic router. In contrast, Heads 1-3 display high-frequency checkerboard patterns consistent with finer within-class phase information.}
    \label{fig:attn-scores}
\end{figure*}
The monoid extension hypothesizes that the network executes a routing phase to project elements into the appropriate output subspace. We now present evidence that the attention heads participate in this routing phase.

We reverse-engineer our one-layer multi-head attention using interpretability
methods as outlined in \citet{elhage2021framework}. For a given attention head $h$, the output written to the residual stream is a weight sum 
\begin{equation}
\text{head}^{(h)} = \sum_{m \in \{a, b\}} A^{(h)}_{=, m} \rho(m) W_{OV}^{(h)}\label{eq:attn-head}
\end{equation}
Here $W_{OV}$ represents the \textbf{OV circuit}, dictating \emph{how} the operand representations are projected onto the output subspace. Further, the attention score, defined \[
 A^{(h)}_{=, m} = \text{softmax}\left( \frac{\rho(=)W_{QK}^{(h)}\rho(m)^T}{d_k}\right)
\]
represents the output of the \textbf{QK circuit} $W_{QK}$, dictating \emph{how much} information is moved from each operand. We now procedurally explore each circuit.

\textbf{Factorized Routing in QK Circuitry.} To investigate the QK circuit, we look at the softmax attention score for head 0 from the token `=' to `$a$', as a function of inputs $a, b$. This is similar to techniques applied in \citet{nanda2023progress}.

However, rather than looking at these matrices naively, we apply the same partitioning and permutation as described in Section \ref{sec:embed} to all pairs of inputs $a, b$, segmenting the attention scores by the $\J$-class partitions, as seen in Figure \ref{fig:attn-scores}. 

Immediately, we notice that Head 0 exhibits a block structure consistent with coarse J-class routing (red dashed lines), while Heads 1-3 show higher-frequency patterns consistent with finer within-class information.

This separation of roles is important because the QK circuit controls only the \emph{amount} of information transferred from each operand, not the content of the information itself. Thus, the block structure in Head 0 suggests that the model first identifies the broad algebraic structure relevant to the output, while the periodic structure in the remaining heads suggests that the Fourier-phase representation information is still available to distinguish elements within the same class. 

Consistent with prior literature, this precise periodicity suggests that the Fourier features extracted by the embedding layer are not localized artifacts of the lookup table. Rather, they appear to be preserved into the attention computation, where they influence the movement of operand information into the final-token residual stream.

\begin{figure}[t!]
    \centering
    \includegraphics[width=\linewidth]{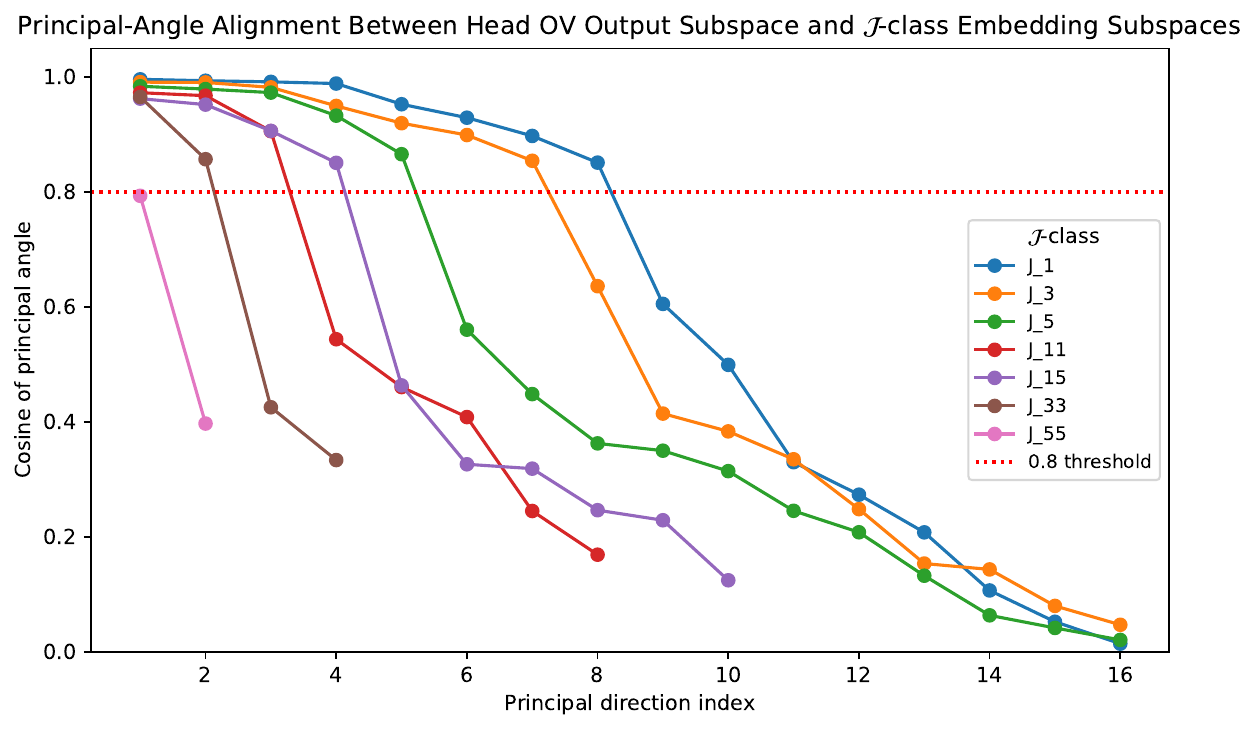} 
    \caption{\textbf{Principal angle alignment between the OV circuit and $\mathcal{J}$-class embeddings.} Alignment remains near perfect before a sharp collapse. Crucially, the rank of this drop-off approximates the number of key Fourier frequencies (e.g., $J_1$ drops after 8 dimensions, $J_3$ after 7). While trivially small classes (e.g., $J_{55}$) lack the group size to resolve stable PCA subspaces, the overall trend provides evidence that $W_{OV}^{(0)}$ acts as a precise, low-rank projection filter isolating the target atomic frequencies.}
    \label{fig:cosine-align}
\end{figure}

\textbf{Idempotent Projections in OV Circuitry.} While Figure \ref{fig:attn-scores} suggests that the QK circuit, and specifically Head 0, extracts identifying information of the output's broader algebraic structure, Equation \eqref{eq:attn-head} implies $W_{OV}^{(0)}$ should have projection-like behavior in order to correctly map the operands into the target subspace of this structure.

To verify this geometric behavior, we first apply PCA on $W_{OV}^{(0)}$. Through this, we found that of the 32 singular values available, only 4 were needed to explain 95\% of the variance in weights of the circuit. Additionally, the top 8 singular values altogether explains more than 99.9\% of the total variance. This result clearly demonstrates the circuit's projection down to subspaces of similar dimensions as the $\J$-class embeddings.

From here, in order to ensure the resulting subspaces are aligning exactly with those of each $\J$-class, we compute the principal-angle alignments \citep{bjork1973numerical} between the OV circuit and the corresponding $\J$-class embeddings. As shown in Figure \ref{fig:cosine-align}, the alignment remains relatively high (cosine $\geq 0.8$) for a distinct number of principal directions before experiencing a sharp drop-off. Crucially, the rank at which this collapse occurs for each $\J$-class closely corresponds to the number of key frequencies (irreducible representations) established in Table \ref{tab:key-freqs}. This provides strong evidence that \textbf{$W_{OV}^{(0)}$ acts as a targeted low-rank projection matrix}, either preserving or restricting operand representations to ensure they reside strictly within the output subspace dictated by their key frequencies.

Furthermore, while we previously established that distinct $\J$-class embeddings occupy localized subspaces within the 128-dimensional model, Figure \ref{fig:cosine-align} reveals a more detailed context. Since the embedding subspaces align closely with the $W_{OV}^{(0)}$ projection space regardless of class size, the subspaces of smaller $\J$-classes must geometrically overlap with those of larger $\J$-classes. This structural overlap is highly consistent with the ``atomic factorization'' hypothesis introduced in Section \ref{sec:embed}.

\subsection{MLP Neurons}
\label{sec:mlp}

While attention provides routing and low-rank subspace projection, the model still requires a nonlinear component to compose the routed operand representations into a representation of the product. Both GCR and the Monoid Extension predict that this compositional step occurs in the MLP.

\textbf{Clustering persists in neuron activations.} To probe this, we visualize hidden-layer MLP activations over input pairs $(a,b)$. We find that activations remain organized by $\mathcal J$-class and exhibit frequency structure matching the key modes identified in the embeddings and attention. This suggests that the MLP contributes to the nonlinear composition step while preserving the same stratified Fourier structure used throughout the network. These plots, along with further empirical analysis, can be found in \ref{app:mlp_heatmaps}. A complete neuron-level reconstruction of the MLP composition step is left to future work.

\subsection{Unembedding and Logit Computation}
To validate the monoid extension hypothesis, we directly compare the model's empirical output logits against theoretical predictions derived from Equation \eqref{eq:logit-formula}.

\textbf{Character Fitting.} We define an input \emph{prompt} as a tuple $(a, b)$ where $a \in J_{d_1}$ and $b \in J_{d_2}$. A prompt evaluates to a target class $J_d$ if and only if $\text{lcm}(d_1, d_2) = d$ (i.e., $ab \in J_d$).

Because our vocabulary is finite ($n=165$), we exhaustively evaluate all valid prompts for each target class. For every valid prompt, we construct a synthetic logit vector to compare the actual model output against. Specifically, this synthetic logit vector has dimension $|J_d|$, where the entry for the candidate token $c$ is computed explicitly via the local group character, $\chi_\rho(abc^\sharp)$.  Crucially, these characters are evaluated strictly with respect to the irreducible representations corresponding to the key frequencies $k$ isolated in our embedding analysis (Section \ref{sec:embed}). 

We perform a forward pass for all valid prompts to extract the model's empirical logits (the direct output of the unembedding layer $W_U$), then plot these empirical logits against our synthetic theoretical logits across all valid prompts. We then fit a linear regression and compute the coefficient of determination ($R^2$). This metric quantifies the extent to which the network's final output variance is explained by the theoretical character trace computation.

As seen in Table~\ref{tab:character-fit}, the theoretical logits computed explicitly from Equation~\eqref{eq:logit-formula} account for a large fraction of the output variance. This provides quantitative evidence that Fourier-trace features capture a substantial component of the final logit computation, consistent with the Monoid Extension hypothesis. At the same time, the lower fits for larger $\J$-classes such as $J_1$ and $J_3$ suggest that additional mechanisms: including nonlinear scaling, harmonic distortion, or sparse class-specific corrections may also contribute to the model's logits.

\begin{table}[t]
    \centering
    \caption{\textbf{Fraction of centered logit variance explained by local character features.} For each target $\J$-class, we fit a linear model using only the theoretically specified character features $\chi_\rho(abc^\sharp)$ at the key frequencies identified in the embedding spectrum. These features explain a large fraction of the model’s empirical logit variance, supporting a local character-based readout.}
    \label{tab:character-fit}
    \small
    \begin{tabular}{ccc}
        \toprule
        \textsc{$\J$-class} & \textsc{Input Pairs} & \textsc{FVE} $(R^2)$ \\
        \midrule
        $J_{55}$ & $J_5 \times J_{11}$ & $99.4\%$\\
        $J_{33}$ & $J_3 \times J_{11}$ & $80.2\%$\\
        $J_{15}$ & $J_3 \times J_5$ & $95.2\%$\\
        $J_{11}$ & $J_1 \times J_{11}$ & $86.8\%$\\
        $J_{5}$ & $J_1 \times J_5$ & $86.9\%$\\
        $J_{3}$ & $J_1 \times J_3$ & $76.5\%$\\
        $J_{1}$ & $J_1 \times J_1$ & $71.2\%$\\
        \bottomrule
    \end{tabular}
\end{table}

\section{Conclusion / Future Work}
In this work, we take a step toward understanding how neural networks implement algebraic computation beyond groups. We use mechanistic interpretability to show that previous representation-theoretic algorithms for interpreting how small networks perform group operations \citep{chughtai2023toy,stander2024cosets} can be extended beyond globally invertible settings. We refer to this localized generalization as the \emph{Monoid Extension}. Specifically, we study modular integer multiplication, where zero divisors prevent a single global group-based explanation. In the square-free case, we find that the learned computation organizes around regular $\J$-classes, within which local inverses and local Fourier characters recover a group-like explanation of the model's logits.

Moreover, rather than relying on shallow memorization or surface-level heuristics, the network sub-divides the task into phases executed by different parts:

\begin{itemize}[topsep=0pt, itemsep=2pt]
    \item The \emph{embedding layer} organizes inputs according to algebraically meaningful structure. In particular, embeddings cluster into subspaces aligned with the $\J$-class decomposition of the multiplication monoid.
    \item The \emph{multi-head attention layer} appears to route operand information into the residual stream in a class-sensitive way, with different heads separating coarse $\J$-class information from finer within-class phase information.
    \item The \emph{MLP} performs the nonlinear composition step, combining the routed operand representations into a representation of the product.
    \item The \emph{unembedding layer} converts this product representation into logits by implementing a local character-style readout, using local inverses $c^\sharp$ within each regular $\J$-class in place of the global inverses used in group-based GCR.
\end{itemize}
However, it is important to note some limitations of our work. Crucially, at this time, \textbf{our evidence is only correlational}, meaning causation cannot be directly inferred. In future work, we plan to apply causal intervention techniques, such as ablations and activation patching, to provide stronger evidence for these claims.

Below we discuss some additional areas of future work.
\label{sec:future-work}

\textbf{Investigation on non-square-free moduli.} Our monoid extension framework analyzes square-free moduli (which have a natural density of $6/\pi^2 \approx 0.608$; Remark~\ref{rem:square-free}), where $\J$-classes are guaranteed to be regular and therefore locally invertible. However, non-square-free moduli introduce non-regular $\J$-classes that contain \emph{nilpotent} elements (elements where $x^n = 0$), which breaks the reduction to local group characters \citep{howie1995semigroup,steinberg2016representation}. Future work must investigate how networks handle these algebraic ``one-way'' collapses, leading to a complete analysis of all integer moduli.

\textbf{Progress measures of ``atomic'' structure build-up.} Consistent with similar-sized networks trained on modular arithmetic, our model experiences grokking \citep{power2022grokking,nanda2023progress}. Building on our ``atomic factoring'' hypothesis from Section \ref{sec:embed}, we suspect the network's stratification of inputs is learned directly during this grokking phase. A comprehensive understanding of these dynamics requires progress measures that track whether larger $\J$-classes are sequentially composed from previously learned smaller counterparts.

\textbf{Larger models and realistic tasks.} In this work, we studied the behavior of small models on non-invertible compositions. However, we did not explore whether our results apply to larger, more practical models. Core real-world LLM operations, such as token sequence concatenation, are mathematically analogous to non-invertible monoid compositions. Future work should investigate whether production-scale models employ similar structural stratification to hierarchically route and process complex textual inputs.

\section*{Acknowledgements}
We would like to thank Jarod Alper, Vasily Ilin, Michael Ruofan Zeng and the entirety of the Math AI Lab at the University of Washington for their belief, guidance, resources, and most importantly, for bringing our team together. 

We are also grateful for Claire Xu, Ivonne Zhang, and Nina Tharamal for their contributions to the project in its early stages. In addition, we are grateful for Jarod Alper for providing much needed belief and valuable feedback on our manuscript.

Z.A. Chen would like to thank his family, David, Sophie, and Jack for their unwavering support, without whom this journey would not have been possible.

Of course, none of this work would have been possible without the groundwork of modular addition set by Neel Nanda, and the generalization to all groups via GCR by Bilal Chughtai.

We trained our models using PyTorch \cite{paszke2019pytorch} and performed our interpretability analysis using NumPy \cite{harris2020numpy}, scikit-learn \cite{pedregosa2011scikit}, and SymPy \cite{meurer2017sympy}. Our figures were made using Matplotlib \cite{hunter2007matplotlib} and Plotly \cite{plotly2015collaborative}. Computational resources were provided by Google Colaboratory.

\section*{Impact Statement}
This paper presents work whose goal is to advance the field of Machine Learning.
More specifically, it studies mechanistic interpretability methods for
understanding learned algorithmic structure in neural networks. We do not
anticipate direct societal risks from the specific toy models studied here,
beyond the general consequences of progress in interpretability and machine
learning research.

\bibliography{main}

@article{power2022grokking,
  title={Grokking: Generalization Beyond Overfitting on Small Algorithmic Datasets},
  author={Power, Alethea and Burda, Yuri and Edwards, Harri and Babuschkin, Igor and Misra, Vedant},
  journal={arXiv preprint arXiv:2201.02177},
  year={2022},
  url={https://arxiv.org/abs/2201.02177}
}

@inproceedings{nanda2023progress,
  title={Progress Measures for Grokking via Mechanistic Interpretability},
  author={Nanda, Neel and Chan, Lawrence and Lieberum, Tom and Smith, Jess and Steinhardt, Jacob},
  booktitle={International Conference on Learning Representations},
  year={2023},
  url={https://openreview.net/forum?id=9XFSbDPmdW}
}

@article{gromov2023grokking,
  title={Grokking Modular Arithmetic},
  author={Gromov, Andrey},
  journal={arXiv preprint arXiv:2301.02679},
  year={2023},
  url={https://arxiv.org/abs/2301.02679}
}

@inproceedings{zhong2023clock,
  title={The Clock and the Pizza: Two Stories in Mechanistic Explanation of Neural Networks},
  author={Zhong, Ziqian and Liu, Ziming and Tegmark, Max and Andreas, Jacob},
  booktitle={Advances in Neural Information Processing Systems},
  volume={36},
  pages={27223--27250},
  year={2023},
  url={https://openreview.net/forum?id=S5wmbQc1We}
}

@inproceedings{chughtai2023toy,
  title={A Toy Model of Universality: Reverse Engineering How Networks Learn Group Operations},
  author={Chughtai, Bilal and Chan, Lawrence and Nanda, Neel},
  booktitle={Proceedings of the 40th International Conference on Machine Learning},
  series={Proceedings of Machine Learning Research},
  volume={202},
  pages={6243--6267},
  year={2023},
  publisher={PMLR},
  url={https://proceedings.mlr.press/v202/chughtai23a.html}
}

@inproceedings{stander2024cosets,
  title={Grokking Group Multiplication with Cosets},
  author={Stander, Dashiell and Yu, Qinan and Fan, Honglu and Biderman, Stella},
  booktitle={Proceedings of the 41st International Conference on Machine Learning},
  series={Proceedings of Machine Learning Research},
  volume={235},
  pages={46441--46467},
  year={2024},
  publisher={PMLR},
  url={https://proceedings.mlr.press/v235/stander24a.html}
}

@article{olah2020zoom,
  title={Zoom In: An Introduction to Circuits},
  author={Olah, Chris and Cammarata, Nick and Schubert, Ludwig and Goh, Gabriel and Petrov, Michael and Carter, Shan},
  journal={Distill},
  year={2020},
  doi={10.23915/distill.00024.001},
  url={https://distill.pub/2020/circuits/zoom-in}
}

@article{cammarata2020circuits,
  title={Thread: Circuits},
  author={Cammarata, Nick and Carter, Shan and Goh, Gabriel and Olah, Chris and Petrov, Michael and Schubert, Ludwig},
  journal={Distill},
  year={2020},
  url={https://distill.pub/2020/circuits}
}

@article{elhage2021framework,
  title={A Mathematical Framework for Transformer Circuits},
  author={Elhage, Nelson and Nanda, Neel and Olsson, Catherine and Henighan, Tom and Joseph, Nicholas and Mann, Ben and Askell, Amanda and Bai, Yuntao and Chen, Anna and Conerly, Tom and DasSarma, Nova and Drain, Dawn and Ganguli, Deep and Hatfield-Dodds, Zac and Hernandez, Danny and Jones, Andy and Kernion, Jackson and Lovitt, Liane and Ndousse, Kamal and Amodei, Dario and Brown, Tom and Clark, Jack and Kaplan, Jared and McCandlish, Sam and Olah, Chris},
  journal={Transformer Circuits Thread},
  year={2021},
  url={https://transformer-circuits.pub/2021/framework/index.html}
}

@inproceedings{raghu2017svcca,
  title={{SVCCA}: Singular Vector Canonical Correlation Analysis for Deep Learning Dynamics and Interpretability},
  author={Raghu, Maithra and Gilmer, Justin and Yosinski, Jason and Sohl-Dickstein, Jascha},
  booktitle={Advances in Neural Information Processing Systems},
  volume={30},
  year={2017},
  url={https://arxiv.org/abs/1706.05806}
}

@inproceedings{morcos2018insights,
  title={Insights on Representational Similarity in Neural Networks with Canonical Correlation},
  author={Morcos, Ari S. and Raghu, Maithra and Bengio, Samy},
  booktitle={Advances in Neural Information Processing Systems},
  volume={31},
  year={2018},
  url={https://arxiv.org/abs/1806.05759}
}

@book{jolliffe2002principal,
  title={Principal Component Analysis},
  author={Jolliffe, Ian T.},
  edition={2},
  series={Springer Series in Statistics},
  publisher={Springer},
  year={2002},
  doi={10.1007/b98835}
}

@inproceedings{vaswani2017attention,
  title={Attention Is All You Need},
  author={Vaswani, Ashish and Shazeer, Noam and Parmar, Niki and Uszkoreit, Jakob and Jones, Llion and Gomez, Aidan N. and Kaiser, Lukasz and Polosukhin, Illia},
  booktitle={Advances in Neural Information Processing Systems},
  volume={30},
  year={2017},
  url={https://arxiv.org/abs/1706.03762}
}

@inproceedings{loshchilov2019decoupled,
  title={Decoupled Weight Decay Regularization},
  author={Loshchilov, Ilya and Hutter, Frank},
  booktitle={International Conference on Learning Representations},
  year={2019},
  url={https://openreview.net/forum?id=Bkg6RiCqY7}
}

@book{howie1995semigroup,
  title={Fundamentals of Semigroup Theory},
  author={Howie, John M.},
  series={London Mathematical Society Monographs},
  publisher={Oxford University Press},
  year={1995}
}

@book{steinberg2016representation,
  title={Representation Theory of Finite Monoids},
  author={Steinberg, Benjamin},
  series={Universitext},
  publisher={Springer},
  year={2016},
  doi={10.1007/978-3-319-43932-7}
}

@article{bjork1973numerical,
  title={Numerical Methods for Computing Angles Between Linear Subspaces},
  author={Bj{\"o}rck, {\AA}ke and Golub, Gene H.},
  journal={Mathematics of Computation},
  volume={27},
  number={123},
  pages={579--594},
  year={1973},
  doi={10.1090/S0025-5718-1973-0348991-3}
}

@book{apostol1976analytic,
  title={Introduction to Analytic Number Theory},
  author={Apostol, Tom M.},
  publisher={Springer},
  year={1976}
}

@inproceedings{paszke2019pytorch,
  title={PyTorch: An Imperative Style, High-Performance Deep Learning Library},
  author={Paszke, Adam and others},
  booktitle={Advances in Neural Information Processing Systems 32},
  year={2019}
}

@article{harris2020numpy,
  title={Array programming with NumPy},
  author={Harris, Charles R. and others},
  journal={Nature},
  volume={585},
  number={7825},
  pages={357--362},
  year={2020},
  publisher={Nature Publishing Group}
}

@article{pedregosa2011scikit,
  title={Scikit-learn: Machine Learning in Python},
  author={Pedregosa, Fabian and others},
  journal={Journal of Machine Learning Research},
  volume={12},
  pages={2825--2830},
  year={2011}
}

@article{meurer2017sympy,
  title={SymPy: symbolic computing in Python},
  author={Meurer, Aaron and others},
  journal={PeerJ Computer Science},
  volume={3},
  pages={e103},
  year={2017},
  publisher={PeerJ Inc.}
}

@article{hunter2007matplotlib,
  title={Matplotlib: A 2D graphics environment},
  author={Hunter, John D.},
  journal={Computing in Science \& Engineering},
  volume={9},
  number={3},
  pages={90--95},
  year={2007},
  publisher={IEEE}
}

@misc{plotly2015collaborative,
  title={Collaborative data science},
  author={{Plotly Technologies Inc.}},
  year={2015},
  address={Montreal, QC},
  url={https://plot.ly}
}
\bibliographystyle{icml2026}

\newpage
\appendix
\onecolumn
\section{Supplemental Material}
Interactive versions of figures, as well as the code to reproduce the main body results, are available at \texttt{https://github.com/uw-math-ai/interpreting-monoids}

\section{Model}
\label{app:model}

Our model is trained on 30\% of all $n^2$ entries in the multiplication table. We use full batch gradient descent. We utilize the AdamW optimizer, with weight decay $1$, learning rate $10^{-3}$, and betas $\beta_1 = 0.9, \beta_2 = 0.98$. We train each model for 25,000 epochs. We also withhold another 30\% of the $n^2$ for validation. We use this to compute our validation loss every 100 epochs. We finally compute our model's final accuracy on all $n^2$ inputs.

\begin{figure}[!h]
    \centering
    \includegraphics[width=0.9\linewidth]{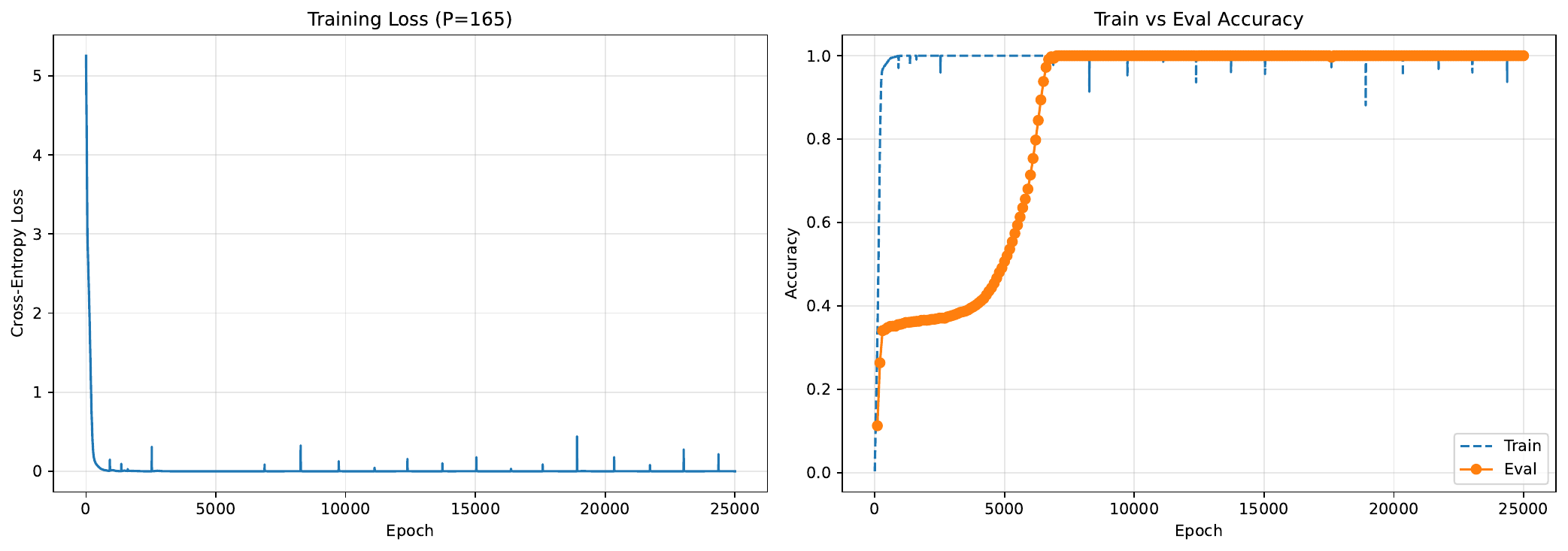} 
    \caption{\textbf{Training plots for the single-layer decoder-only transformer on n=165.} Left: training loss across epochs, which decreases sharply as the model fits the training data. Right: training and validation accuracy across epochs, with the grokking gap before validation accuracy rises to match training accuracy.}
    \label{fig:TranformerTrainingPlots}
\end{figure}

\subsection{Transformer}
We use a decoder-only transformer model identical to the setup used for the experiments in \cite{nanda2023progress}

\begin{figure}[b]
    \centering
    \includegraphics[width=0.9\linewidth]{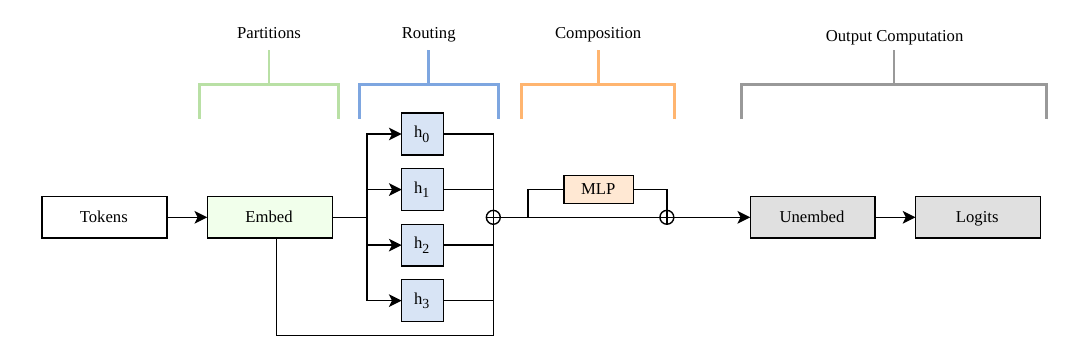} 
    \caption{\textbf{Schematic of the proposed monoid-extension circuit.} Tokens are first embedded into the residual stream, where representations are organized by algebraic partitions. Attention heads route operand information into the appropriate local subspace, the MLP performs the nonlinear composition step, and the unembedding converts the resulting product representation into output logits.}
    \label{fig:model}
\end{figure}

We use \(d_{\mathrm{vocab}}=n+1\), with residues \(0,\dots,n-1\) and a special \(=\) token indexed by \(n\). Inputs $a, b,$ and $=$ are one-hot encoded as $n+1$ dimensional vectors. Each one-hot encoded vector is embedded with $d_\text{model} = 128$, which begins the residual stream.

We then denote the following parameters. $W_E$ is the embedding matrix. $W_Q^i, W_K^i, W_V^i, W_O^i$ correspond to the query, key, value, and output matrix, respectively, of head $i$. We then have $W_\text{in}, b_\text{in}, W_\text{out}, b_\text{out}$ as the input and output matrices for the MLP. Finally, we have $W_U$ for the unembedding layer.

We can let $t_a, t_b, t_=$ represent the one-hot encoded token representation of $a, b,$ and $=$. We also note that loss is computed only on the logits on the final token, corresponding to $=$. After the attention module, all outputs refer to just the final token, because information only moves between tokens during the self-attention step.

After embedding, our initial residual stream on token $i$, which we denote as $x_i^{(0)}$, becomes:
\[
x_i^{(0)} = W_Et_i
\]

We then apply single-layer causal multi-head self-attention. First, we compute the attention score $A^j$:

\[
A^j = \text{softmax}(x^{(0)^T} W_K^j W_Q^j x_=)
\]

We then add the attention outputs back into the residual stream, which we denote as $x^{(1)}$

\[
x^{(1)} = x_i^{(0)} + \sum_jW_O^jW_V^j(x_=^{(0)}\cdot A^j)
\]

We then pass the residual stream after attention through the MLP block, with hidden dimension $d_\text{hidden} = 512$ and a ReLU nonlinearity. We denote the output as $x^{(2)}$:

\[
x^{(2)} = x^{(1)} + W_\text{out}\text{ReLU}(W_\text{in}x^{(1)})
\]

Finally, we pass our outputs through the unembedding matrix, which produces logits over $\R^n$. We then take the argmax of the resulting vector to compute the product $c$.
\[
\text{Logits} = W_Ux^{(2)}
\]

\subsection{MLP Only}
We also train another architecture without the attention module. We use the same training parameters, with $d_\text{model} = 128$, and $d_\text{hidden} = 512$. We can further describe the mathematical structure of this alternate structure.

We take one-hot encoded vectors $a, b$ with dimension $d_\text{vocab} = n$, and pass them through an embedding layer $W_E$, to produce vectors of dimension $d_\text{model}$. We then concatenate the embeddings of $a, b$ to produce a $256$ dimensional model. We can let $t_a, t_b$ represent the one-hot encoded tokens. We define the output of our embedding as $x^{(0)}$

\[
x^{(0)} = [W_E t_a, W_Et_b]
\]

We then pass our embedding through the MLP block. We represent the outputs as $x^{(1)}$:
\[
x^{(1)} = W_\text{out}\text{ReLU}(W_\text{in}x^{(0)})
\]

Unlike the transformer architecture, we have no residual stream, and simply pass this output into the unembedding layer to compute our logits:

\[
\text{Logits} = W_Ux^{(1)}
\]

\subsection{Additional Comments}

For our transformer architecture, we note empirically that the attention paid by the $=$ token to itself is trivial and can be ignored.

We also find that the MLP only architecture generalized faster than the transformer architecture, and performed equally as well. We hypothesize that the MLP is able to route information and perform the computation step without the attention step because of the concatenated embeddings.

\section{Additional Interpretability Analyses}
This section provides supplementary analyses and methodological details supporting the interpretability evidence presented in the main text.
\subsection{Hierarchical Structure of $\J$-class Embeddings} 
\label{app:embed-hierarchy}

In this subsection, we provide additional evidence for the \emph{atomic factorization} hypothesis discussed in Section~\ref{sec:embed}. The main text shows that the embedding matrix contains sparse Fourier structure after being partitioned by $\J$-class. Here, we describe how this structure becomes visible only after applying the algebraic ordering induced by the local group structure of each class.

We first inspect the raw embedding matrix $W_E$ directly, together with a naive one-dimensional FFT taken over the original token order. As shown in Figure~\ref{fig:raw-naive-fft}, this representation is difficult to interpret: while the naive FFT exposes some periodic bands, these bands are not aligned with the algebraic structure of the multiplication monoid. This suggests that the relevant organization is not visible in the default token ordering.

\begin{figure}[H]
    \centering
    \includegraphics[width=0.75\linewidth]{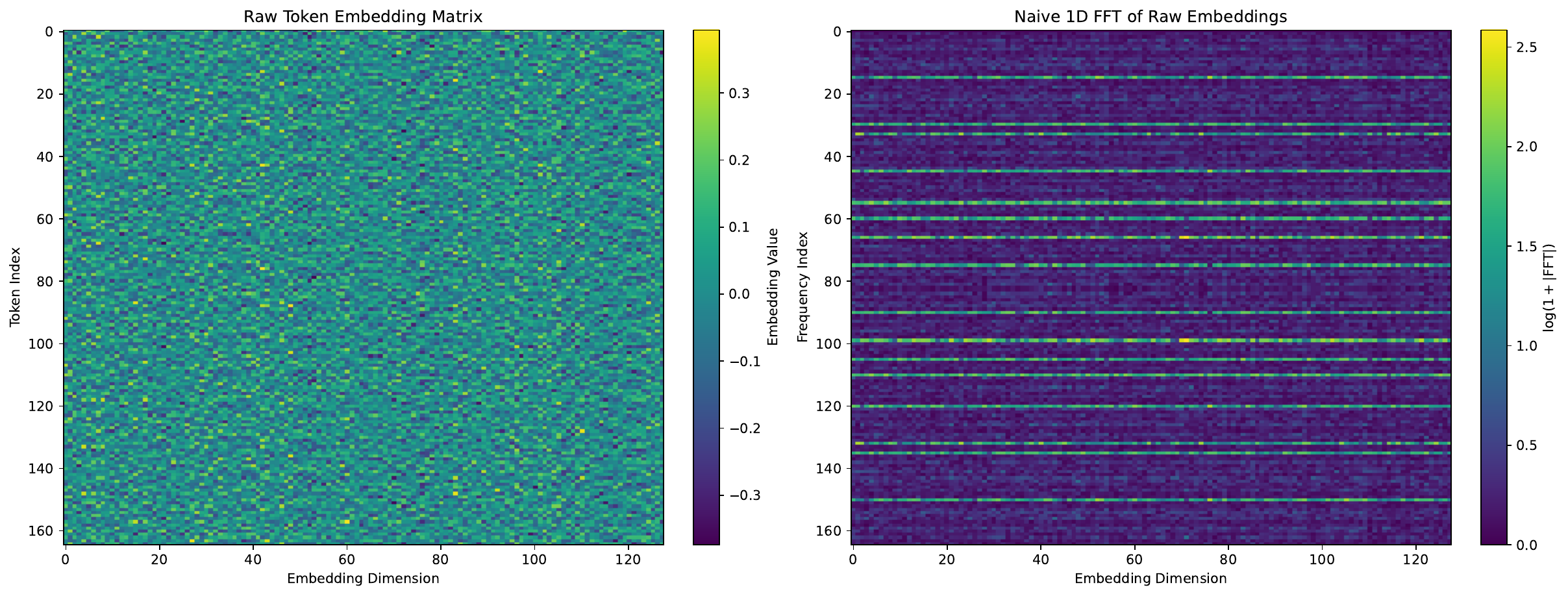} 
    \caption{\textbf{Raw embedding matrix and naive FFT.} Left: the learned token embedding matrix $W_E$ in the original token order. Right: a naive 1D FFT over token index. Although some periodic structure is visible, the raw token order does not respect the algebraic decomposition of $\mathbb Z_{165}$, making the resulting spectrum difficult to interpret.}
    \label{fig:raw-naive-fft}
\end{figure}

Motivated by the $\J$-class decomposition, we then partition $W_E$ into submatrices $W_E^{J_d}$, one for each $\J$-class. Within each class, we further permute rows according to the algebraic coordinates induced by the isomorphism
\[
    J_d \cong \left(\mathbb Z\Big/\left(\frac{165}{d}\right)\mathbb Z\right)^\times.
\]
This produces an algebraically ordered embedding matrix $\widetilde W_E^{J_d}$ for each class. Figure~\ref{fig:jclass-ordered-embedding} compares the embedding matrix after grouping rows only by $\J$-class against the matrix after additionally ordering rows within each class. The latter reveals clearer vertical and periodic structure, suggesting that the model's embedding geometry is organized not merely by class membership, but also by within-class group coordinates.

\begin{figure}[H]
    \centering
    \includegraphics[width=0.75\linewidth]{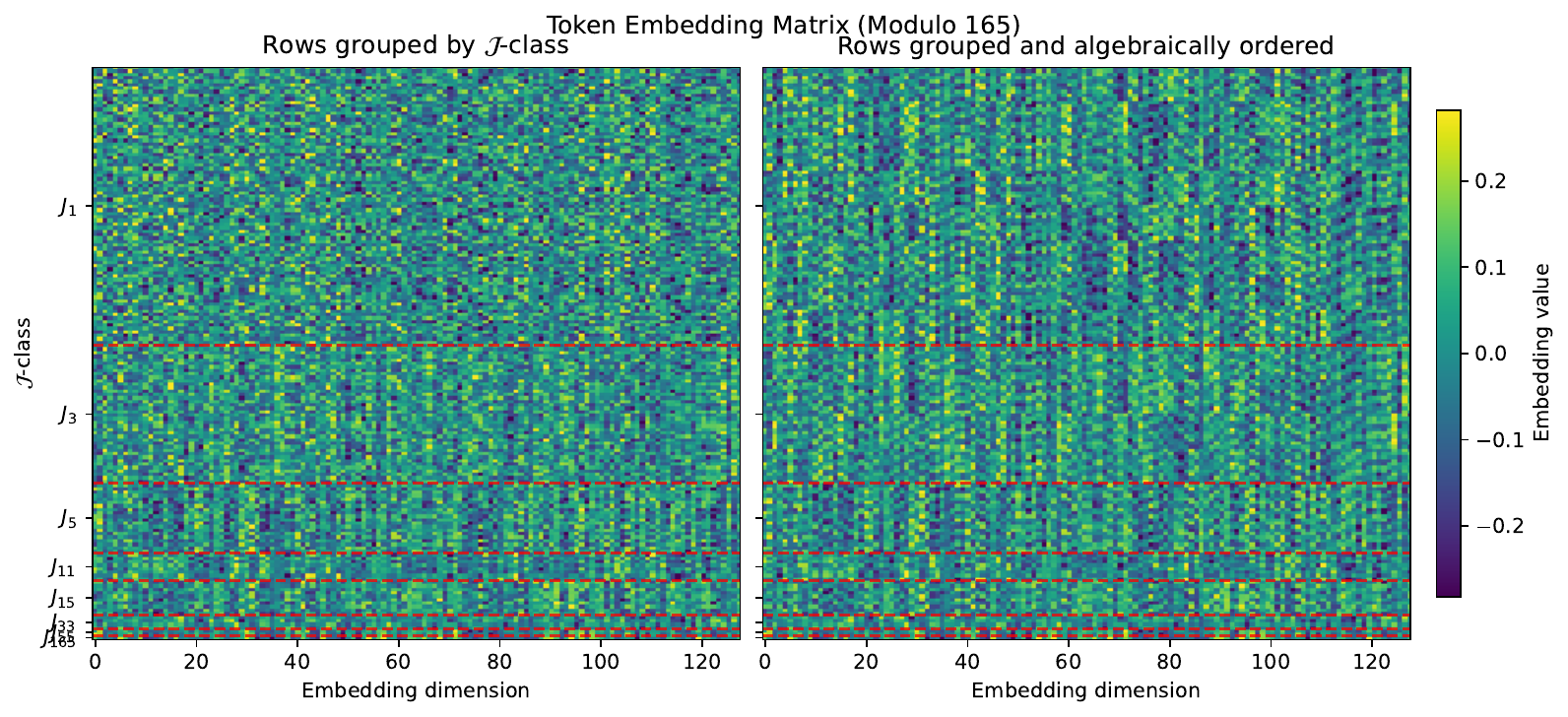} 
    \caption{\textbf{Embedding matrix after $\J$-class grouping and algebraic ordering.} Left: rows grouped by $\J$-class. Right: rows grouped by $\J$-class and then ordered using the local group coordinates inside each class. The algebraic ordering makes periodic structure more visible, indicating that the learned embeddings track within-class group coordinates.}
    \label{fig:jclass-ordered-embedding}
\end{figure}

Finally, we apply a discrete Fourier transform along the algebraic axes of each ordered class $\widetilde W_E^{J_d}$. For cyclic classes this is an ordinary 1D DFT; for product classes, first observe we can use the Chinese Remainder Theorem: 
\[
J_1 \cong C_2 \times C_4 \times C_{10},
\qquad
J_3 \cong C_4 \times C_{10},
\]
we apply a multidimensional DFT  over the corresponding product coordinates and flatten the resulting frequency grid for visualization. Figure~\ref{fig:fft-jclass-embedding} shows the resulting Fourier spectra and total energy at each frequency.

\begin{figure}[!h]
    \centering
    \includegraphics[width=0.9\linewidth]{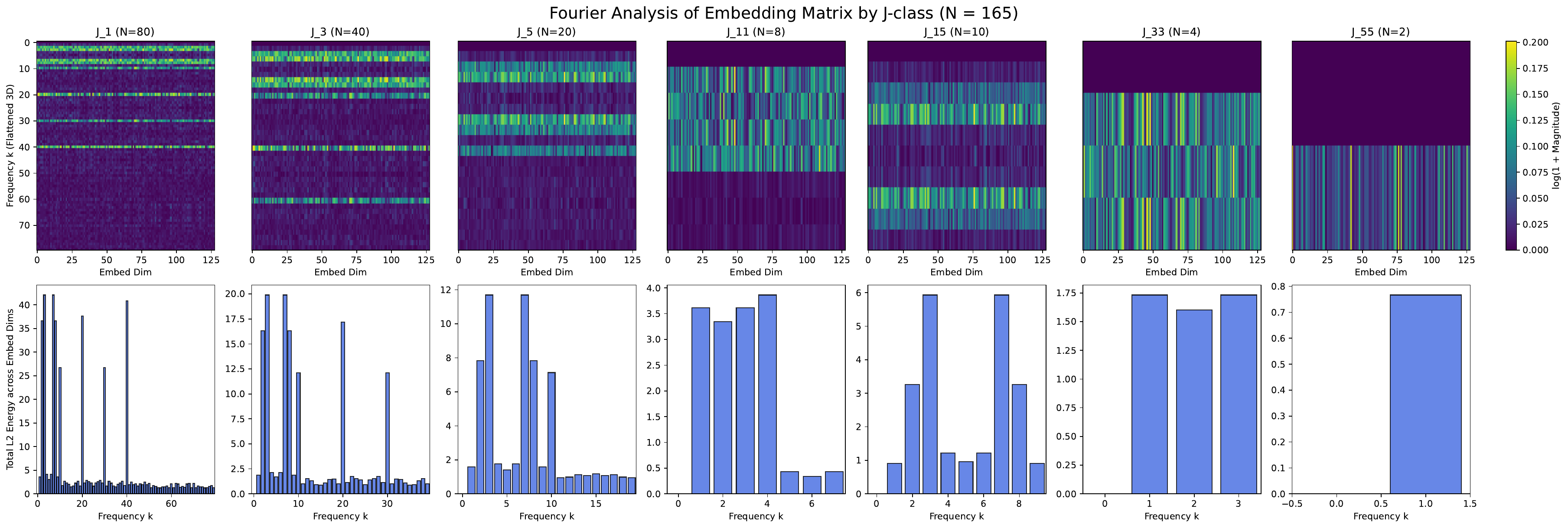} 
    \caption{\textbf{Fourier spectra of $\J$-class embedding blocks.} Top: log-magnitude Fourier spectra of each algebraically ordered embedding block. Bottom: total Fourier energy across embedding dimensions for each frequency. The spectra concentrate on a sparse set of frequencies, matching the key-frequency structure reported in Table~\ref{tab:key-freqs}.}
    \label{fig:fft-jclass-embedding}
\end{figure}

This classwise Fourier analysis reveals two important patterns. First, the embedding geometry is highly sparse in the local Fourier basis: only a small number of frequencies account for most of the spectral energy in each nontrivial $\J$-class. This supports the claim that the model does not treat each element as an unrelated token, but instead represents elements using structured local Fourier coordinates.

Second, the active frequencies appear to be reused across related classes. For example, larger product classes inherit axes corresponding to smaller cyclic factors. This is the sense in which we describe the representation as \emph{atomic}: non-cyclic classes do not appear to require entirely new Fourier structure from scratch, but instead reuse frequency components associated with their cyclic factors. This provides additional evidence for the hypothesis that the model builds composite $\J$-class representations from reusable local group coordinates.

\subsection{Visualizing the Embeddings}
\label{app:embed-visual}

\begin{figure}[H]
    \centering
    \includegraphics[width=0.4\linewidth]{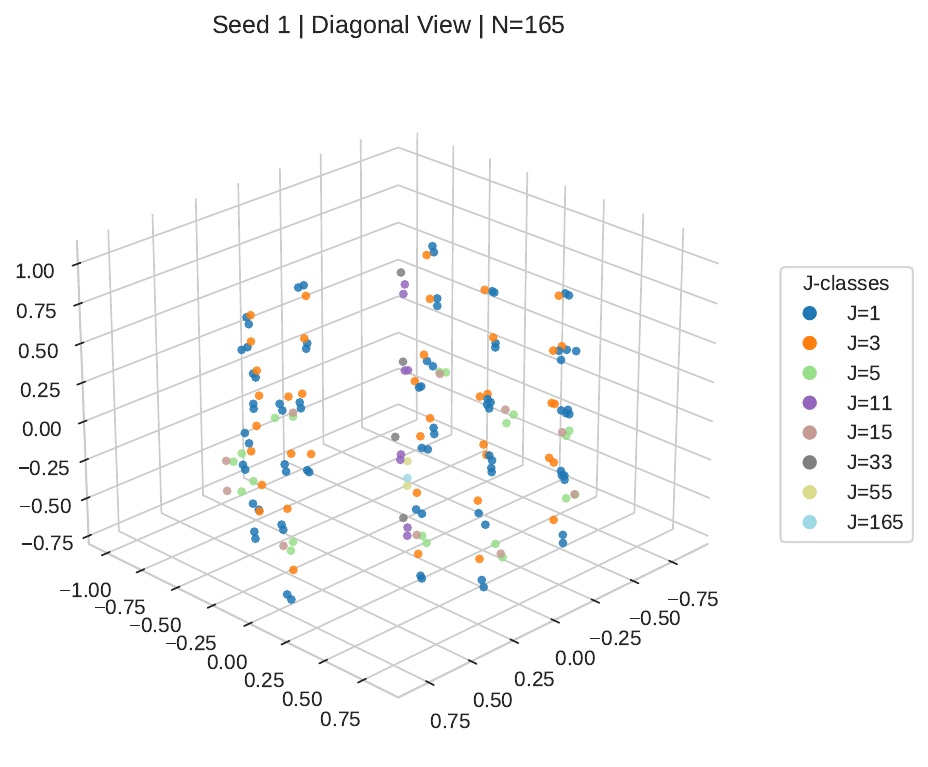} 
    \includegraphics[width=0.4\linewidth]{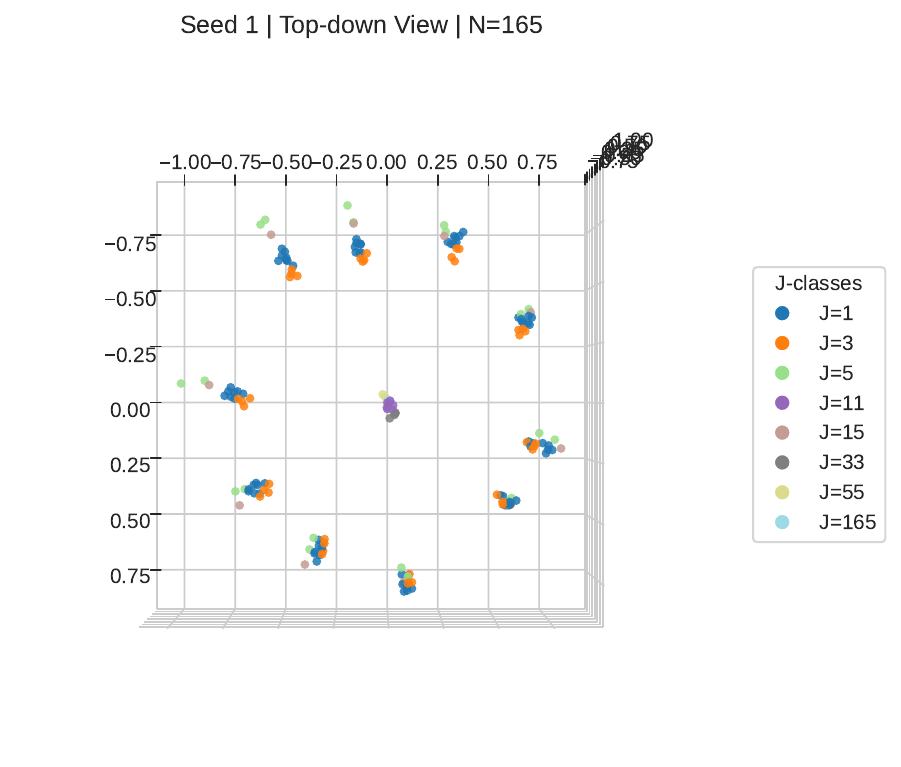} 
    \caption{\textbf{PCA visualization of token embeddings.} We project the learned token embeddings \(W_E\) onto their first three principal components and color tokens by \(\mathcal J\)-class. The diagonal and top-down views reveal structured, class-dependent geometry, with repeated curved patterns consistent with the local Fourier structure observed in the classwise spectral analysis.}
    \label{fig:embed-pca-3d}
\end{figure}

As an additional qualitative check, we visualize the learned token embeddings by projecting the embedding matrix \(W_E\) onto its first three principal components. Figure~\ref{fig:embed-pca-3d} shows two views of the resulting projection, with tokens colored by their \(\mathcal J\)-class.

The projection reveals that the embeddings are not arranged randomly in residual space. Instead, elements from related \(\mathcal J\)-classes occupy structured regions, and the global geometry exhibits repeated curved patterns consistent with the Fourier structure identified in Section~\ref{sec:embed}. In particular, the top-down view makes visible a roughly periodic arrangement, suggesting that the learned representation retains circular or toroidal geometry inherited from the local cyclic factors.

We emphasize that these PCA plots are intended as qualitative visualization rather than primary evidence. The quantitative claims in the main text are supported by the classwise Fourier energy analysis and the comparison of \(\mathcal J\)-class PCA dimension against random subsets.

\subsection{MLP Neuron Heatmaps}
\label{app:mlp_heatmaps}

Because the MLP is the network’s sole nonlinear component, it must be responsible for composing the routed operand representations into the final product representation. To verify whether this composition utilizes the same $\mathcal{J}$-class-local Fourier structure identified in the attention and embedding layers, we analyze the hidden layer activations.

For each hidden neuron $i$, we compute its post-ReLU activation on the final token residual stream, $x^{(1)}(a,b)$, across all input pairs $(a,b) \in \mathbb{Z}_{165}^2$. This yields an activation matrix $H_i \in \mathbb{R}^{165 \times 165}$. To expose latent algebraic structure, we permute the rows and columns by $\mathcal{J}$-class and then by local group coordinates, mirroring our embedding analysis.

As shown in Figure 10, the resulting heatmaps reveal two distinct structural phenomena:
\begin{itemize}
    \item \textbf{Macroscopic Stratification:} Activations exhibit strict block-like boundaries aligned exactly with $\mathcal{J}$-class partitions (red dashed lines), indicating the neurons are highly sensitive to the algebraic strata of the operands.
    \item \textbf{Microscopic Fourier Geometry:} Within these blocks, activations display fine-grained periodic bands and checkerboard textures. This mirrors the local Fourier frequencies previously observed in the embeddings and attention heads.
\end{itemize}
These results provide strong qualitative evidence that the MLP does not simply memorize raw input-output pairs. Instead, it actively preserves and operates upon the stratified, local Fourier coordinate system to compute product representations. While this confirms that algebraic structure persists through the nonlinear composition step, a complete causal mapping of these bilinear interactions via advanced dictionary learning remains an important direction for future work.

\begin{figure}[H]
    \centering
    \includegraphics[width=0.8\linewidth]{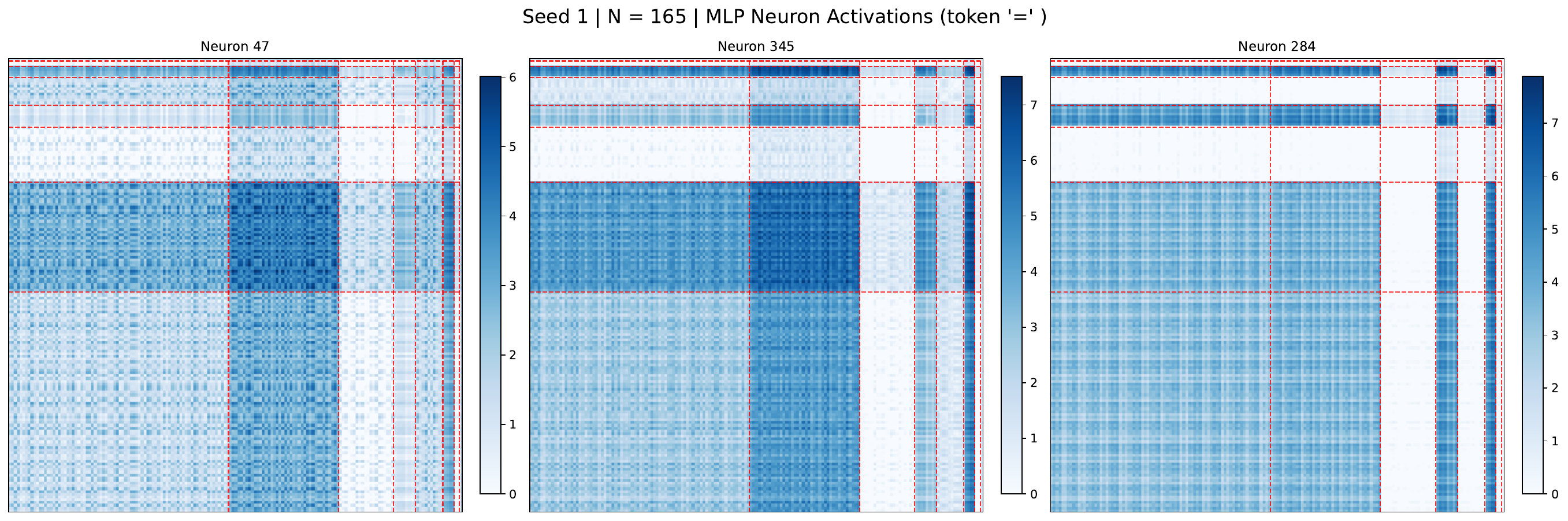} 
    \caption{\textbf{MLP hidden neuron activations exhibit macroscopic and microscopic algebraic structure.} Inputs $(a,b)$ are ordered algebraically by $\mathcal{J}$-class and local group coordinates. The strict block boundaries (red dashed lines) confirm the neurons' sensitivity to broad $\mathcal{J}$-class strata, while the internal periodic textures mirror the local Fourier frequencies observed in earlier layers. This provides qualitative evidence that the MLP actively preserves and operates upon the network's stratified coordinate system during nonlinear composition, rather than relying on rote memorization.}
    \label{fig:mlp-neurons}
\end{figure}

\section{More Mathematical Background} \label{app:monoids-primer}
\subsection{Groups and Representations}
\begin{definition}
    \label{def:group}
    A \textit{finite group} is a finite, non-empty set $G$ equipped with a binary operation $\cdot$ that satisfies the following axioms: 
    \begin{enumerate}[topsep=0pt, itemsep=0pt]
        \item Closure: if $a \in G$ and $b \in G$, then $a \cdot b \in G$. 
        \item Associativity: $a \cdot (b \cdot c) = (a \cdot b) \cdot c$ for all $a, b, c \in G$
        \item There is an element $e \in G$ (called the \textit{identity}) such that $a \cdot e = a = e \cdot a$ for every $a \in G$. 
        \item For each $a \in G$, there exists an element $d \in G$ (called the \textit{inverse} of $a$) such that $a \cdot d = e$ and $d \cdot a = e$\\

        A group is said to be \textit{abelian} if it satisfies the following axiom:
        \item Commutativity: $a \cdot b = b \cdot a$ for all $a, b \in G$. 
    \end{enumerate}  
\end{definition}
\begin{example}
    \label{ex:group}
    Consider the set of integers modulo 6: $\Z_6 := \{0, 1, 2, 3, 4, 5\}$. Under integer multiplication, this is \textit{not} a group as it does not satisfy the inverse element axiom. For instance, there does not exist an $x \in \Z_6$ such that $2 \cdot x \equiv 1 \pmod 6$.

    However, we can ``mold'' this set into a group under multiplication, simply by keeping only the invertible elements (which are also called the \textit{units}) 1 and 5. This new set now forms a group under integer multiplication, which we denote with $(\Z / 6 \Z)^\times = \{1, 5\}$.
\end{example}
\begin{definition}
    \label{def:representation}
    Let $G$ be a finite group and let $V$ be a finite-dimensional real vector space. A \textit{(real) representation} of $G$ on $V$ is a map $\rho: G \to \mathrm{GL}(V)$ satisfying
    \[
        \rho(a \cdot b) = \rho(a)\rho(b) \quad \text{for all } a, b \in G.
    \]
    In other words, $\rho$ assigns to each group element an invertible matrix in a way that respects the group operation. Note that this condition forces $\rho(e) = \mathrm{id}_V$ and $\rho(a^{-1}) = \rho(a)^{-1}$. 
    
    For each group G, there exist a finite set of fundamental \textit{irreducible representations}. 
\end{definition}

\begin{definition}
    \label{def:character}
    The \textit{character} of a representation $\rho: G \to \mathrm{GL}(V)$ is the function $\chi_\rho: G \to \R$ defined by
    \[
        \chi_\rho(a) = \tr(\rho(a)),
    \]
    the trace of the linear map $\rho(a)$. If $\rho$ is irreducible, we call $\chi_\rho$ an \textit{irreducible character}. Remarkably, the character of a representation determines it up to isomorphism, so this single scalar-valued function captures everything about $\rho$ that we care about.
\end{definition}

\subsection{Monoids and $\J$-classes}
\begin{definition}
    A \textit{finite monoid} \(M\) is a finite set equipped with an associative binary operation \(\cdot\) and an identity element.
    \begin{itemize}
        \item Associativity means that \((a \cdot b) \cdot c = a \cdot (b \cdot c)\) for all \(a,b,c \in M\).
        \item An identity element is an element \(1 \in M\) such that \(a \cdot 1 = 1 \cdot a = a\) for all \(a \in M\).
    \end{itemize}
\end{definition}
\begin{example}
    \label{ex:monoid}
    Consider the set of integers modulo 6: $\Z_6 := \{0, 1, 2, 3, 4, 5\}$. Unlike with example \ref{ex:group}, under integer multiplication, this \textit{is} a monoid, as monoids do not adhere to the invertibility axiom. We denote the monoid as $(\Z_6, \cdot)$.

    The association between monoids and multiplication tables should now be apparent:  an $n \times n$ multiplication table is simply the set of pairs $(a, b)$ together with their products $a \cdot b$ in $\Z_n$. We emphasize the definition and properties of monoids precisely because our aim is to explore the structure of the multiplication table as learned by our models.
\end{example}

\begin{definition}
    A monoid is \textit{commutative} if its binary operation is commutative; that is,
    \[
        a \cdot b = b \cdot a
    \]
    for all \(a,b \in M\).
\end{definition}
    
\begin{definition}
    A two-sided ideal $I$ of a monoid $M$ is a subset of $M$ such that $M I M \subseteq I$. In other words, for every $m, m' \in M$ and for every $i \in I$ we have $m \cdot i \cdot m' \in I$.
\end{definition}
\begin{definition}
    Two elements $m$ and $m'$ in $M$ are $\mathcal{J}$-related if they generate the same two-sided ideal; in other words: $$ m \mathcal{J} m' \quad \iff \quad MmM = Mm'M. $$
\end{definition}
\begin{definition}
    A $\mathcal{J}$-class is the equivalence class corresponding to the $\mathcal{J}$-relation.
\end{definition}
\begin{definition}
    An element $e \in M$ is \textit{idempotent} if $e^2 = e$. 
\end{definition}
\begin{definition}
    \label{def:regular-j}
    A $\mathcal{J}$-class is \textbf{regular} if it contains at least one 
    \emph{idempotent} element, i.e.\ some $e \in \mathbb{Z}_n$ with $e^2 \equiv e 
    \pmod{n}$. The $\mathcal{J}$-classes of a monoid form a perfect partition: 
    every element belongs to exactly one class.
\end{definition}
 For a regular $\mathcal{J}$-class $J$ with idempotent $e \in J$, the \textit{maximal subgroup} at $e$ is
    \[
        G_e = \{ m \in eMe \mid \exists\, m' \in eMe,\; mm' = m'm = e \},
    \]
    i.e.\ the group of units of the monoid $eMe$.

\begin{proposition}
\label{prop:squarefree-regular}
If $n$ is square-free, then every $\mathcal{J}$-class of $(\mathbb{Z}_n, \cdot)$ 
is regular.
\end{proposition}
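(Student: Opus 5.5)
The plan has three steps. First identify the $\mathcal J$-classes of $(\mathbb Z_n,\cdot)$ with the gcd classes $J_d$. Then exhibit an idempotent in each $J_d$ using the Chinese Remainder Theorem. Square-freeness enters only in the first step, where it makes every divisor $d$ coprime to $n/d$.

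\textbf{Step 1: $\mathcal J$-classes are gcd classes.} Since $\mathbb Z_n$ is commutative with identity, $MmM = m\mathbb Z_n$, the principal ideal generated by $m$. In $\mathbb Z_n$ the principal ideal $m\mathbb Z_n$ equals $\gcd(m,n)\mathbb Z_n$. Indeed, $\gcd(m,n) = um + vn$ gives $\gcd(m,n) \in m\mathbb Z_n$, and conversely $m$ is a multiple of $\gcd(m,n)$. Moreover, for positive divisors $d, d'$ of $n$, we have $d\mathbb Z_n = d'\mathbb Z_n$ if and only if $d = d'$: in that case $d \mid d'$ and $d' \mid d$ in $\mathbb Z$, because $d \mid n$. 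Hence $m \,\mathcal J\, m'$ exactly when $\gcd(m,n) = \gcd(m',n)$. So the $\mathcal J$-classes are precisely the sets $J_d$ with $d \mid n$, matching the definition in the main text.

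\textbf{Step 2: an idempotent in each $J_d$.} Fix $d \mid n$. Because $n$ is square-free, $\gcd(d, n/d) = 1$. This is the key use of the hypothesis: any common prime $p$ would give $p^2 \mid n$. By CRT, choose $e \in \mathbb Z_n$ with $e \equiv 0 \pmod d$ and $e \equiv 1 \pmod{n/d}$. Then $e^2 \equiv e$ modulo both $d$ and $n/d$, hence modulo $n$. So $e$ is idempotent.

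\textbf{Step 3: $e$ lies in $J_d$.} We need $\gcd(e,n) = d$. We have $d \mid e$, so $d \mid \gcd(e,n)$. Also $\gcd(e, n/d) = 1$ because $e \equiv 1 \pmod{n/d}$. Since $n = d \cdot (n/d)$ with coprime factors, $\gcd(e,n) = \gcd(e,d)\gcd(e,n/d) = d \cdot 1 = d$. Thus every $J_d$ contains an idempotent and is regular by Definition~\ref{def:regular-j}. The edge cases behave as expected: $d = 1$ gives $e = 1$, and $d = n$ gives $e = 0$.

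The main obstacle is really just Step 1: verifying carefully that the abstract $\mathcal J$-relation agrees with the gcd partition, including that distinct divisors yield distinct ideals. Steps 2 and 3 are a direct CRT construction.
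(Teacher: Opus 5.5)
Your proof is correct and is essentially the paper's: your CRT idempotent ($e\equiv 0 \pmod d$, $e\equiv 1 \pmod{n/d}$) is exactly the paper's element with component $0$ at the primes dividing $d$ and $1$ at the others, grouped into two coprime blocks, and your Step 1 additionally checks that the abstract $\mathcal J$-classes coincide with the gcd classes $J_d$, which the paper takes as given. One small slip in your overview: square-freeness is used in Step 2 (to get $\gcd(d,n/d)=1$), not in Step 1, which holds for every $n$.
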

\begin{proof}
Let $d \mid n$ be any divisor, and let $J_d = \{a \in \mathbb{Z}_n \mid \gcd(a,n) = d\}$ 
be the corresponding $\mathcal{J}$-class. We must exhibit an idempotent $e \in J_d$, 
i.e.\ an element satisfying $e^2 \equiv e \pmod{n}$ and $\gcd(e, n) = d$.

Write $n = p_1 p_2 \cdots p_k$ with $p_1, \ldots, p_k$ distinct primes (possible since 
$n$ is square-free), and write $d = \prod_{i \in S} p_i$ for some subset 
$S \subseteq \{1, \ldots, k\}$. By the Chinese Remainder Theorem,
\[
    \mathbb{Z}_n \;\cong\; \mathbb{Z}_{p_1} \times \cdots \times \mathbb{Z}_{p_k}.
\]
Define $e$ to be the unique element of $\mathbb{Z}_n$ whose image under this 
isomorphism is
\[
    e \;\longleftrightarrow\; (e_1, \ldots, e_k), \qquad 
    e_i = \begin{cases} 0 & \text{if } i \in S, \\ 1 & \text{if } i \notin S. \end{cases}
\]
Since $0^2 = 0$ and $1^2 = 1$ in each $\mathbb{Z}_{p_i}$, we have $e_i^2 = e_i$ for 
all $i$, so $e^2 \equiv e \pmod{n}$, confirming $e$ is idempotent.

It remains to verify $\gcd(e, n) = d$. For each prime $p_i$, the $p_i$-component 
of $e$ is $0$ if $i \in S$ and $1$ if $i \notin S$. Therefore $p_i \mid e$ if and 
only if $i \in S$, which gives $\gcd(e, n) = \prod_{i \in S} p_i = d$. Hence 
$e \in J_d$, so $J_d$ is regular.

Since $d$ was an arbitrary divisor of $n$, every $\mathcal{J}$-class is regular.
\end{proof}
\begin{example}
Let $n = 165 = 3 \times 5 \times 11$. Consider the finite set $M = \mathbb{Z}_{165}$ of integers modulo $165$, equipped with the multiplication operation $a \cdot b := a \times b \pmod{165}$. It is clear that this gives $(\mathbb{Z}_{165}, \cdot)$ the structure of a finite commutative monoid with identity $1$.

\textbf{The $\mathcal{J}$-classes of $(\mathbb{Z}_{165},\cdot)$.}
In this monoid, the two-sided ideal generated by an element $a$ is
\[
    \mathbb{Z}_{165}\, a\, \mathbb{Z}_{165} = a \cdot \mathbb{Z}_{165} = \{ a \cdot k \pmod{165} \mid k \in \mathbb{Z}_{165}\}.
\]
Because $\mathbb{Z}_{165}$ is commutative, this ideal is exactly the principal ideal $\langle a \rangle = \{ a \cdot k \bmod 165 \mid k \in \mathbb{Z}_{165} \}$. One can verify directly that
\[
    \langle a \rangle = \langle b \rangle \quad\iff\quad \gcd(a, 165) = \gcd(b, 165).
\]
Therefore, two elements are $\mathcal{J}$-related if and only if they share the same GCD with $165$. Since the divisors of $165$ are $\{1, 3, 5, 11, 15, 33, 55, 165\}$, the monoid decomposes into exactly eight $\mathcal{J}$-classes:
\[
    \mathbb{Z}_{165} \;=\; J_1 \sqcup J_3 \sqcup J_5 \sqcup J_{11} \sqcup J_{15} \sqcup J_{33} \sqcup J_{55} \sqcup J_{165},
\]
where $J_d = \{ a \in \mathbb{Z}_{165} \mid \gcd(a, 165) = d \}$. The sizes, idempotents, and local group structures of each class are recorded in Table~\ref{tab:jclasses-165}.

\begin{table}[h]
    \centering
    \footnotesize
    \setlength{\tabcolsep}{4pt}
    \renewcommand{\arraystretch}{1.1}
    \caption{$\mathcal{J}$-classes of $(\mathbb{Z}_{165}, \cdot)$, with sizes, idempotents, local group isomorphism types, and minimal generating sets.}
    \label{tab:jclasses-165}
    \begin{tabular}{@{}ccccc@{}}
        \toprule
        $J_d$ & $|J_d|$ & $e_{J_d}$ & $G_{e_{J_d}} \cong$ & Min. generators \\
        \midrule
        $J_1$     & $80$ & $1$   & $(\mathbb{Z}/165\mathbb{Z})^\times \cong \mathbb{Z}_2 \times \mathbb{Z}_4 \times \mathbb{Z}_{10}$ 
        & $\langle 56, 67, 46 \rangle$ \\

        $J_3$     & $40$ & $111$ & $(\mathbb{Z}/55\mathbb{Z})^\times  \cong \mathbb{Z}_4 \times \mathbb{Z}_{10}$ 
        & $\langle 12, 156 \rangle$ \\

        $J_5$     & $20$ & $100$ & $(\mathbb{Z}/33\mathbb{Z})^\times  \cong \mathbb{Z}_2 \times \mathbb{Z}_{10}$ 
        & $\langle 155, 145 \rangle$ \\

        $J_{11}$  & $8$  & $121$ & $(\mathbb{Z}/15\mathbb{Z})^\times  \cong \mathbb{Z}_2 \times \mathbb{Z}_4$ 
        & $\langle 11, 22 \rangle$ \\

        $J_{15}$  & $10$ & $45$  & $(\mathbb{Z}/11\mathbb{Z})^\times  \cong \mathbb{Z}_{10}$ 
        & $\langle 90 \rangle$ \\

        $J_{33}$  & $4$  & $66$  & $(\mathbb{Z}/5\mathbb{Z})^\times   \cong \mathbb{Z}_4$ 
        & $\langle 132 \rangle$ \\

        $J_{55}$  & $2$  & $55$  & $(\mathbb{Z}/3\mathbb{Z})^\times   \cong \mathbb{Z}_2$ 
        & $\langle 110 \rangle$ \\

        $J_{165}$ & $1$  & $0$   & trivial 
        & --- \\
        \bottomrule
    \end{tabular}
\end{table}

\noindent Every $\mathcal{J}$-class contains an idempotent (listed in the table), so every class is regular. This is a special property of square-free $n$: when $n$ has no repeated prime factor, every divisor $d \mid n$ yields a regular class $J_d$. The idempotent $e_{J_d} \in J_d$ is the unique element satisfying $e_{J_d}^2 \equiv e_{J_d} \pmod{165}$ with $\gcd(e_{J_d}, 165) = d$. Classes whose local group is cyclic admit a single generator; non-cyclic classes (those involving a direct product) do not.
\end{example}
\begin{remark}[Density of square-free integers] \label{rem:square-free}
\label{rem:squarefree-density}
The assumption that $n$ is square-free is mild: the natural density of square-free positive integers is
\[
    \lim_{N \to \infty} \frac{|\{n \leq N \mid n \text{ is square-free}\}|}{N}
    \;=\; \frac{1}{\zeta(2)} \;=\; \frac{6}{\pi^2} \;\approx\; 0.608,
\]
where $\zeta(s) = \sum_{n=1}^\infty n^{-s}$ is the Riemann zeta function \citep{apostol1976analytic}. In other words, approximately $60.8\%$ of all positive integers are square-free, so our framework applies to the majority of moduli.

\emph{Proof sketch.} An integer $n$ fails to be square-free if and only if $p^2 \mid n$ for some prime $p$. By inclusion-exclusion over primes, the density of square-free integers equals
\[
    \prod_{p \text{ prime}} \left(1 - \frac{1}{p^2}\right)
    \;=\; \frac{1}{\prod_p (1 - p^{-2})^{-1}}
    \;=\; \frac{1}{\zeta(2)}
    \;=\; \frac{6}{\pi^2},
\]
using the Euler product formula $\zeta(s) = \prod_p (1 - p^{-s})^{-1}$ and the classical result $\zeta(2) = \pi^2/6 \approx 60.8\%$ \citep{apostol1976analytic}.
\end{remark}

\subsection{The Clifford--Munn--Ponizovski\u{\i} Theorem and Local Structure}

The key structural result underlying the Monoid Extension is a classical theorem of representation theory, which we now state for the special case of commutative integer monoids.

\begin{theorem}[Clifford--Munn--Ponizovski\u{\i} \citep{steinberg2016representation}]
\label{thm:cmp}
Let $M$ be a finite monoid and $\Bbbk$ a field. There is a bijection between isomorphism classes of simple $\Bbbk M$-modules and isomorphism classes of simple $\Bbbk G_e$-modules, taken one per regular $\mathcal{J}$-class, where $G_e = eMe$ is the maximal subgroup at the idempotent $e \in J$.

In particular, every irreducible representation of $M$ is indexed by a pair $(J, V)$ where $J$ is a regular $\mathcal{J}$-class and $V$ is an irreducible representation of the corresponding maximal subgroup $G_e$.
\end{theorem}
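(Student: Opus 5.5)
I would follow the standard ``apex'' proof (Steinberg's book). It builds maps in both directions, from simple $\Bbbk M$-modules to simple modules of a maximal subgroup and back, and checks that they are mutually inverse. One caveat first: I read the statement's ``$G_e = eMe$'' as the group of units of $eMe$, as in the definition of the maximal subgroup given just before Theorem~\ref{thm:cmp}.

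\textbf{From $M$-modules to group modules.} Let $V$ be a simple $\Bbbk M$-module. The set $\{m \in M : mV = 0\}$ is a two-sided ideal; it is possibly empty, and never contains $1$. Among the $\mathcal J$-classes not contained in it, I choose a $\J$-class $J$ that is minimal in the $\J$-order. This is the \emph{apex} of $V$. The first key step is to show that $J$ is regular.
\begin{itemize}
\item Let $I_J = MJM$, and let $I_J'$ be the elements of $I_J$ lying strictly below $J$. Then $I_J'$ annihilates $V$, while $I_J V = V$ by simplicity.
\item So $J$ acts nontrivially on $V$, and $\Bbbk I_J/\Bbbk I_J'$ is a non-nilpotent algebra.
\item If $J$ were not regular, then $J\cdot J \subseteq I_J'$ (the Rees quotient argument: in a non-regular $\J$-class no product of two elements stays in the class). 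Then $I_J/I_J'$ would square to zero and so act nilpotently, a contradiction.
\end{itemize}
Having shown $J$ regular, I pick an idempotent $e \in J$ and set $W = eV$. Then $eMe$ acts on $W$. Elements of $eMe \setminus G_e$ lie strictly below $J$, because $eMe \cap J = G_e$ for finite monoids; hence they annihilate $W$, and $W$ is a $\Bbbk G_e$-module. The standard lemma ``if $V$ is simple over $A$ and $eV \neq 0$, then $eV$ is simple over $eAe$'' then makes $W$ a simple $\Bbbk G_e$-module. It is nonzero since $eV \supseteq eJV \neq 0$, after conjugating $J$-elements through $e$.

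\textbf{From group modules to $M$-modules.} Given a simple $\Bbbk G_e$-module $W$, I form the induced module
\[
\mathrm{Ind}_e(W) = \Bbbk L_e \otimes_{\Bbbk G_e} W,
\]
where $L_e$ is the $\mathcal L$-class of $e$, and $m$ acts by left multiplication on $L_e$ if the product stays in $L_e$ and by $0$ otherwise. I would show two things. First, $e\,\mathrm{Ind}_e(W) \cong W$. Second, every proper submodule $N$ satisfies $eN = 0$, using that $L_e$ is generated from $e$. Hence the sum of the proper submodules is the unique maximal submodule, and the quotient $V_W$ is simple with $eV_W \cong W$ and apex $J$.

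\textbf{Bijectivity and the main obstacle.} The two constructions should be mutually inverse. One direction is $eV_W \cong W$. For the other, a simple $V$ with apex $J$ is a quotient of $\mathrm{Ind}_e(eV)$, via the map $\ell \otimes w \mapsto \ell w$; so $V \cong V_{eV}$, because the simple quotient is unique. I also need independence of the choice of idempotent: different idempotents in a regular $J$ have isomorphic maximal subgroups, by Green's lemma. In the commutative setting of $(\Z_n,\cdot)$, where Proposition~\ref{prop:squarefree-regular} gives one idempotent per class, this point is trivial.

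I expect the main obstacle to be the apex regularity step, together with the technical fact that $eMe\cap J = G_e$. These two are where finiteness of $M$ is genuinely used (through stability of finite semigroups). Everything after them is formal module theory.
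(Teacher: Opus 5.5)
Your proposal is correct. It is essentially the apex/induced-module proof from Steinberg's book, which is all the paper relies on here, since the paper states the theorem with that citation and gives no argument of its own. The one step you should make explicit is uniqueness of the apex, which is needed for $V \mapsto (J, eV)$ to be well defined: if $mV \neq 0$ and $s \in J$, simplicity of $V$ gives some $x \in M$ with $sxm \notin \mathrm{Ann}_M(V)$, minimality of $J$ then forces $sxm \in J$, and so $J \subseteq MmM$ for every non-annihilating $m$.
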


This theorem has a profound consequence for our setting: the representation theory of the full monoid $(\mathbb{Z}_n, \cdot)$ \emph{reduces} to the representation theory of its maximal subgroups, one per regular $\mathcal{J}$-class. In other words, understanding the irreducible representations of $\mathbb{Z}_n$ under multiplication is equivalent to understanding the irreducible representations of each group $G_{e_{J_d}}$. For square-free $n$, all $\mathcal{J}$-classes are regular, so this reduction is total.

\begin{theorem}
\label{thm:jclass-group}
In the multiplication monoid $(\mathbb{Z}_n, \cdot)$, every regular $\mathcal{J}$-class $J_d$ forms a group under the inherited multiplication with local identity $e_{J_d}$, and
\[
    J_d \;\cong\; \left(\mathbb{Z}/(n/d)\mathbb{Z}\right)^\times.
\]
When $n$ is square-free, all $\mathcal{J}$-classes are regular and this isomorphism holds for every divisor $d \mid n$.
\end{theorem}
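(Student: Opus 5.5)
The plan is to reduce everything to the Chinese Remainder Theorem, in the spirit of the proof of Proposition~\ref{prop:squarefree-regular}. The first and main step is to show that regularity of $J_d$ forces $\gcd(d, n/d) = 1$. Suppose $e \in J_d$ is idempotent, so $n \mid e(e-1)$ and $\gcd(e,n) = d$. Write $e = du$, where $\gcd(u, n/d) = 1$; this coprimality is exactly the condition $\gcd(e,n) = d$. Then $n \mid du(e-1)$ gives $n/d \mid u(e-1)$, hence $n/d \mid e-1$. Since $d \mid e$ and $n/d \mid e-1$, any common divisor of $d$ and $n/d$ divides $1$, so $\gcd(d, n/d) = 1$. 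I expect this to be the main obstacle, because it is the one place where the hypothesis ``regular'' is actually used. The statement is false without it: for $n=4$, $J_2=\{2\}$ is not a group.

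With $\gcd(d,n/d)=1$, the CRT gives a ring isomorphism
\[
\mathbb Z_n \cong \mathbb Z_d \times \mathbb Z_{n/d}, \qquad a \mapsto (a \bmod d,\ a \bmod n/d).
\]
Coprimality also gives $\gcd(a,n) = \gcd(a,d)\gcd(a,n/d)$. Hence $\gcd(a,n) = d$ holds if and only if $d \mid a$ and $\gcd(a, n/d) = 1$. So under the CRT map, $J_d$ corresponds exactly to the set $\{0\} \times (\mathbb Z/(n/d)\mathbb Z)^\times$.

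This set is closed under the componentwise multiplication, contains $(0,1)$, and every element $(0,v)$ has the inverse $(0,v^{-1})$. It is therefore a group, and $(0,1)$ is its identity. Since $(0,1)$ is the only idempotent in this set, it equals the given $e$, so $e$ is the local identity $e_{J_d}$. The projection $a \mapsto a \bmod (n/d)$ restricted to $J_d$ is then a bijective multiplicative map onto $(\mathbb Z/(n/d)\mathbb Z)^\times$, which gives the stated isomorphism.

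For the final clause, assume $n$ is square-free. Then every $J_d$ is regular by Proposition~\ref{prop:squarefree-regular}, so the isomorphism holds for every divisor $d \mid n$. One can also see this directly, since $\gcd(d,n/d)=1$ automatically when $n$ is square-free.
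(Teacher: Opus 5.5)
Your proof is correct. The isomorphism you end with is the same map the paper uses, $\phi(a) = a \bmod (n/d)$; what differs is how it is justified.

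The paper argues well-definedness from $a = da'$ with $\gcd(a', n/d) = 1$, and leaves bijectivity and multiplicativity to ``one verifies.'' It never invokes regularity in that part. But $\phi$ sends $a$, not $a'$, and $a \bmod (n/d) = da' \bmod (n/d)$ is a unit only when $\gcd(d, n/d) = 1$. Your $n = 4$, $J_2 = \{2\}$ example is exactly a case where the paper's justification, as written, breaks: it gives $\phi(2) = 0$.

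Your first step, that an idempotent $e \in J_d$ forces $n/d \mid e-1$ and hence $\gcd(d, n/d) = 1$, supplies exactly this missing ingredient. The CRT splitting $\mathbb{Z}_n \cong \mathbb{Z}_d \times \mathbb{Z}_{n/d}$ then makes the rest immediate: closure, inverses, the identification of $e$ with $(0,1)$ as the unique idempotent, and bijectivity of $\phi$.

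The paper's version is shorter. Yours is self-contained and shows precisely where the regularity hypothesis enters. Your steps after the first use only coprimality, and when $\gcd(d, n/d) = 1$ the element $(0,1)$ is itself an idempotent in $J_d$. So your argument in fact shows that $J_d$ is regular if and only if $\gcd(d, n/d) = 1$. That gives the square-free clause without even needing Proposition~\ref{prop:squarefree-regular}.
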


\begin{proof}
Let $d \mid n$ and write $n = dm$ where $m = n/d$. The map
\[
\phi : J_d \to (\mathbb Z/m\mathbb Z)^\times,
\qquad
a \mapsto a \bmod m.
\]
is well-defined because $a \in J_d$ implies $\gcd(a, n) = d$, so $a = d \cdot a'$ with $\gcd(a', m) = 1$, i.e.\ $a' \in (\mathbb{Z}/m\mathbb{Z})^\times$. One verifies that $\phi$ is a bijection preserving the group operation (multiplication modulo $n$ on $J_d$ corresponds to multiplication modulo $m$ on $(\mathbb{Z}/m\mathbb{Z})^\times$), and that the local idempotent $e_{J_d}$ maps to the identity $1 \in (\mathbb{Z}/m\mathbb{Z})^\times$.

For square-free $n$, the prime factorization $n = p_1 \cdots p_k$ ensures that every divisor $d \mid n$ is also square-free, so $\gcd(d, n/d) \mid d$ divides a product of distinct primes, guaranteeing that the relevant class $J_d$ contains an idempotent (which can be constructed explicitly by the Chinese Remainder Theorem).
\end{proof}

\begin{remark}
By the Chinese Remainder Theorem, for square-free $n = p_1 \cdots p_k$,
\[
    (\mathbb{Z}/m\mathbb{Z})^\times \;\cong\; \prod_{p_i \nmid d} (\mathbb{Z}/p_i\mathbb{Z})^\times \;\cong\; \prod_{p_i \nmid d} \mathbb{Z}_{p_i - 1}.
\]
This direct product structure is what drives the ``atomic factorization'' of Fourier features observed empirically in Section~\ref{sec:embed}: the network learns the cyclic factors $(\mathbb{Z}/p_i\mathbb{Z})^\times$ independently and recombines them to represent composite $\mathcal{J}$-classes.
\end{remark}
\begin{theorem}[Chughtai et al., 2023]
\label{thm:chughtai}
    Let $G$ be a group of finite order and $\rho: G \to GL(\R^d)$ a real representation of dimension $d$. For any $g \in G$, $\chi_{\rho}(g) \leq d$ with equality if and only if $\rho(g) = I_d$.
\end{theorem}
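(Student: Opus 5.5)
The plan is to reduce to the case of an orthogonal representation and then use the eigenvalue structure of $\rho(g)$. First, since $G$ is finite, average an arbitrary inner product over the group (Weyl's unitary trick). Define $\langle u,v\rangle_G=\frac{1}{|G|}\sum_{h\in G}\langle \rho(h)u,\rho(h)v\rangle$. This is a $G$-invariant inner product, so in an orthonormal basis for it every $\rho(g)$ is an orthogonal matrix. Changing basis is a conjugation, which leaves both the trace $\chi_\rho(g)$ and the condition $\rho(g)=I_d$ unchanged. We may therefore assume $\rho(g)\in O(d)$ throughout.

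Next, fix $g$ and let $m$ be its order, so that $\rho(g)^m=\rho(g^m)=\rho(e)=I_d$. Hence $\rho(g)$ is annihilated by $x^m-1$, which has distinct roots over $\mathbb C$. So $\rho(g)$ is diagonalizable over $\mathbb C$ with eigenvalues $\lambda_1,\dots,\lambda_d$, each an $m$-th root of unity with $|\lambda_i|=1$. (Alternatively, this follows directly from orthogonality.) Since $\rho(g)$ is real, $\chi_\rho(g)=\sum_i\lambda_i=\sum_i\operatorname{Re}\lambda_i\le\sum_i 1=d$. This proves the inequality.

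For the equality case, $\chi_\rho(g)=d$ forces $\operatorname{Re}\lambda_i=1$ for every $i$. Together with $|\lambda_i|=1$ this gives $\lambda_i=1$ for all $i$. A diagonalizable matrix whose eigenvalues all equal $1$ is similar to $I_d$, and the only matrix similar to $I_d$ is $I_d$ itself, so $\rho(g)=I_d$. The converse is immediate, since $\operatorname{tr}(I_d)=d$.

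The main obstacle is this equality step. Trace equal to $d$ alone does not rule out a nontrivial unipotent Jordan block, and the argument needs diagonalizability to exclude it. Diagonalizability comes from finite order (the separable polynomial $x^m-1$), so that is the point I would check carefully. The orthogonalization step is mostly cosmetic here, but it matches the paper's framing of Fourier/orthogonal representations.
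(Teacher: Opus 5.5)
The paper gives no proof of this statement. It imports the result by citation from Chughtai et al.\ and uses it only as the first step in the proof of Proposition~\ref{prop:logit-max}, so there is no in-paper argument to compare against. Your proof is correct and complete. You correctly identify that finite order makes $\rho(g)$ a root of the separable polynomial $x^m-1$, and this is what excludes a nontrivial unipotent Jordan block in the equality case. The same fact $\rho(g)^m=I_d$ already forces every eigenvalue to have modulus $1$, so the Weyl averaging step can be dropped entirely. The usual alternative assumes an orthogonal representation from the outset, as the paper's Fourier framing does, and gets diagonalizability from normality of orthogonal matrices instead. For finite groups the two routes are interchangeable.
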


\begin{proposition}
\label{prop:logit-max}
Let $J_d \cong (\mathbb{Z}/(n/d)\mathbb{Z})^\times$ be a regular $\mathcal{J}$-class with local idempotent $e_{J_d}$, and let $\rho$ be a direct sum of \emph{all} irreducible representations of $J_d$. Then for any $x \in J_d$, $\chi_\rho(x) \;\leq\; \chi_\rho(e_{J_d})$ 
with equality if and only if $x = e_{J_d}$.
\end{proposition}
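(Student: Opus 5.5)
The plan is to transport the problem to the abstract group $G=(\mathbb{Z}/(n/d)\mathbb{Z})^\times$ via the isomorphism $\phi$ of Theorem~\ref{thm:jclass-group}, and then combine Theorem~\ref{thm:chughtai} with faithfulness of $\rho$. Since $\phi$ is a group isomorphism sending $e_{J_d}$ to $1$, any representation of $J_d$ is a representation $\rho\circ\phi^{-1}$ of $G$ and conversely. The statement therefore reduces to the following claim: for a finite abelian group $G$ and $\rho$ the direct sum of all its irreducible representations, $\chi_\rho(g)\le\chi_\rho(1)$, with equality only at $g=1$.

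\textbf{Step 1: the inequality.} The value $\chi_\rho(e_{J_d})=\tr(I_D)=D$, where $D=\dim\rho$, because a representation sends the identity to the identity (Definition~\ref{def:representation}). Applying Theorem~\ref{thm:chughtai} to $\rho$ gives $\chi_\rho(x)\le D$ for every $x\in J_d$, with equality if and only if $\rho(x)=I_D$.

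\textbf{Step 2: faithfulness.} It remains to show that $\rho(x)=I_D$ forces $x=e_{J_d}$. Equivalently, $\rho$ must be faithful, and this is the main step. Suppose $\rho(x)=I$. Then $x$ acts trivially in every irreducible summand, so $x$ lies in the kernel of every irreducible representation of $G$.

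I would then invoke that the irreducible (complex) characters of a finite group separate points. One route is column orthogonality: $\sum_\chi |\chi(x)|^2=|C_G(x)|$, while $\sum_\chi \chi(1)^2=|G|$. If $\chi(x)=\chi(1)$ for all $\chi$, then $|C_G(x)|=|G|$, so this alone does not conclude. Instead I use the second orthogonality relation $\sum_\chi \overline{\chi(x)}\chi(1)=0$ for $x\ne 1$, whereas $\sum_\chi \chi(1)^2=|G|>0$; this contradiction shows $x=1$. A more hands-on route uses that $G$ is abelian: by the CRT decomposition $G\cong\prod_{p_i\nmid d}\mathbb{Z}_{p_i-1}$, the characters $g\mapsto e^{2\pi i k\cdot g}$ with $k$ ranging over all frequency vectors include one detecting each nonzero coordinate of $g\ne 1$.

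\textbf{Real versus complex.} Because the paper works with real representations, I will take ``all irreducible representations'' in the real sense, i.e.\ the real Fourier planes $\cos,\sin$ together with the one-dimensional sign characters. Each complex character $\chi$ appears, up to conjugation, inside some real irreducible $\sigma$. If $\sigma(x)=I$, then every complex constituent of $\sigma$ (namely $\chi$ or $\bar\chi$) is trivial at $x$. Hence triviality of $\rho(x)$ forces $\chi(x)=1$ for all complex irreducible $\chi$, so $x=1$ in $G$, and therefore $x=e_{J_d}$ in $J_d$.

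I expect this real/complex bookkeeping to be the only delicate point. It is also the place where the hypothesis ``all irreducibles'' is essential: if only a few key frequencies are kept, the resulting $\rho$ need not be faithful, and the equality case can fail.
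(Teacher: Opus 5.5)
Your proof is correct and has the same skeleton as the paper's. Theorem~\ref{thm:chughtai} gives $\chi_\rho(x)\le\chi_\rho(e_{J_d})$, with equality exactly when $\rho(x)=I$, and everything then reduces to faithfulness of $\rho$. The two proofs differ in how faithfulness is established.

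\textbf{The paper's route.} It writes $J_d\cong\mathbb{Z}_{q_1}\times\cdots\times\mathbb{Z}_{q_m}$ and realifies the frequency characters into sign characters and $2\times 2$ rotation blocks. It then tests $x$ only against the $m$ coordinate frequencies $k=e_j$, which forces each $p_j(x)=0$. Your ``hands-on'' alternative is essentially this argument.

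\textbf{Your main route.} You contrast the second orthogonality relation $\sum_\chi\overline{\chi(x)}\chi(1)=0$ for $x\neq 1$ with $\sum_\chi\chi(1)^2=|G|>0$. You are right that the centralizer form of column orthogonality gives nothing here, since $G$ is abelian. You then add an explicit reduction from real irreducibles to their complex constituents.

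\textbf{What each buys.} Your argument needs no coordinates and works verbatim for any finite group. Your real/complex bookkeeping also makes explicit two facts the paper takes for granted:
\begin{itemize}
\item sign characters and rotation planes exhaust the real irreducibles;
\item a trivial real block forces its complex constituents to be trivial.
\end{itemize}
The paper's coordinate argument proves more: the $m$ coordinate frequencies alone already give a faithful representation. That is exactly the phenomenon behind Remark~\ref{rem:key-freqs} on sparse key-frequency subsets. Orthogonality summed over all characters does not reveal which small subsets suffice.
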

\begin{proof}
By Theorem~\ref{thm:jclass-group}, $J_d$ is a finite abelian group with identity 
$e_{J_d}$. Since the character of a group representation is maximized at the identity element, as a corollary of Theorem~\ref{thm:chughtai}, we can infer that $\chi_\rho(x) \;\leq\; \chi_\rho(e_{J_d})$ with equality if and only if $\rho(x) = I$.

To show that $x=e_{J_d}$ is the unique element for which $\rho(x)=I$, we must show that $\rho$ is faithful, or equivalently that the intersection of the kernels of the frequency representations used in $\rho$ is trivial. By the Fundamental Theorem of Finitely Generated Abelian Groups,
\[
J_d \cong \mathbb{Z}_{q_1}\times \mathbb{Z}_{q_2}\times \cdots \times \mathbb{Z}_{q_m}.
\]
Thus any $x\in J_d$ can be uniquely represented by a coordinate tuple
\[
(p_1(x),\dots,p_m(x)), \qquad p_j(x)\in \{0,1,\dots,q_j-1\},
\]
with the identity $e_{J_d}$ corresponding to the zero vector.

Over $\mathbb{C}$, the irreducible characters of $J_d$ are indexed by frequency vectors
$k=(k_1,\dots,k_m)$ and take the form
\[
\chi_k(x)=\exp\left(2\pi i\sum_{j=1}^m \frac{k_j p_j(x)}{q_j}\right).
\]
Over $\mathbb{R}$, a real-valued order-two character gives a one-dimensional sign representation; this includes the coordinate characters coming from factors with $q_j=2$. A non-real character realifies, together with its complex conjugate, to the two-dimensional rotation block
\[
\rho_k(x)=
\begin{pmatrix}
\cos(\theta_k(x)) & -\sin(\theta_k(x)) \\
\sin(\theta_k(x)) & \cos(\theta_k(x))
\end{pmatrix},
\qquad
\theta_k(x)=2\pi\sum_{j=1}^m \frac{k_j p_j(x)}{q_j}.
\]
If $\rho=\bigoplus_{k\in K}\rho_k$ is the direct sum of the frequency representations used by the readout, then $\rho(x)=I$ if and only if every selected frequency is trivial on $x$, i.e.
\[
x\in \bigcap_{k\in K}\ker(\rho_k).
\]
Therefore $\rho$ is faithful exactly when
\[
\bigcap_{k\in K}\ker(\rho_k)=\{e_{J_d}\}.
\]
Under this faithfulness condition, $x=e_{J_d}$ is the unique group element satisfying $\rho(x)=I$.

Since $\rho = \bigoplus_k\rho_k$ is a direct sum of all such irreducible representations, $\rho(x) = I$ implies that every single diagonal block must be the identity. In other words, for any block $k$, we must have $\theta_{{k}}(x) = 0 \mod{2\pi}$. Let $j \in \{1, ..., m\}$ and let $k$ be the frequency vector where $k_j = 1$ and all other entries are 0. Since the previous identity holds for \emph{all} diagonal blocks $k$, we have now $\theta_{{k}}(x) = 2\pi \frac{p_j(x)}{q_j} = 0 \mod 2\pi$, which directly implies $\frac{p_j(x)}{q_j} \in \Z$; in other words, $p_j(x)$ is a multiple of $q_j$. Recall that $p_j(x) \in \Z_{q_j}$, meaning the only possible solution is $p_j(x) = 0$. 

Because this is true for every coordinate $j$, we have that $x = (0,\ldots, 0)$. This uniquely identifies $x$ as the identity $e_{J_d}$.

\end{proof}
\begin{remark}
    \label{rem:key-freqs}
    Since we only require $\rho$ to be faithful in the proof above, $\rho$ does not need to be a direct sum of \emph{every} irrep for \ref{prop:logit-max} to hold; it only needs to be a direct sum $\rho = \bigoplus_{i \in S} \rho _i$ over any set $S$ where $\bigcap_{i \in S} \ker \rho_i = \{e_{J_d}\}$. 

    This is a crucial fact that factors into how models choose subsets of key frequencies, creating subspaces within the entire input space.
\end{remark}
\textbf{Connection to GCR and the Monoid Extension.} The Clifford--Munn--Ponizovski\u{\i} theorem is precisely the algebraic justification for the Monoid Extension proposed in Section~\ref{sec:monoid-ext}. The GCR algorithm of \citet{chughtai2023toy} operates exclusively within the group of units $J_1 = (\mathbb{Z}/n\mathbb{Z})^\times$, which is a single regular $\mathcal{J}$-class. The CMP theorem tells us that the \emph{entire} representation theory of the monoid decomposes as a direct sum of group representations, one per regular $\mathcal{J}$-class. Our Monoid Extension operationalizes this: the network routes each computation into the appropriate $\mathcal{J}$-class and then applies GCR-style character-based scoring \emph{within} that class using the local group structure guaranteed by Theorem~\ref{thm:jclass-group}.

In the group-only case studied by \citet{chughtai2023toy}, there is a single $\mathcal{J}$-class ($J_1$ itself), and CMP reduces to the classical representation theory of finite abelian groups. The Monoid Extension is therefore a strict generalization: it applies CMP class by class across all regular $\mathcal{J}$-classes of the monoid, replacing the single global inverse $c^{-1}$ of GCR with a local inverse $c^\sharp$ defined within each class. Along with Proposition \ref{prop:logit-max} and Remark \ref{rem:key-freqs}, the model is theoretically able to uniquely single out each element through local inverses and unembedding logic as outlined in GCR.
\section{Broader Mechanistic Interpretability}
To fully understand how neural networks reason, mechanistic interpretability must expand its focus beyond globally invertible operations. While our analysis isolates non-invertible structures within a mathematical toy model, this framework offers critical insights into the fundamentally non-invertible nature of real-world language modeling.

Most mechanistic analyses of algorithmic tasks have focused on invertible or group-like structure, such as modular addition and finite group composition. This makes the representation-theoretic mechanism especially clean: candidate outputs can be scored by comparing against a global inverse. In contrast, many computations in language models are not naturally invertible. Sequence processing, retrieval, and compression into the residual stream often require the model to preserve some information while discarding or routing other information.

A related body of LLM interpretability work studies sequence-level circuits such as induction heads, which detect and continue repeated subsequences. These mechanisms show that transformers learn structured algorithms over token sequences, but they are typically not framed as algebraic monoid computations. Our modular multiplication setting provides a finite testbed for one aspect of this broader problem: how a transformer organizes computation when global inverse-based decoding is unavailable. The resulting $\mathcal J$-class stratification should not be viewed as a direct model of natural language, but as a controlled example of how representation-theoretic mechanisms can localize to algebraic substructures under non-invertible composition.
\section{Other Moduli and Stability Analysis}

We evaluate the stability of the learned Fourier structure and embedding geometry across random initializations and moduli $n \in \{113, 143, 154, 165\}$. For each setting, we select representative seeds to visualize variation in spectral structure and PCA geometry.

This section evaluates whether the structural phenomena identified in Section~\ref{sec:monoid-ext} persist beyond a single trained model, and whether they are stable under changes in initialization.

Specifically, we examine the consistency of four aspects of representation:

(i) \textbf{Embedding geometry}, through Fourier analysis and PCA-based fraction-of-variance-explained (FVE) measurements, to assess whether $\mathcal{J}$-class subspaces and their spectral decompositions are preserved across seeds and moduli.

(ii) \textbf{Torus and CRT structure}, through unrolled embeddings aligned with  $\mathcal{J}$-class orderings, to test whether the implicit cyclic and product structure of $\mathbb{Z}_n$ is consistently recovered.

(iii) \textbf{Attention routing}, by analyzing head-wise attention patterns and their alignment with $\mathcal{J}$-class partitions, to determine whether the routing mechanism exhibits stable block-structure and frequency-sensitive behavior.

(iv) \textbf{MLP feature formation}, by examining neuron-wise Fourier spectra and activation clustering, to verify whether intermediate nonlinear representations consistently encode the same multiplicative substructures across seeds.

Together, these analyses provide a multi-scale stability test of the monoid extension hypothesis, allowing us to distinguish incidental structure from reproducible algorithmic behavior.

\subsection{Embedding Dimensionality and Fourier Structure}
We first look at the PCA dimensionality required to explain 95\% of the variance across various values of $n$, averaged across runs on multiple seeds. We see across variations of our experiment, the relatively low dimensionality of our embedding matrix remains consistent.

\begin{table}[h]
\centering
\caption{
PCA dimensionality required to explain 95\% of variance across 10 random seeds.
Values are reported as mean $\pm$ std over seeds, with min--max range included for stability characterization.
Final FVE is averaged across seeds.
}
\begin{tabular}{c|c|c|c|c}
\hline
\textbf{n} & \textbf{Mean PCs} & \textbf{Std} & \textbf{Min--Max PCs} & \textbf{Mean FVE (\%)} \\
\hline
165 & 10.0 & 0.6 & 9 -- 11 & 96.16 \\
143 & 10.9 & 1.6 & 8 -- 14 & 95.89 \\
113 & 9.6  & 1.7 & 7 -- 13 & 95.90 \\
154 & 10.1 & 0.9 & 9 -- 12 & 95.99 \\
\hline
\end{tabular}
\label{tab:pca_variance_summary}
\end{table}

\newpage 

\begin{figure*}[!t]
    \centering

    \textbf{$N = 165$}\\[0.2cm]

    \begin{subfigure}[t]{0.24\textwidth}
        \centering
        \includegraphics[width=\linewidth]{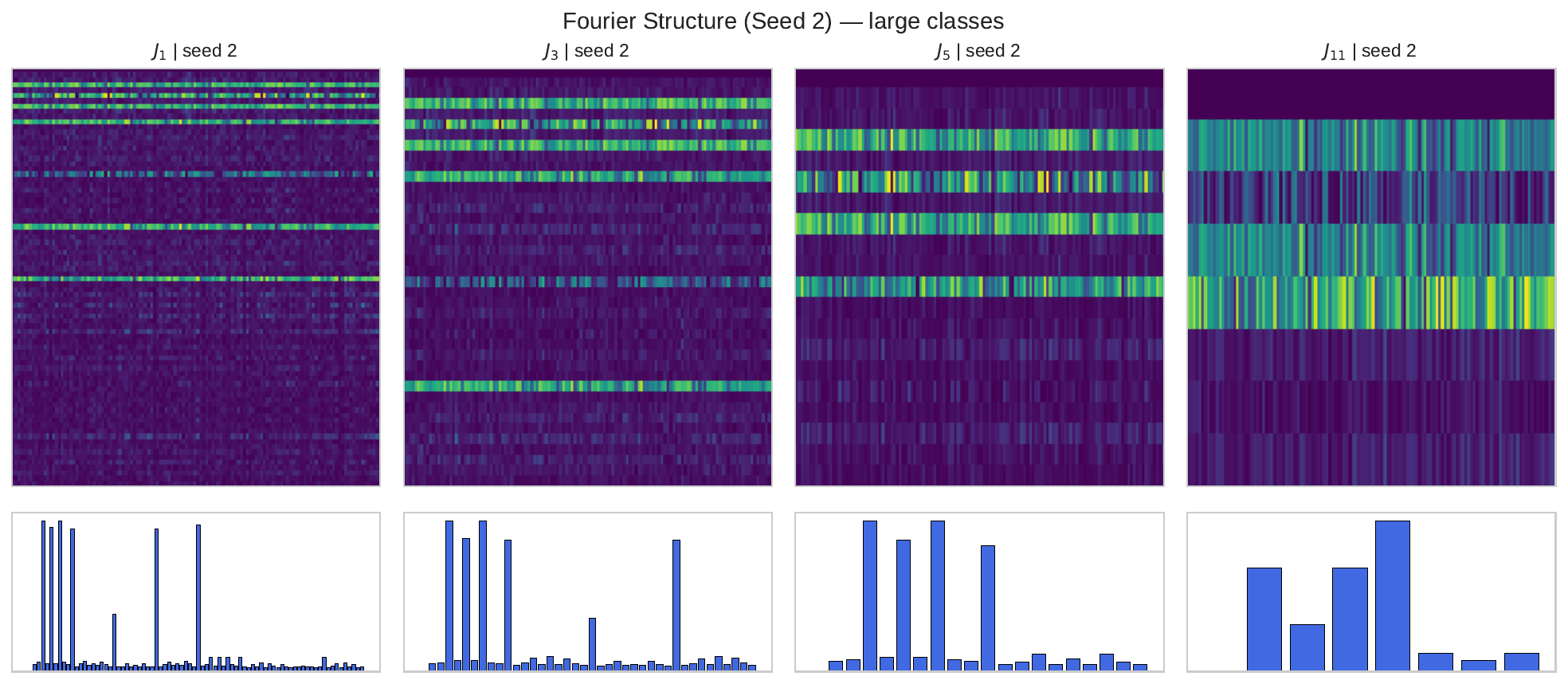}
        \caption{Seed 2}
    \end{subfigure}
    \hfill
    \begin{subfigure}[t]{0.24\textwidth}
        \centering
        \includegraphics[width=\linewidth]{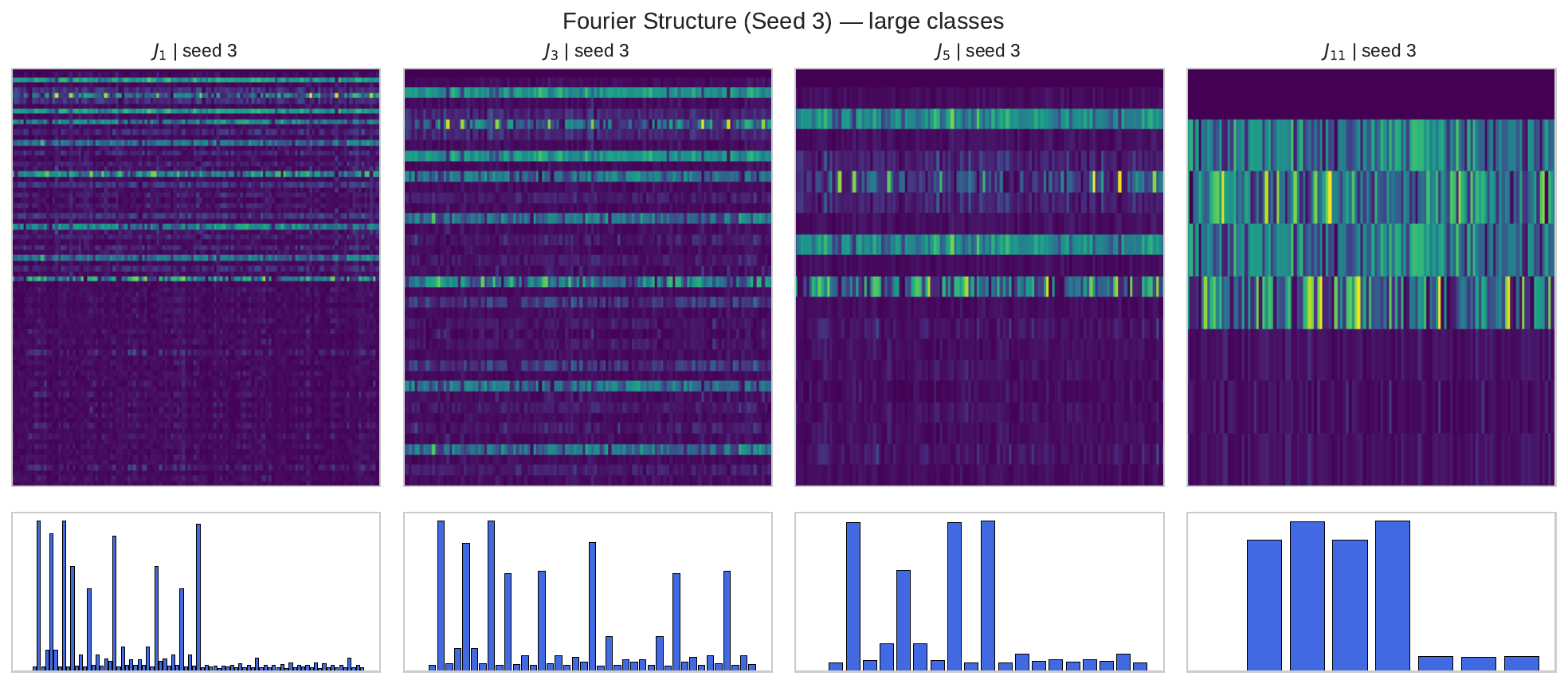}
        \caption{Seed 3}
    \end{subfigure}
    \hfill
    \begin{subfigure}[t]{0.24\textwidth}
        \centering
        \includegraphics[width=\linewidth]{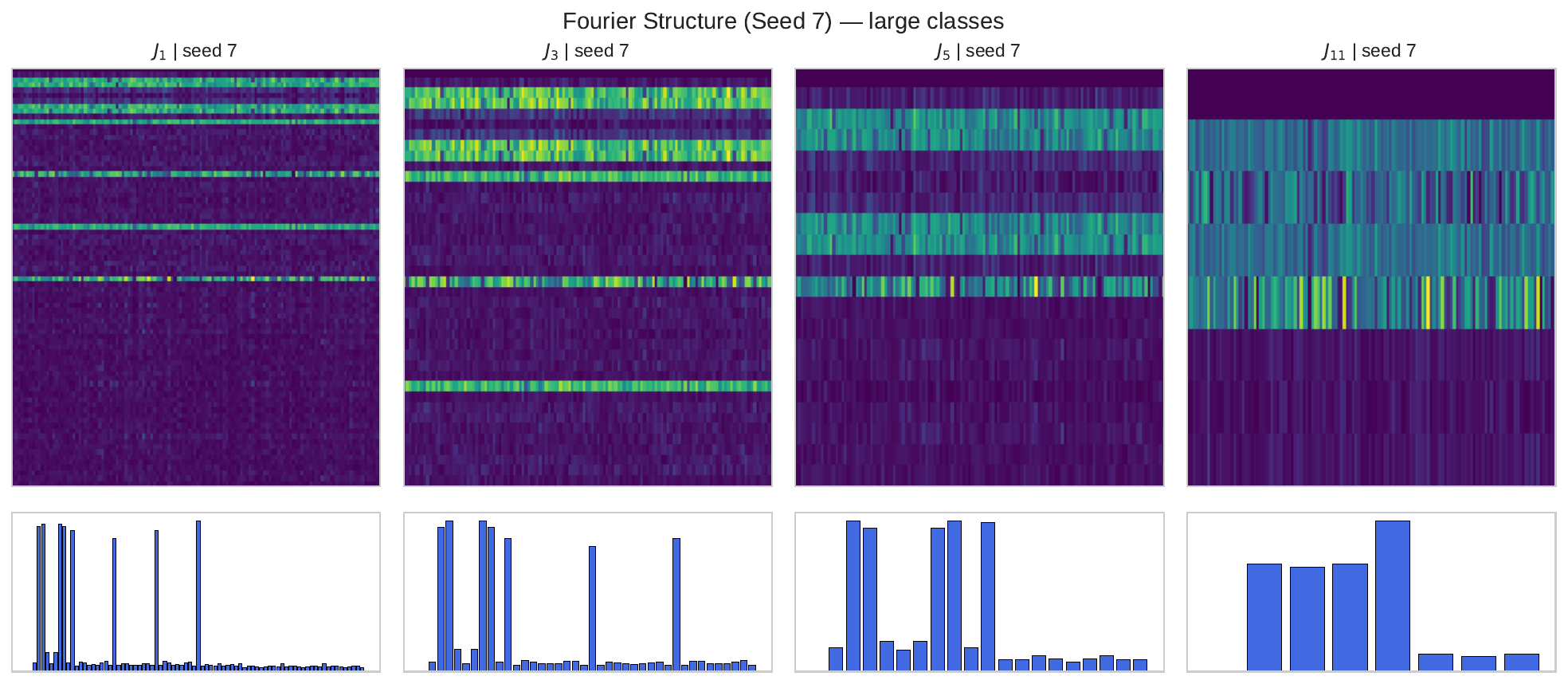}
        \caption{Seed 7}
    \end{subfigure}
    \hfill
    \begin{subfigure}[t]{0.24\textwidth}
        \centering
        \includegraphics[width=\linewidth]{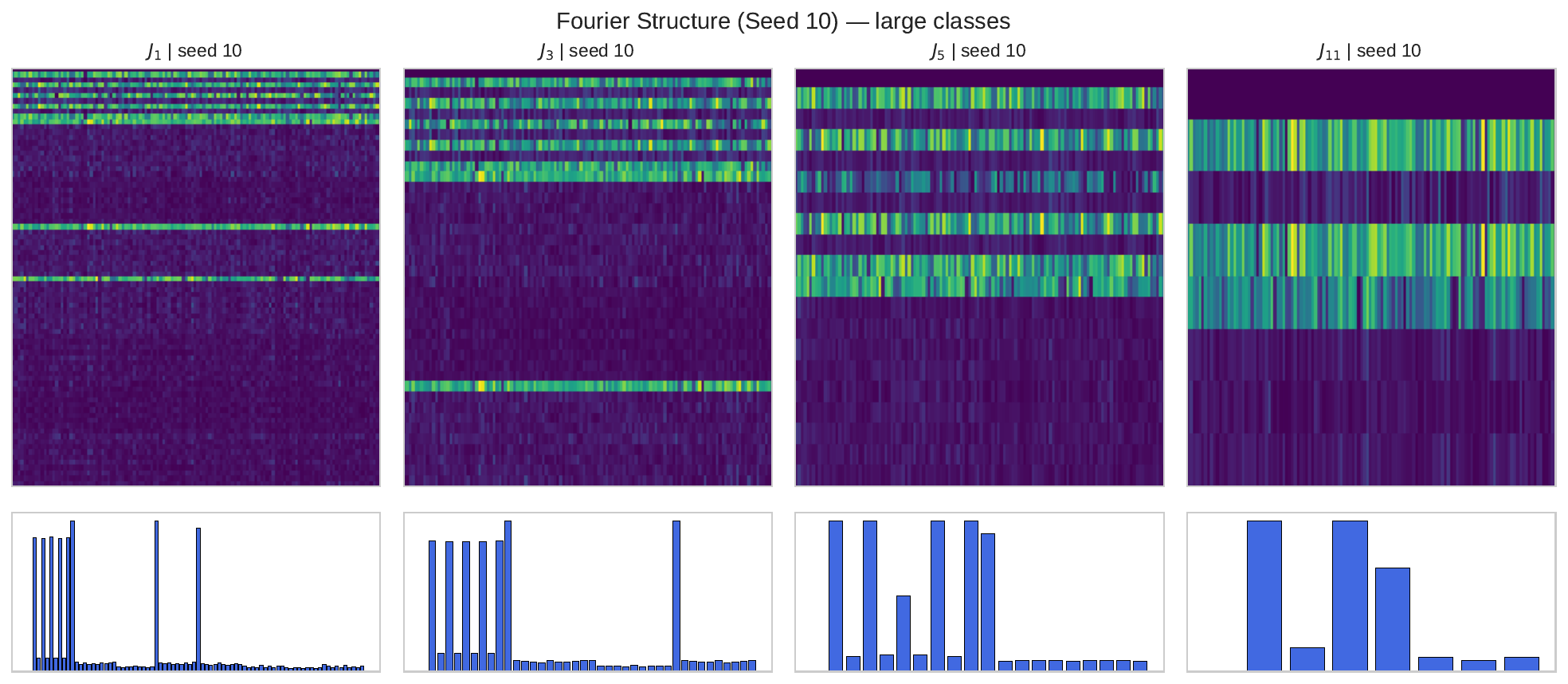}
        \caption{Seed 10}
    \end{subfigure}

    \vspace{0.4cm}

    \textbf{$N = 154$}\\[0.2cm]

    \begin{subfigure}[t]{0.24\textwidth}
        \centering
        \includegraphics[width=\linewidth]{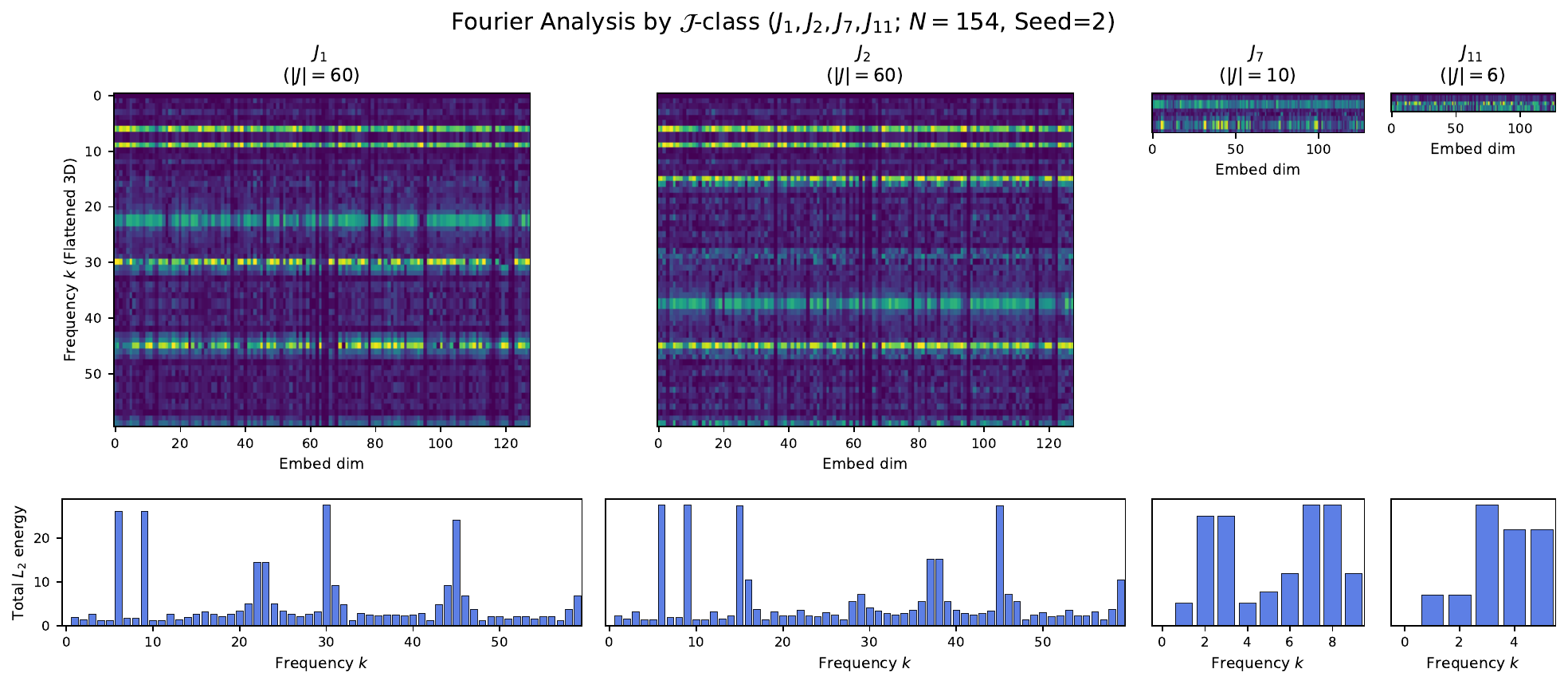}
        \caption{Seed 2}
    \end{subfigure}
    \hfill
    \begin{subfigure}[t]{0.24\textwidth}
        \centering
        \includegraphics[width=\linewidth]{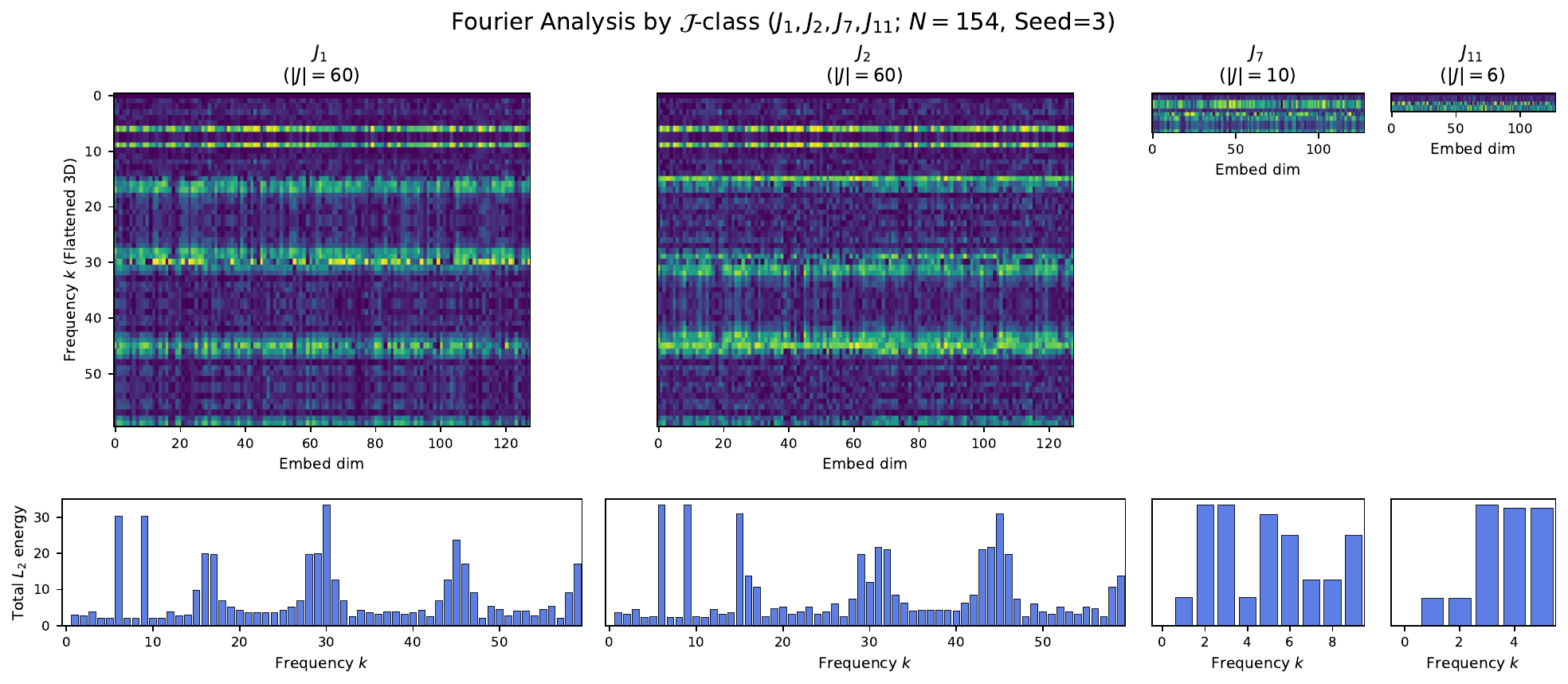}
        \caption{Seed 3}
    \end{subfigure}
    \hfill
    \begin{subfigure}[t]{0.24\textwidth}
        \centering
        \includegraphics[width=\linewidth]{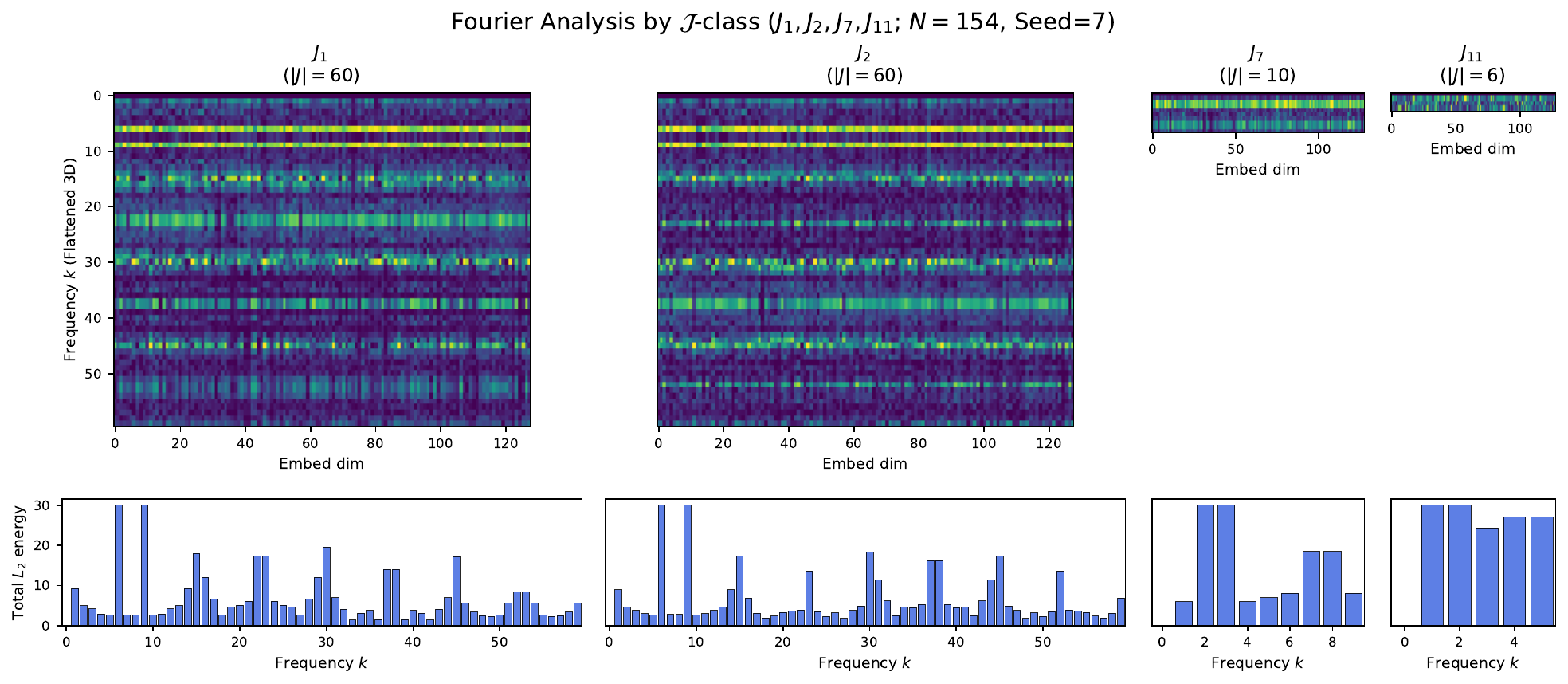}
        \caption{Seed 7}
    \end{subfigure}
    \hfill
    \begin{subfigure}[t]{0.24\textwidth}
        \centering
        \includegraphics[width=\linewidth]{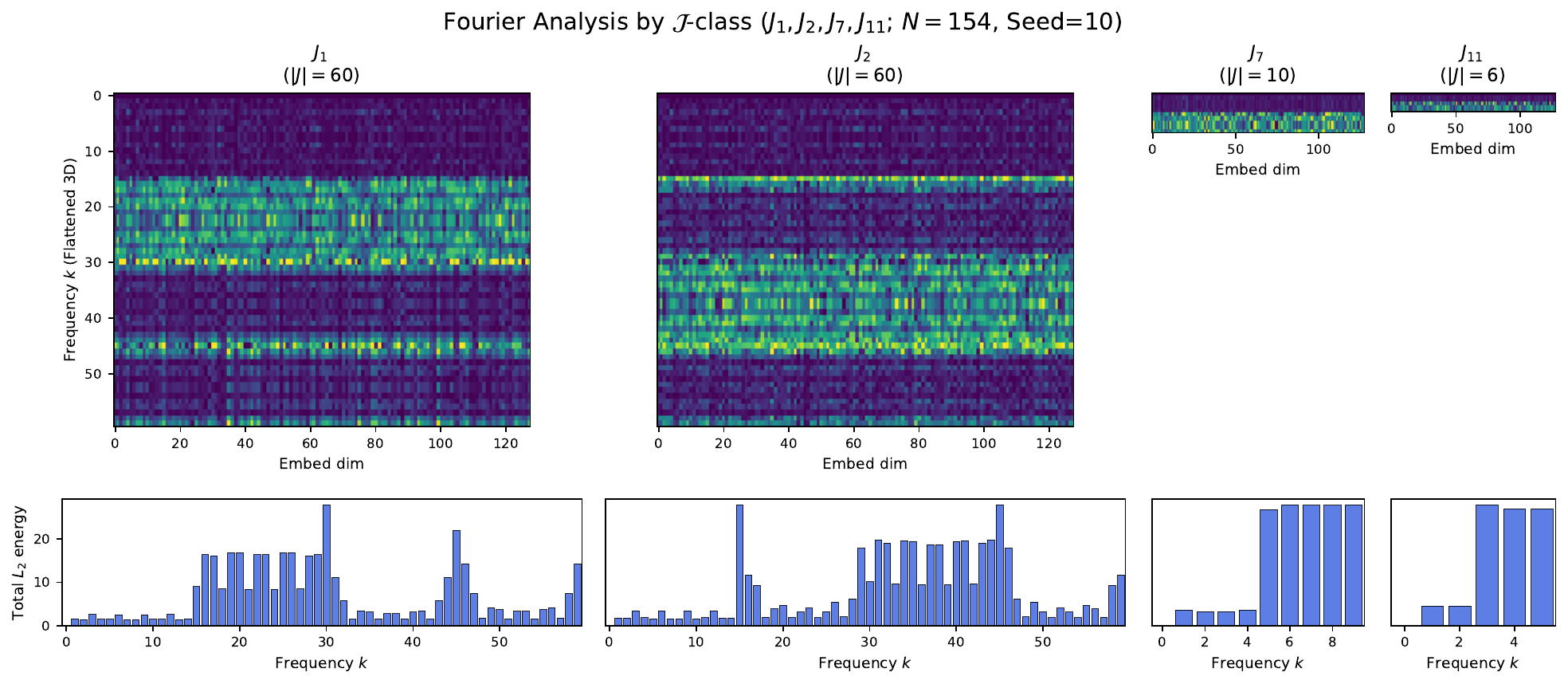}
        \caption{Seed 10}
    \end{subfigure}

    \vspace{0.4cm}

    \textbf{$N = 143$}\\[0.2cm]

    \begin{subfigure}[t]{0.24\textwidth}
        \centering
        \includegraphics[width=\linewidth]{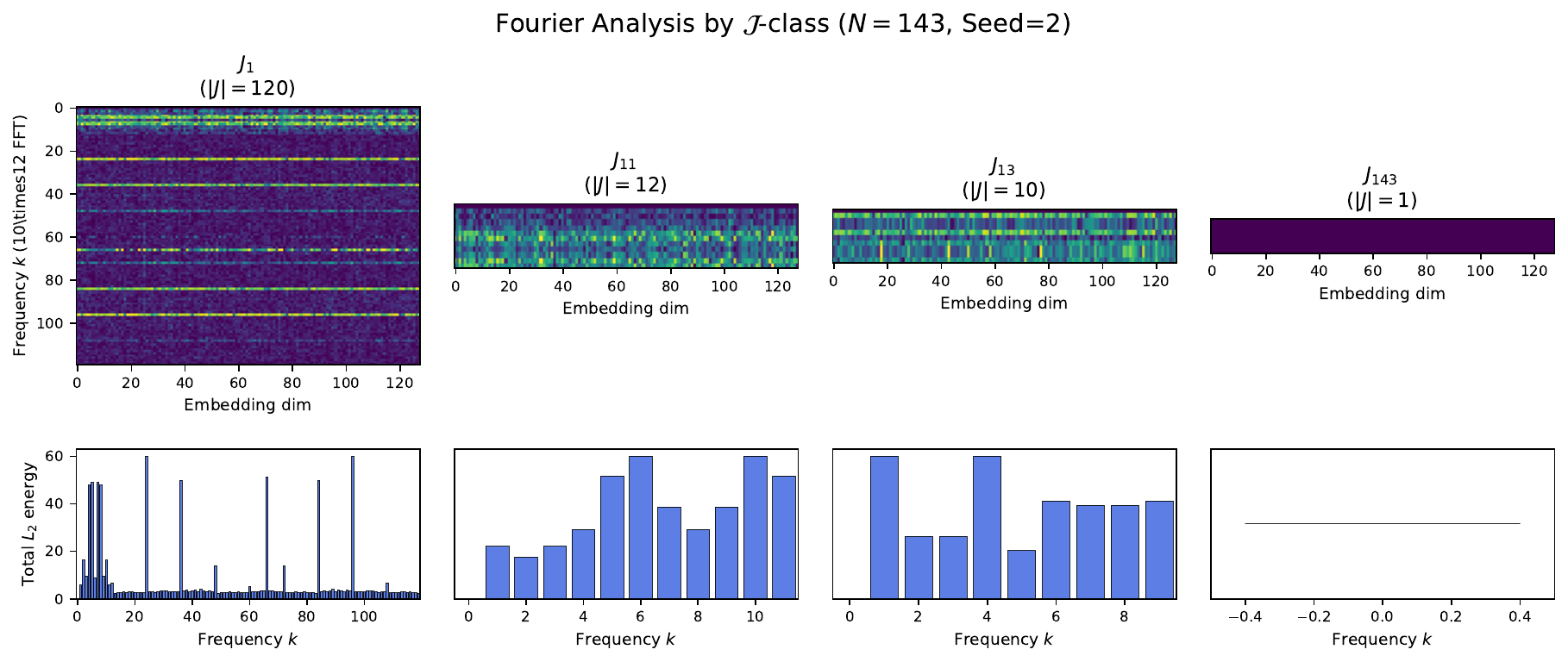}
        \caption{Seed 2}
    \end{subfigure}
    \hfill
    \begin{subfigure}[t]{0.24\textwidth}
        \centering
        \includegraphics[width=\linewidth]{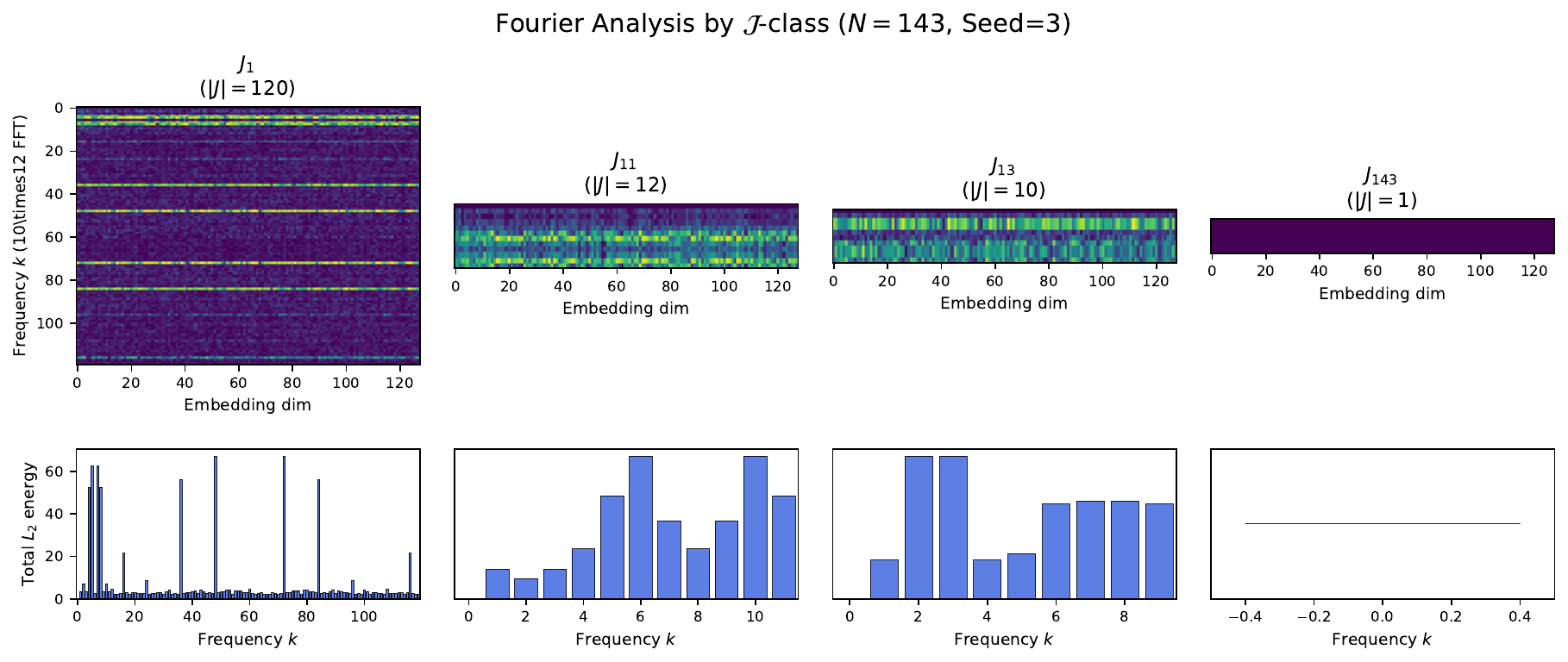}
        \caption{Seed 3}
    \end{subfigure}
    \hfill
    \begin{subfigure}[t]{0.24\textwidth}
        \centering
        \includegraphics[width=\linewidth]{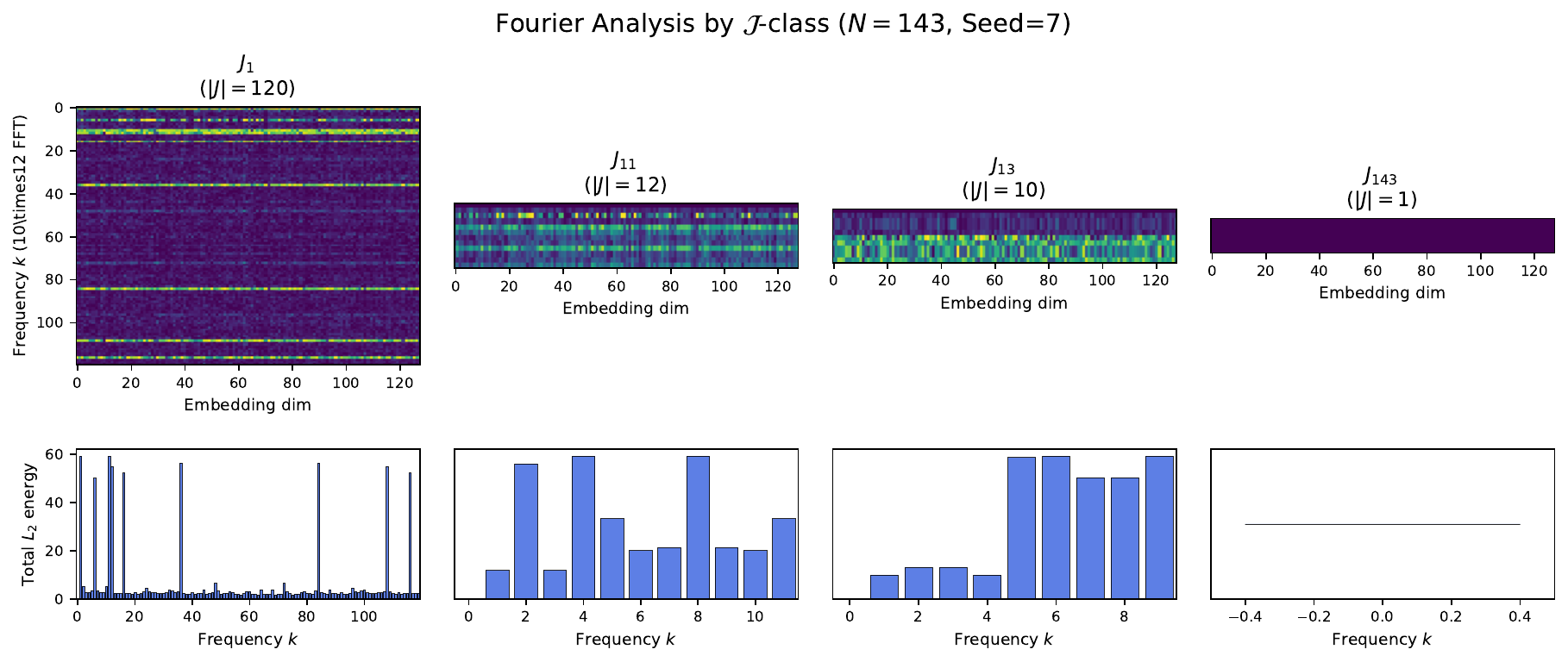}
        \caption{Seed 7}
    \end{subfigure}
    \hfill
    \begin{subfigure}[t]{0.24\textwidth}
        \centering
        \includegraphics[width=\linewidth]{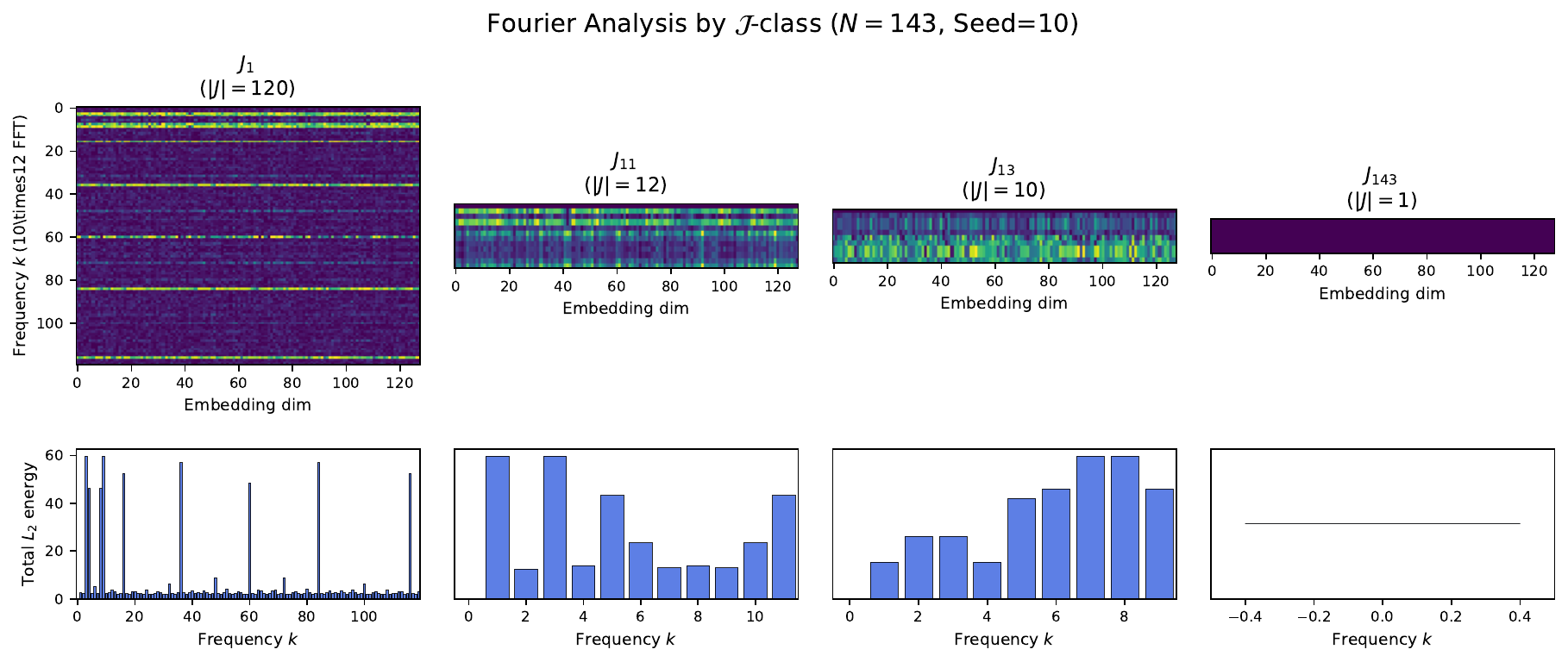}
        \caption{Seed 10}
    \end{subfigure}

    \vspace{0.4cm}

    \textbf{$N = 113$}\\[0.2cm]

    \begin{subfigure}[t]{0.24\textwidth}
        \centering
        \includegraphics[width=\linewidth]{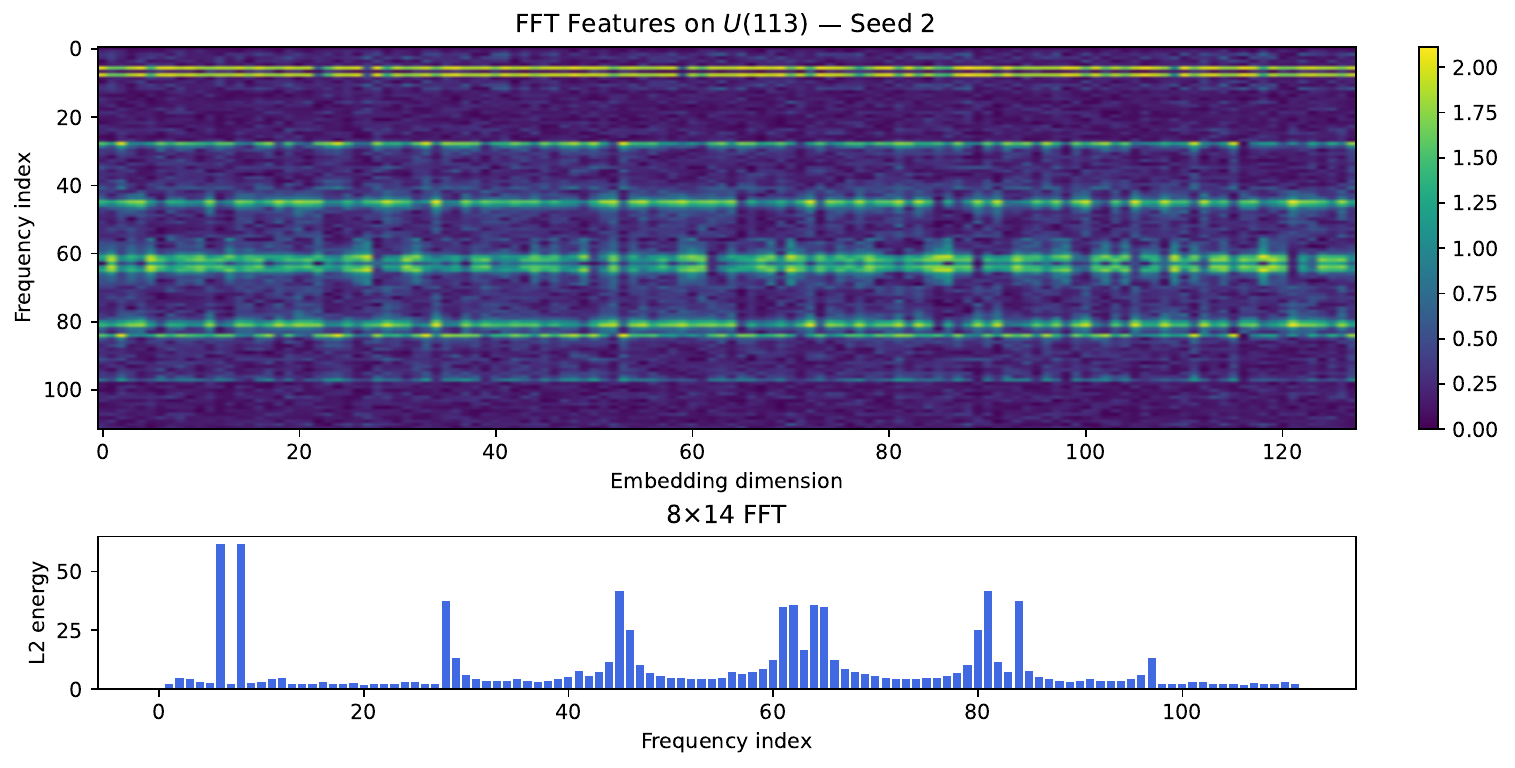}
        \caption{Seed 2}
    \end{subfigure}
    \hfill
    \begin{subfigure}[t]{0.24\textwidth}
        \centering
        \includegraphics[width=\linewidth]{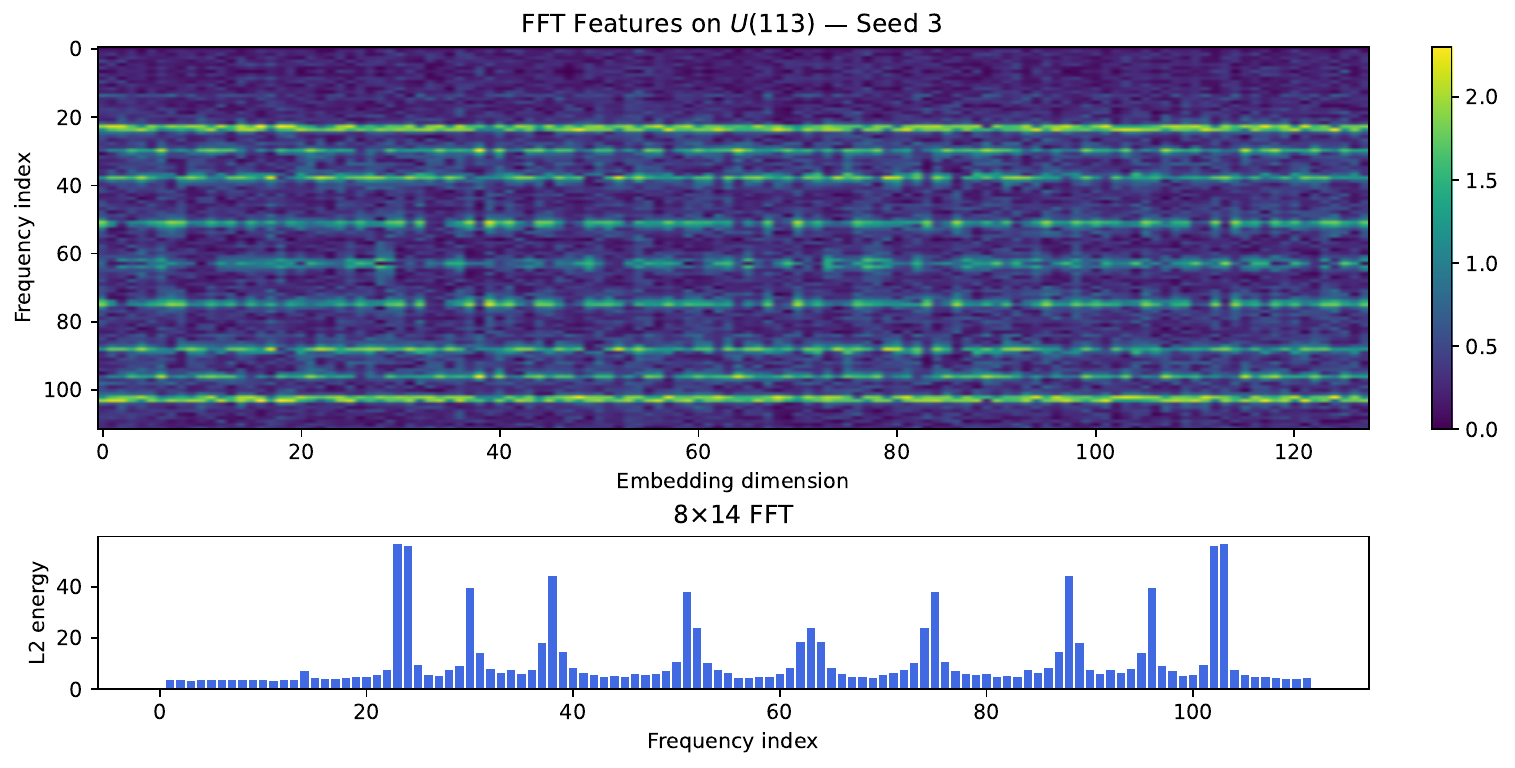}
        \caption{Seed 3}
    \end{subfigure}
    \hfill
    \begin{subfigure}[t]{0.24\textwidth}
        \centering
        \includegraphics[width=\linewidth]{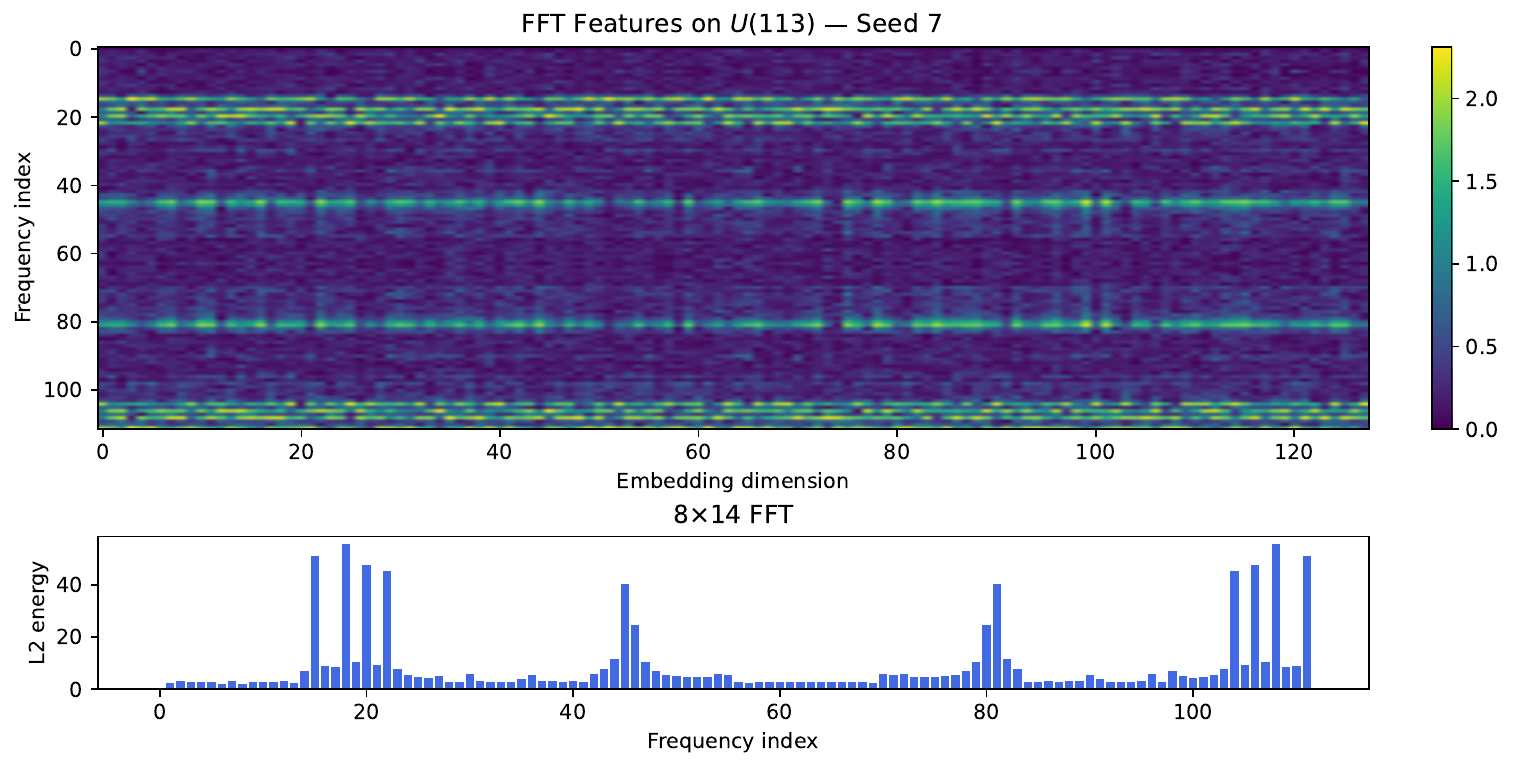}
        \caption{Seed 7}
    \end{subfigure}
    \hfill
    \begin{subfigure}[t]{0.24\textwidth}
        \centering
        \includegraphics[width=\linewidth]{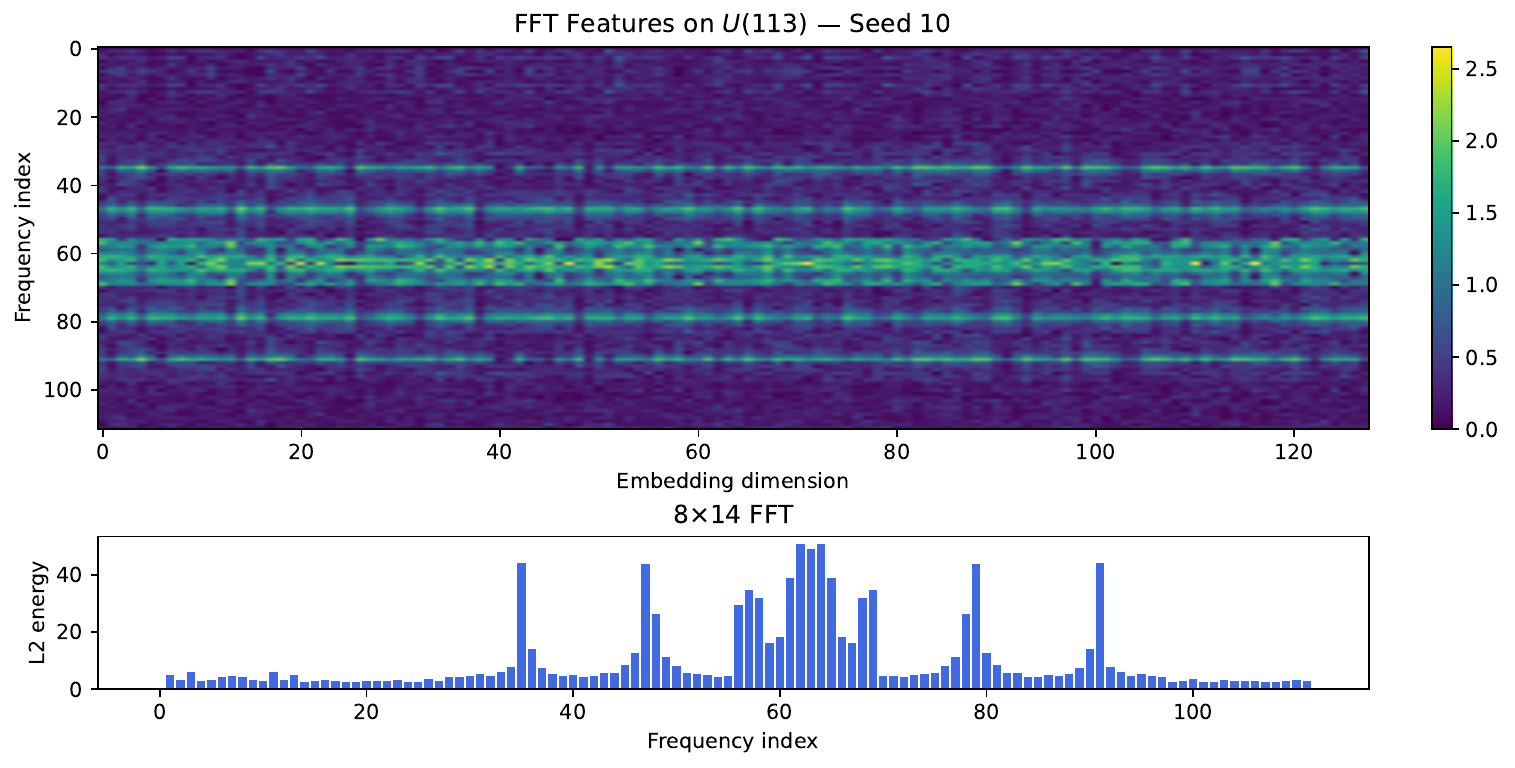}
        \caption{Seed 10}
    \end{subfigure}

    \caption{
    \textbf{Fourier stability across seeds and moduli.}
    We show $\mathcal{J}$-aligned Fourier spectra of embedding matrices across representative seeds for each modulus $n$. While the precise locations of dominant frequency peaks vary across random initializations, the overall block-structured spectral organization remains consistent, indicating that the learned representation is stable under stochastic optimization.
    }

    \label{fig:fft_stability_clean}
\end{figure*}

\subsection{Torus and CRT Structure of Embeddings}

We then explore the 3D projection of principal components of the embedding matrix. We see that the embedding organizes itself into clusters, grouped by $\mathcal{J}$-class. We observe a torus structure in the high dimensional space the the model operates in. Moreover, we see that across various random initializations and moduli, we consistently observe these toroidal patterns in the projection.

\newpage
\begin{figure*}[!t]
    \centering

    \textbf{$N = 165$}\\[0.2cm]

    \begin{subfigure}[t]{0.24\textwidth}
        \centering
        \includegraphics[width=\linewidth]{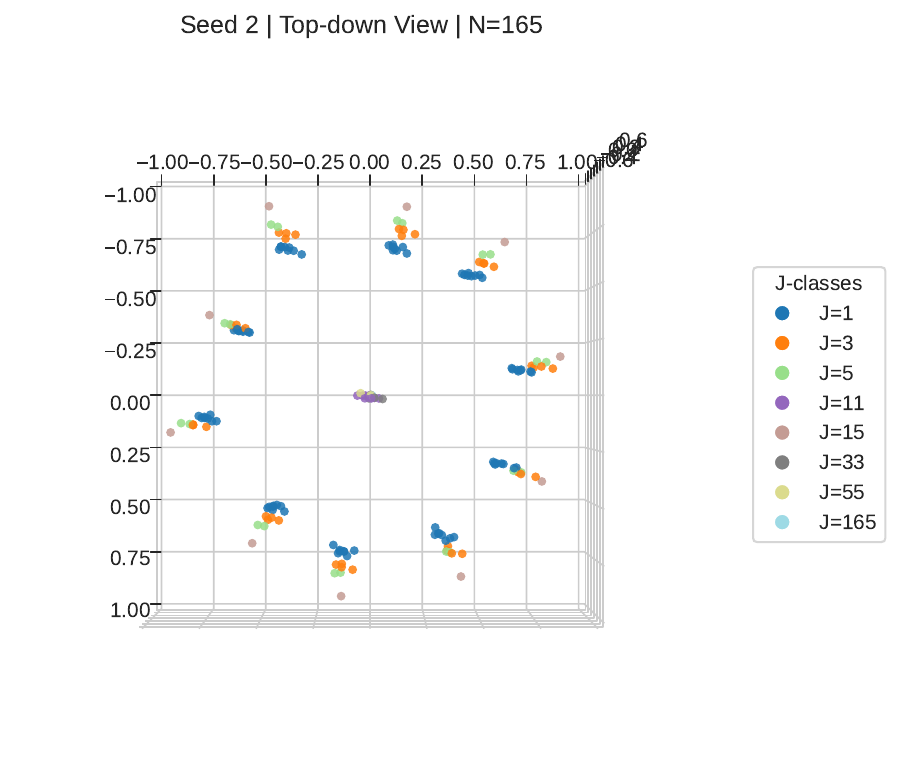}
        \caption{Seed 2}
    \end{subfigure}
    \hfill
    \begin{subfigure}[t]{0.24\textwidth}
        \centering
        \includegraphics[width=\linewidth]{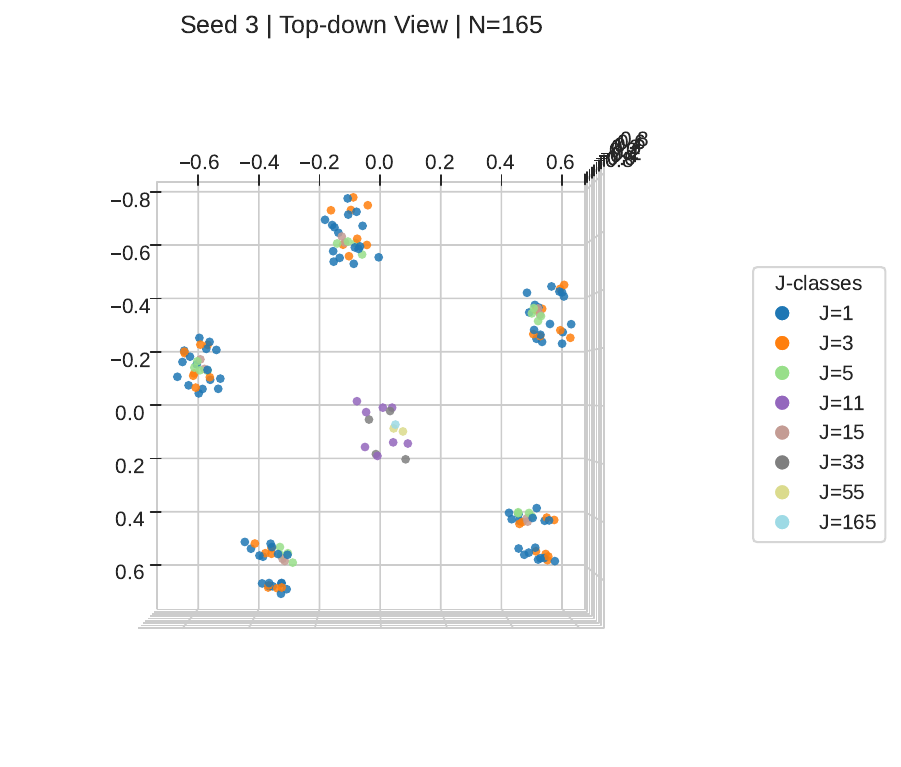}
        \caption{Seed 3}
    \end{subfigure}
    \hfill
    \begin{subfigure}[t]{0.24\textwidth}
        \centering
        \includegraphics[width=\linewidth]{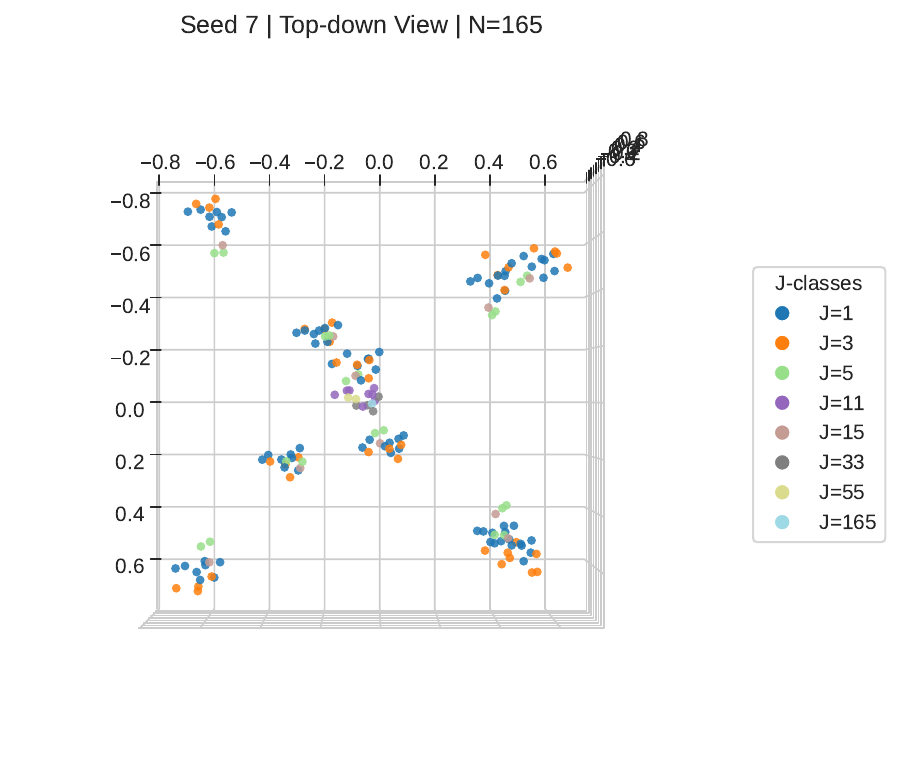}
        \caption{Seed 7}
    \end{subfigure}
    \hfill
    \begin{subfigure}[t]{0.24\textwidth}
        \centering
        \includegraphics[width=\linewidth]{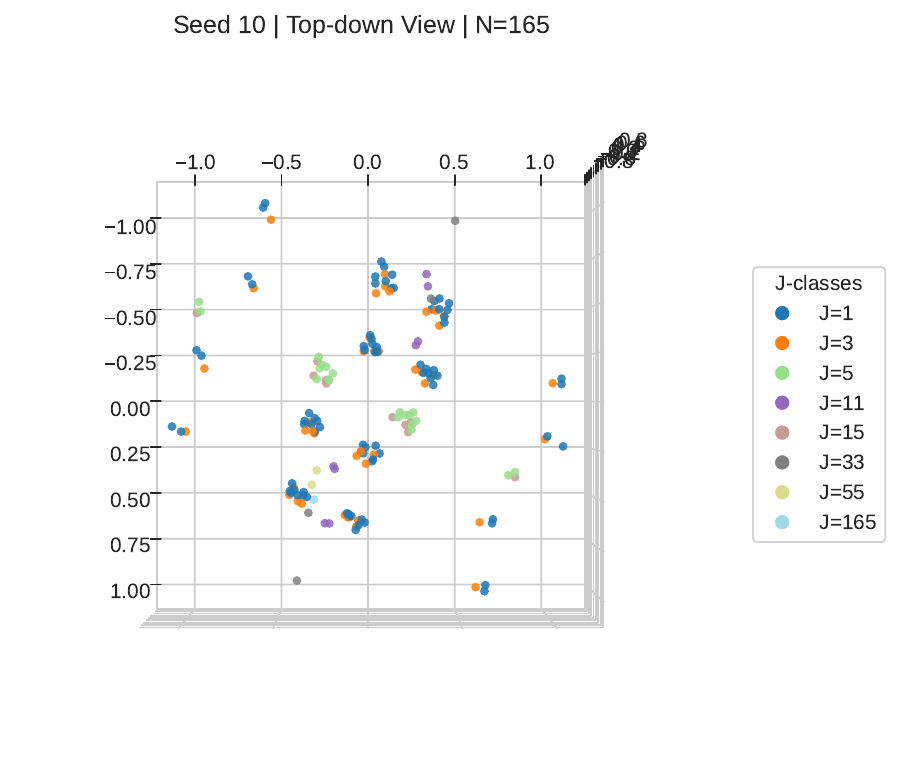}
        \caption{Seed 10}
    \end{subfigure}

    \vspace{0.4cm}

    \textbf{$N = 154$}\\[0.2cm]
    \begin{subfigure}[t]{0.24\textwidth}
        \centering
        \includegraphics[width=\linewidth]{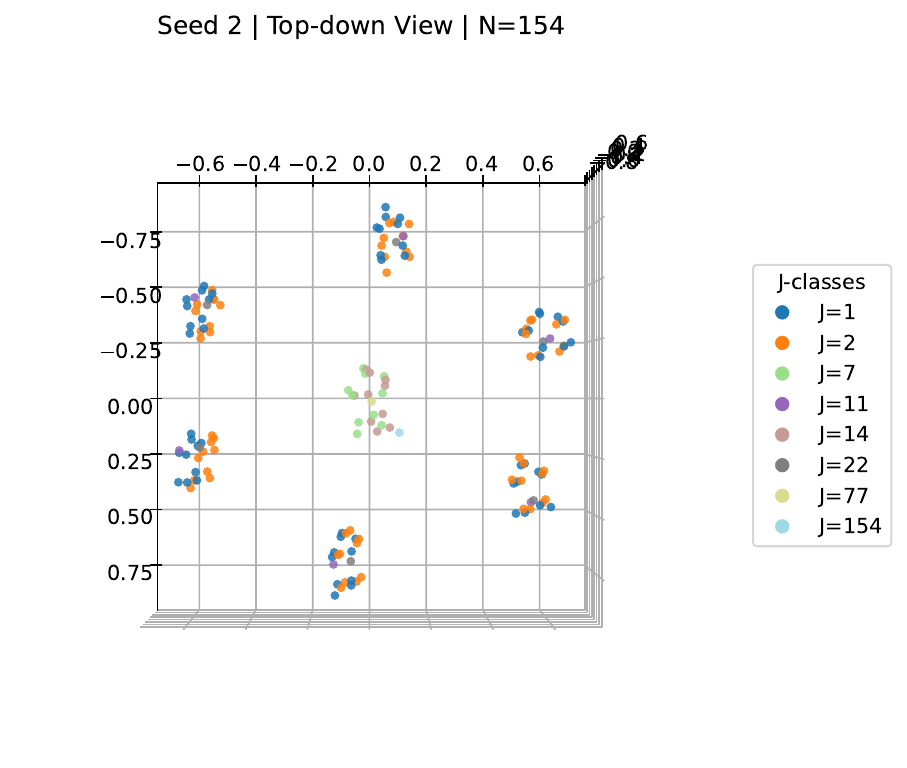}
        \caption{Seed 2}
    \end{subfigure}
    \hfill
    \begin{subfigure}[t]{0.24\textwidth}
        \centering
        \includegraphics[width=\linewidth]{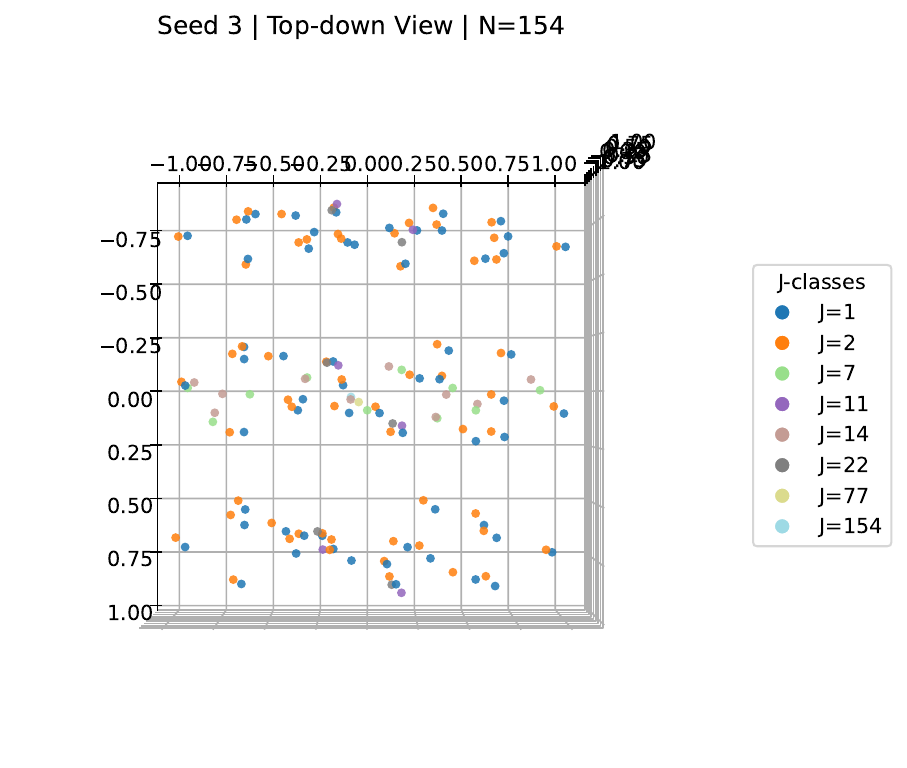}
        \caption{Seed 3}
    \end{subfigure}
    \hfill
    \begin{subfigure}[t]{0.24\textwidth}
        \centering
        \includegraphics[width=\linewidth]{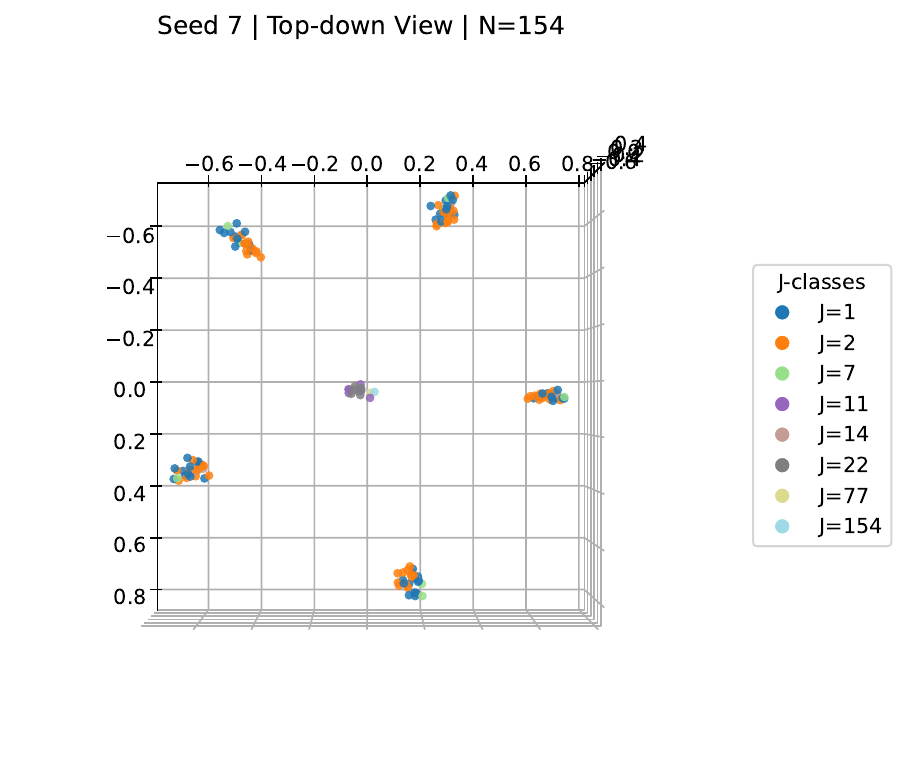}
        \caption{Seed 7}
    \end{subfigure}
    \hfill
    \begin{subfigure}[t]{0.24\textwidth}
        \centering
        \includegraphics[width=\linewidth]{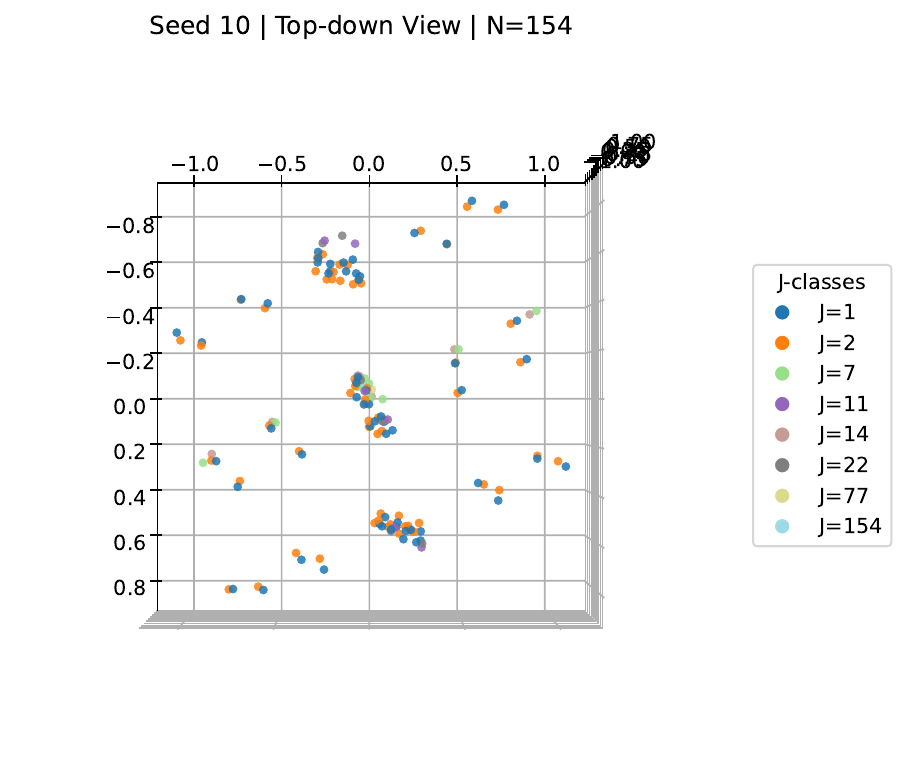}
        \caption{Seed 10}
    \end{subfigure}

    \vspace{0.4cm}

    \textbf{$n = 113, 143$}\\[0.2cm]

    \begin{subfigure}[t]{0.24\textwidth}
        \centering
        \includegraphics[width=\linewidth]{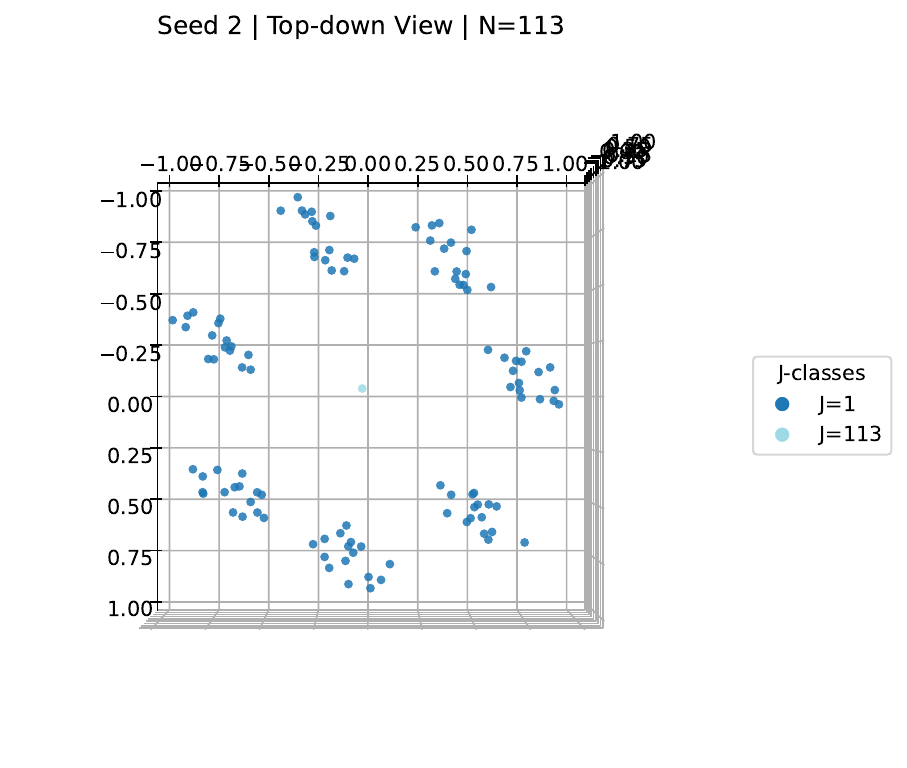}
        \caption{Seed 2}
    \end{subfigure}
    \hfill
    \begin{subfigure}[t]{0.24\textwidth}
        \centering
        \includegraphics[width=\linewidth]{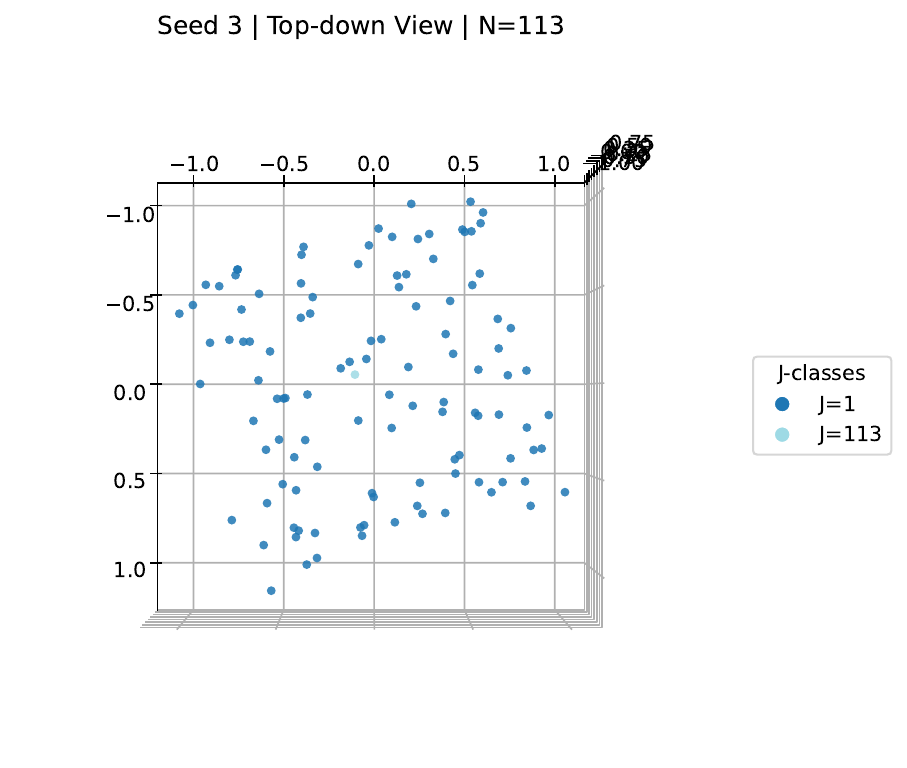}
        \caption{Seed 3}
    \end{subfigure}
    \hfill
    \begin{subfigure}[t]{0.24\textwidth}
        \centering
        \includegraphics[width=\linewidth]{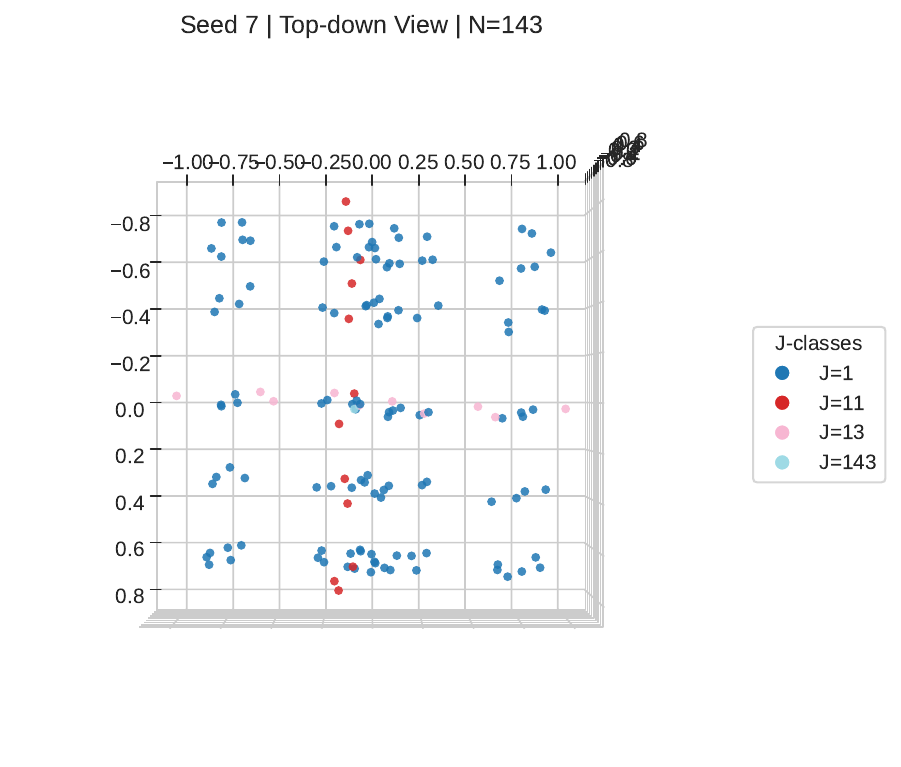}
        \caption{Seed 7}
    \end{subfigure}
    \hfill
    \begin{subfigure}[t]{0.24\textwidth}
        \centering
        \includegraphics[width=\linewidth]{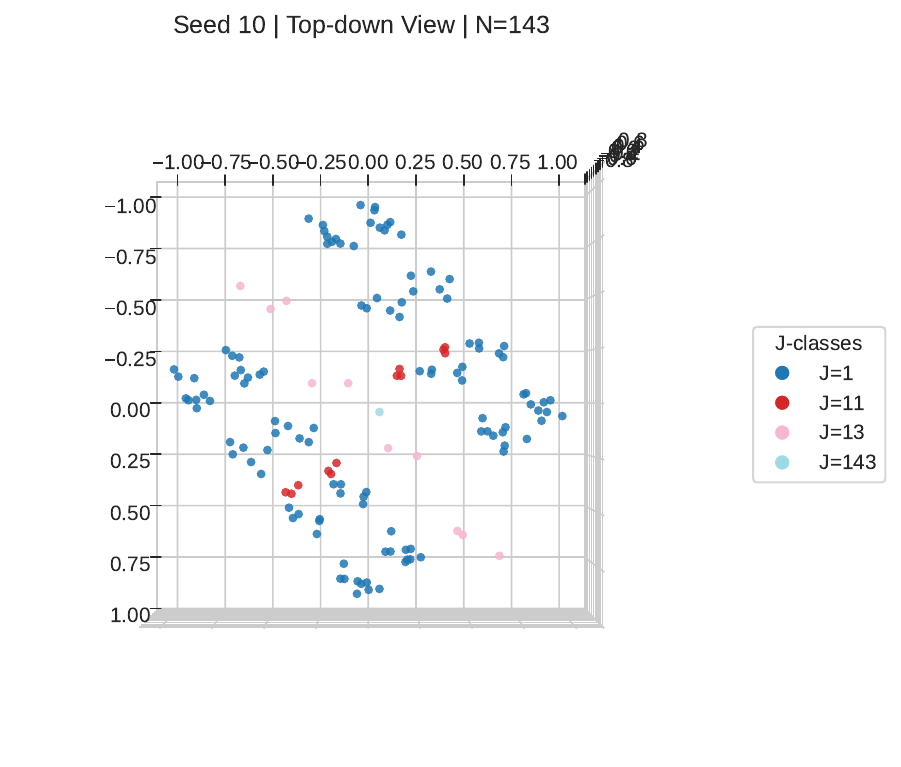}
        \caption{Seed 10}
    \end{subfigure}

    \caption{
    \textbf{PCA geometry stability across seeds and moduli.}
    Across all settings, embeddings consistently organize into structured low-dimensional manifolds aligned with $\mathcal{J}$-class structure.
    }

    \label{fig:pca_stability_clean}
\end{figure*}

\newpage

\subsection{Attention Routing Stability}

We further examine representational stability by visualizing attention maps across multiple random seeds and moduli $n \in \{113, 143, 154, 165\}$. For each setting, we analyze a single attention head and compute full attention matrices over input pairs $(a, b)$. For each input pair, we calculate and plot the attention score that the $=$ token pays to the $a$ token.

To expose algebraic structure, we reorder inputs according to their $\mathcal{J}$-class decomposition in $\mathbb{Z}_n$, as induced by the GCR partitioning framework \cite{chughtai2023toy}. This ordering reveals block structures corresponding to shared GCDs.

We observe that while individual attention values and patterns vary across seeds, the coarse block structure remains highly stable. This suggests that the attention mechanism consistently learns to route information according to the underlying $\mathcal{J}$-classes, independent of initialization.

\newpage

\begin{figure*}[!hb]
    \centering

\textbf{$n = 113$} \\[0.8em]

\begin{subfigure}[t]{0.95\textwidth}
    \centering
    \includegraphics[width=0.78\linewidth]{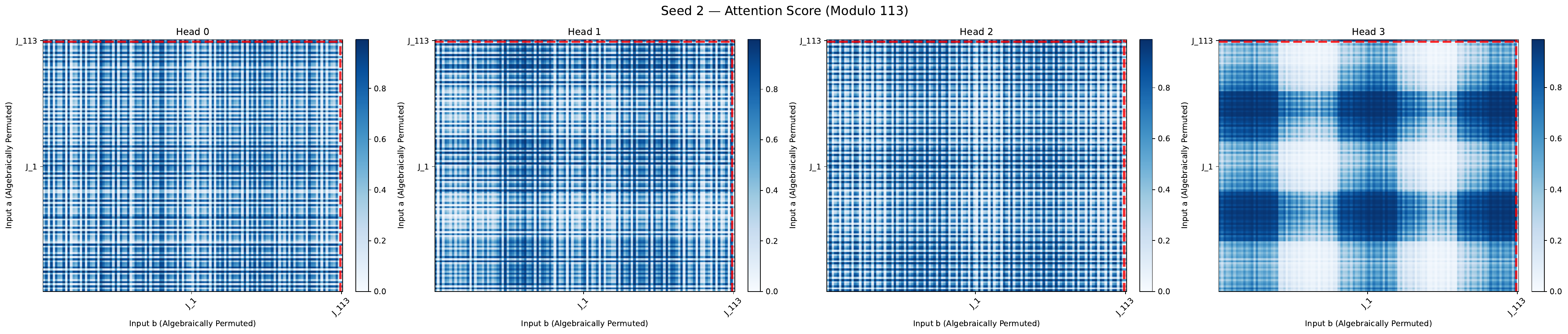}
    \caption{Seed 2}
\end{subfigure}

\vspace{0.35cm}

\begin{subfigure}[t]{0.95\textwidth}
    \centering
    \includegraphics[width=0.78\linewidth]{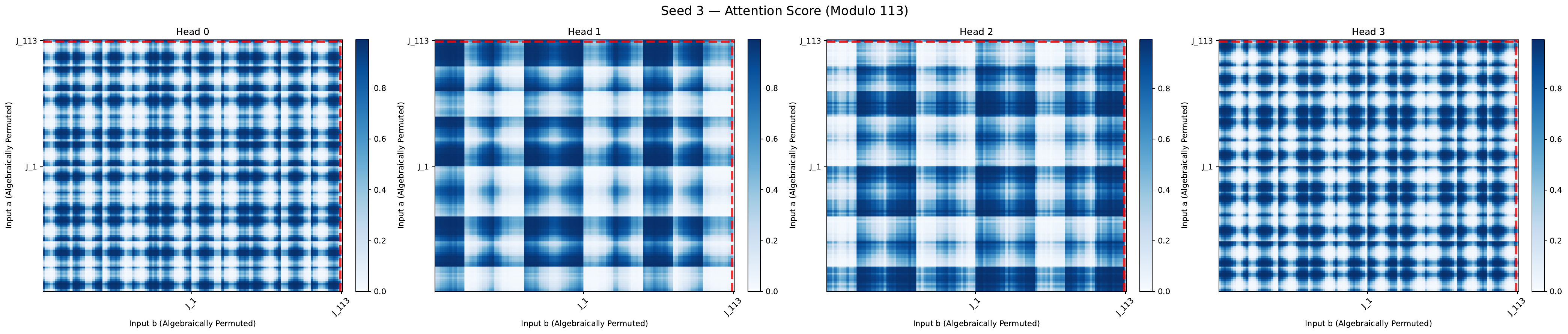}
    \caption{Seed 3}
\end{subfigure}

\vspace{0.35cm}

\begin{subfigure}[t]{0.95\textwidth}
    \centering
    \includegraphics[width=0.78\linewidth]{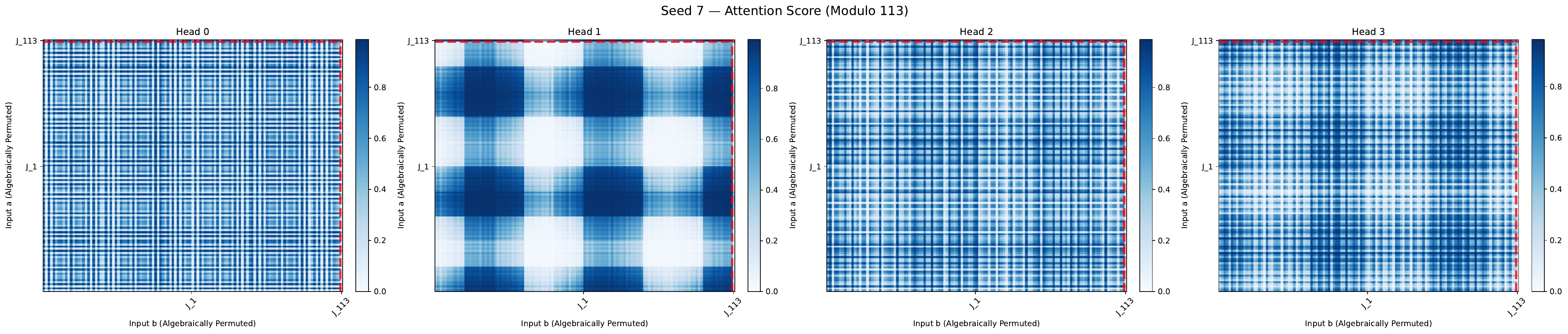}
    \caption{Seed 7}
\end{subfigure}

\vspace{0.35cm}

\begin{subfigure}[t]{0.95\textwidth}
    \centering
    \includegraphics[width=0.78\linewidth]{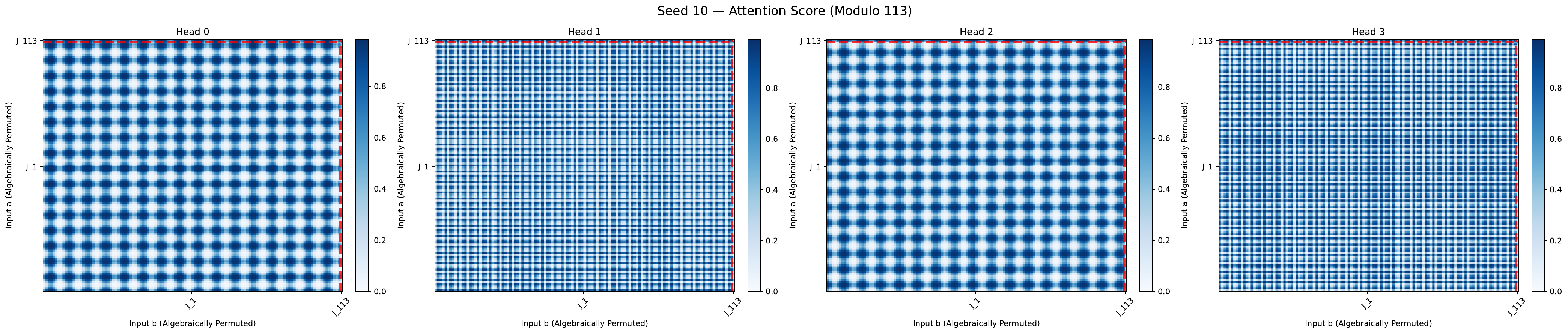}
    \caption{Seed 10}
\end{subfigure}

\caption{Attention stability for $\mathbb{Z}_{113}$ across seeds.}
\label{fig:attn_113}
\end{figure*}

\newpage

\begin{figure*}[!hb]
    \centering

\textbf{$n = 143$} \\[0.8em]

\begin{subfigure}[t]{0.95\textwidth}
    \centering
    \includegraphics[width=0.78\linewidth]{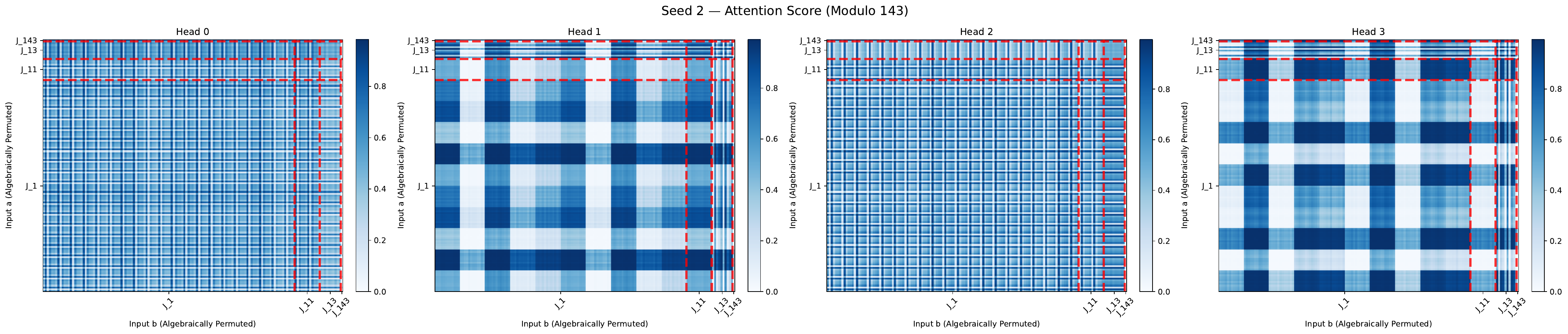}
    \caption{Seed 2}
\end{subfigure}

\vspace{0.35cm}

\begin{subfigure}[t]{0.95\textwidth}
    \centering
    \includegraphics[width=0.78\linewidth]{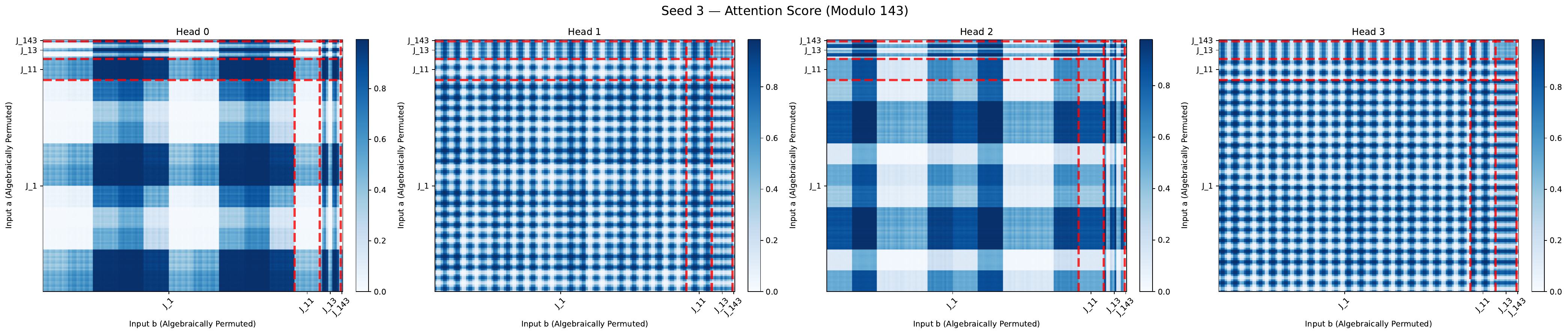}
    \caption{Seed 3}
\end{subfigure}

\vspace{0.35cm}

\begin{subfigure}[t]{0.95\textwidth}
    \centering
    \includegraphics[width=0.78\linewidth]{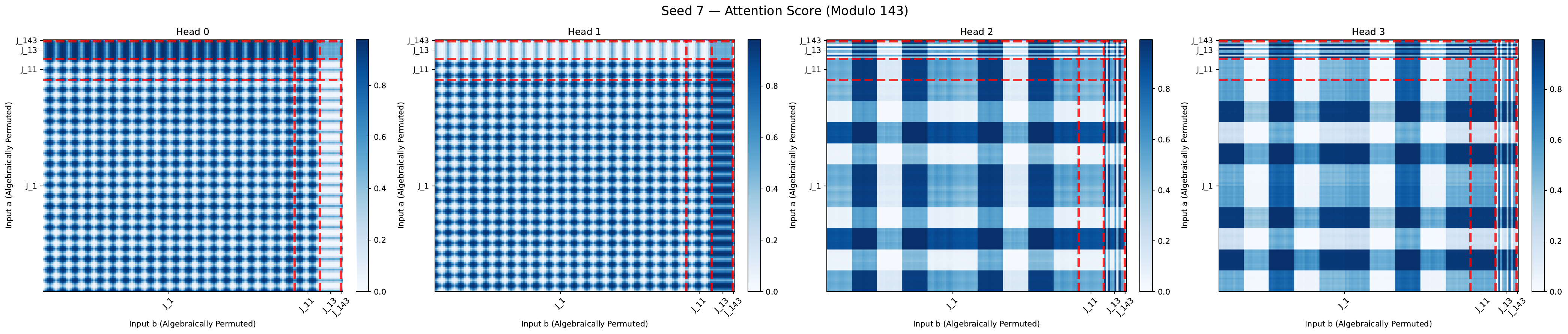}
    \caption{Seed 7}
\end{subfigure}

\vspace{0.35cm}

\begin{subfigure}[t]{0.95\textwidth}
    \centering
    \includegraphics[width=0.78\linewidth]{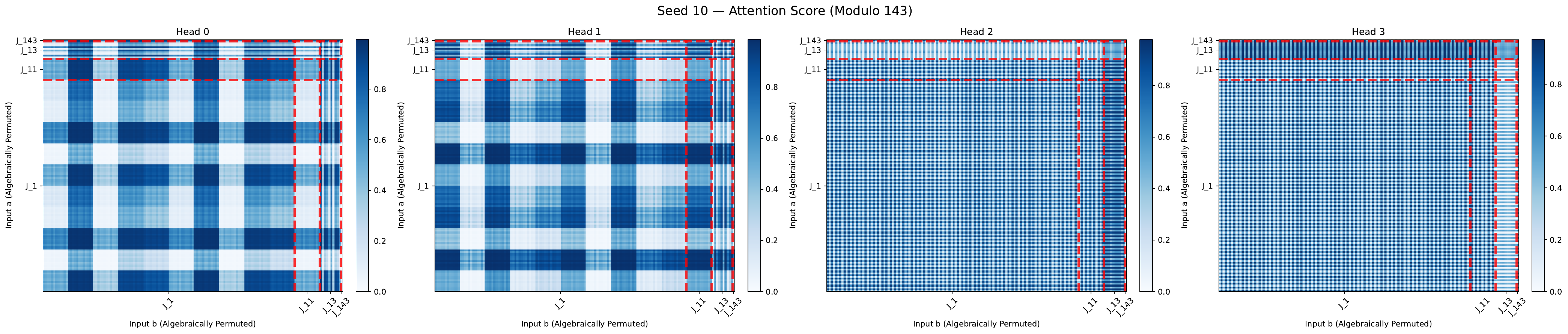}
    \caption{Seed 10}
\end{subfigure}

\caption{Attention stability for $\mathbb{Z}_{143}$ across seeds.}
\label{fig:attn_143}
\end{figure*}

\newpage

\begin{figure*}[!hb]
    \centering

\textbf{$n = 154$} \\[0.8em]

\begin{subfigure}[t]{0.95\textwidth}
    \centering
    \includegraphics[width=0.78\linewidth]{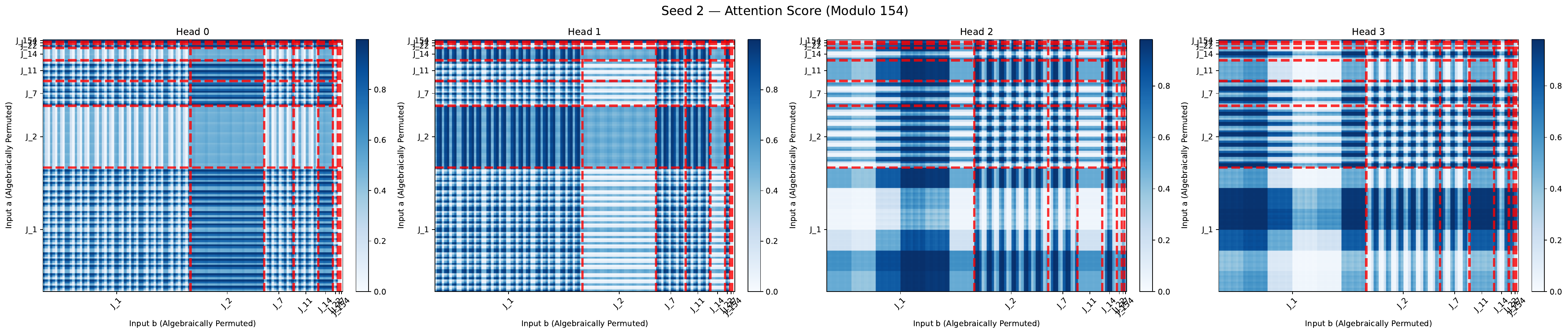}
    \caption{Seed 2}
\end{subfigure}

\vspace{0.35cm}

\begin{subfigure}[t]{0.95\textwidth}
    \centering
    \includegraphics[width=0.78\linewidth]{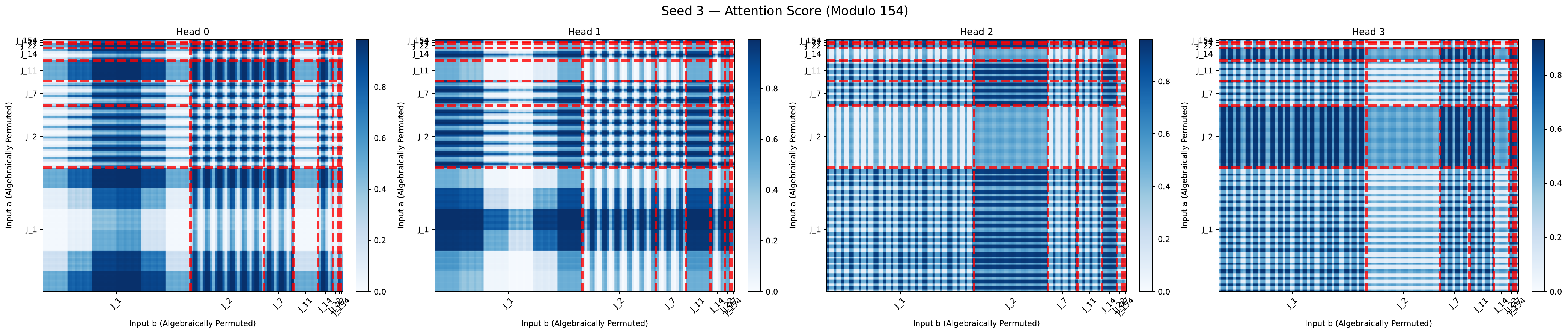}
    \caption{Seed 3}
\end{subfigure}

\vspace{0.35cm}

\begin{subfigure}[t]{0.95\textwidth}
    \centering
    \includegraphics[width=0.78\linewidth]{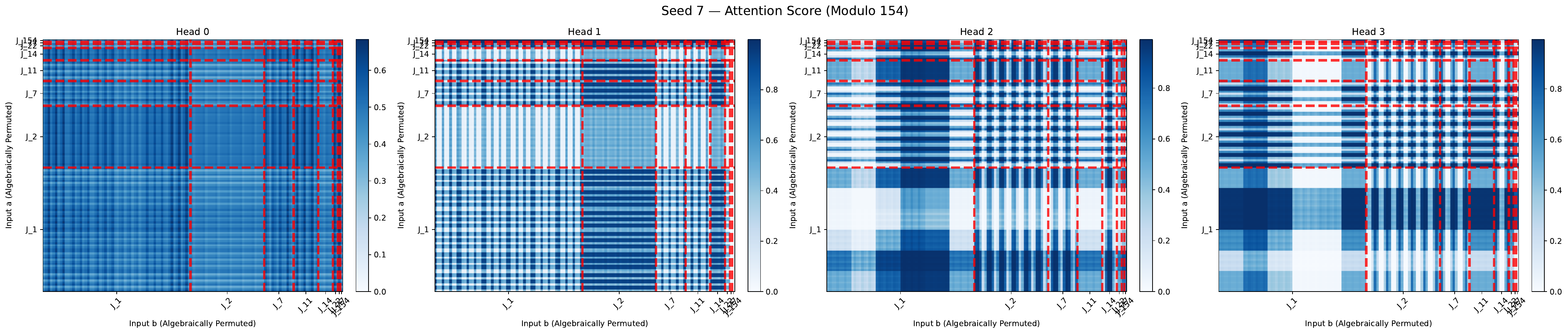}
    \caption{Seed 7}
\end{subfigure}

\vspace{0.35cm}

\begin{subfigure}[t]{0.95\textwidth}
    \centering
    \includegraphics[width=0.78\linewidth]{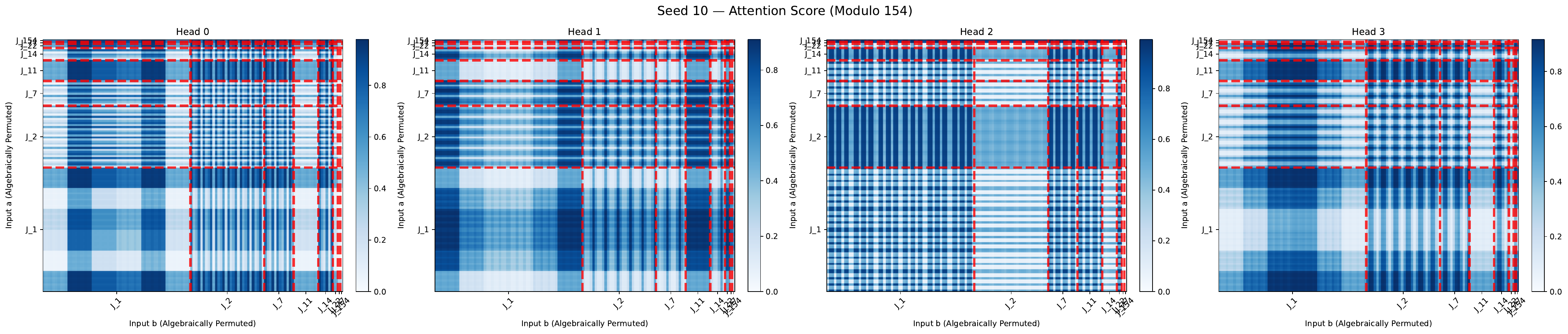}
    \caption{Seed 10}
\end{subfigure}

\caption{Attention stability for $\mathbb{Z}_{154}$ across seeds.}
\label{fig:attn_154}
\end{figure*}

\newpage

\begin{figure*}[!hb]
    \centering

\textbf{$n = 165$} \\[0.8em]

\begin{subfigure}[t]{0.95\textwidth}
    \centering
    \includegraphics[width=0.78\linewidth]{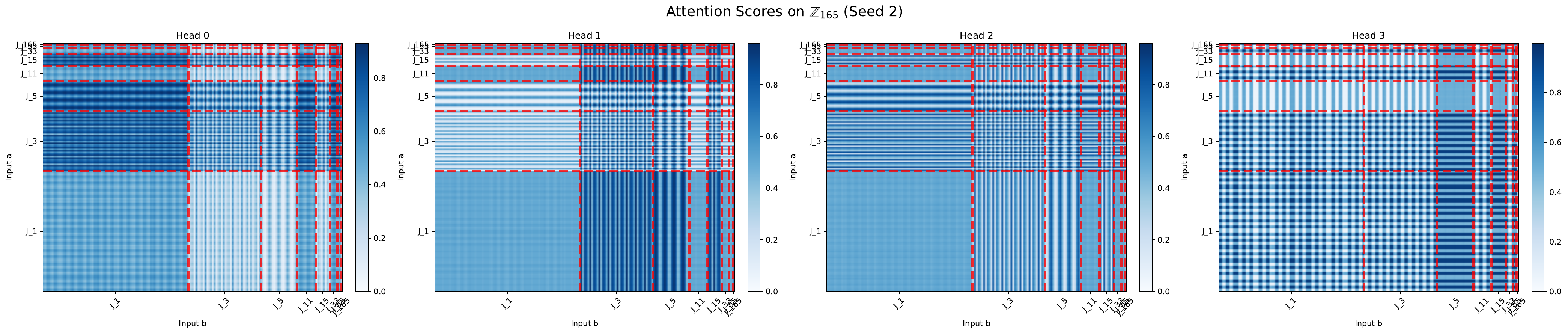}
    \caption{Seed 2}
\end{subfigure}

\vspace{0.35cm}

\begin{subfigure}[t]{0.95\textwidth}
    \centering
    \includegraphics[width=0.78\linewidth]{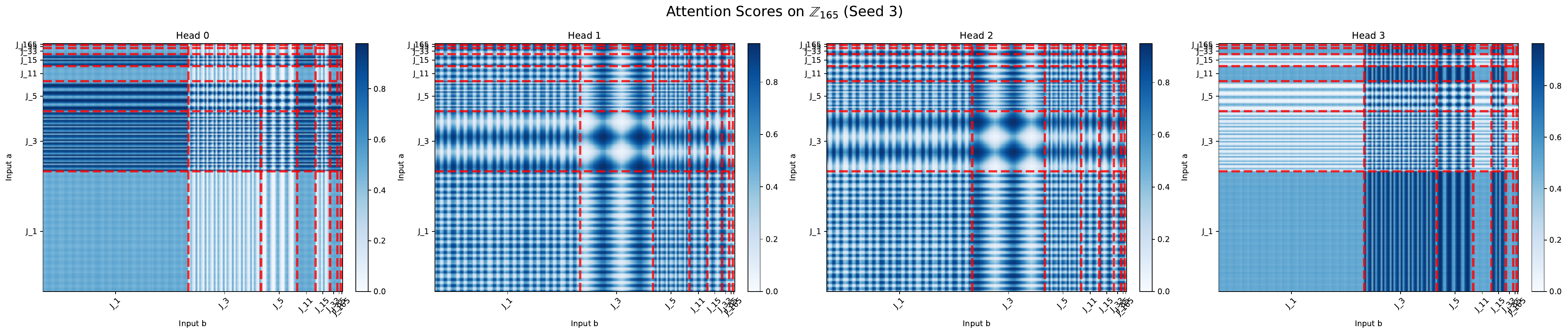}
    \caption{Seed 3}
\end{subfigure}

\vspace{0.35cm}

\begin{subfigure}[t]{0.95\textwidth}
    \centering
    \includegraphics[width=0.78\linewidth]{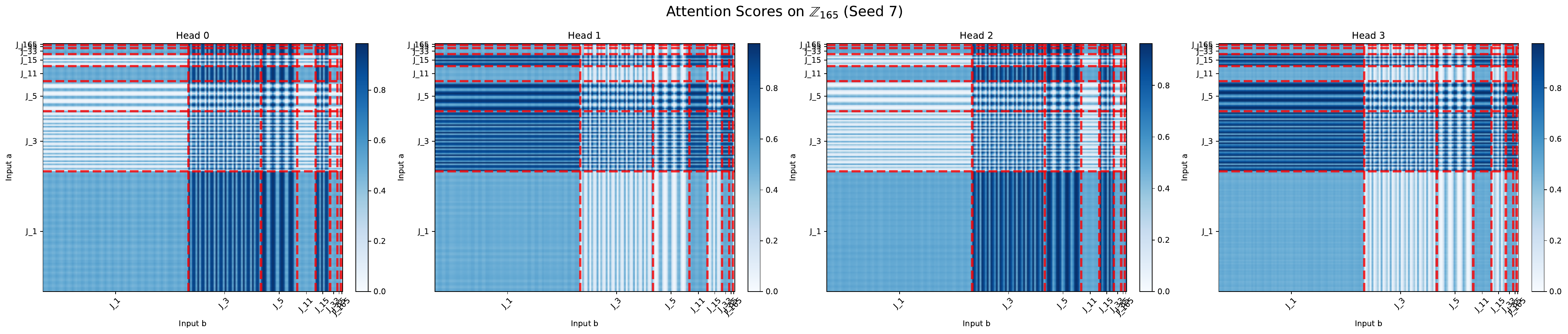}
    \caption{Seed 7}
\end{subfigure}

\vspace{0.35cm}

\begin{subfigure}[t]{0.95\textwidth}
    \centering
    \includegraphics[width=0.78\linewidth]{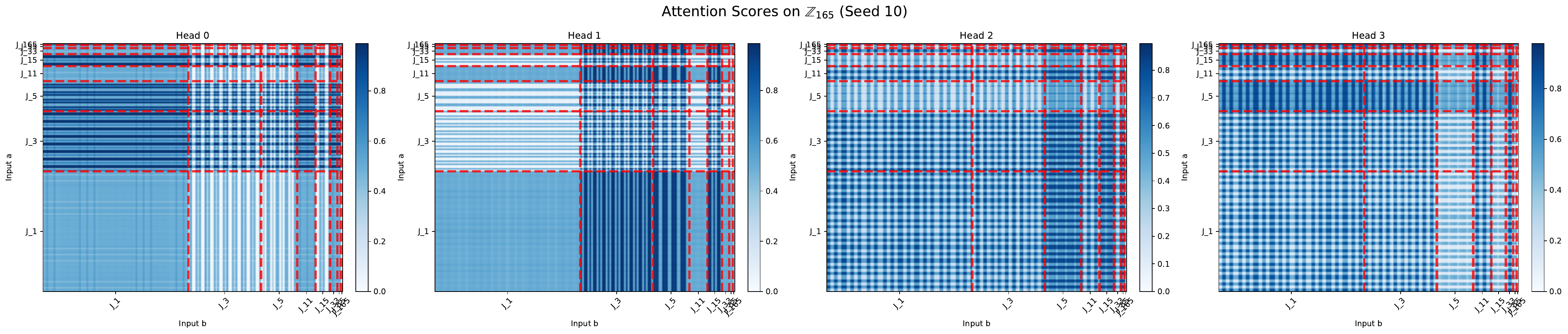}
    \caption{Seed 10}
\end{subfigure}

\caption{Attention stability for $\mathbb{Z}_{165}$ across seeds.}
\label{fig:attn_165}
\end{figure*}

\newpage

\subsection{MLP Feature Stability}

We further assess representational stability by visualizing the hidden layer activations of the MLP layer of our transformer architecture. We perform this analysis across multiple seeds and moduli $n \in \{113, 143, 154, 165\}$. 

For each setting, we randomly sample three neurons from the MLP hidden layer, and visualize their activations for each input pair $a, b$, permuted and ordered by $\mathcal{J}$-classes. This ordering induces a block structure corresponding to shared GCD structure in $\mathbb{Z}_n$ as explained by the GCR algorithm \cite{chughtai2023toy}.

While the exact features vary across seeds, the partitioned structure of the input space remains consistent across runs, suggesting that the MLP consistently decomposes the multiplicative structure of $\mathbb{Z}_n$ in a similar manner.

\begin{figure*}[!hb]
    \centering
    \textbf{$n = 165$} \\[0.5em]

\begin{subfigure}[t]{0.48\textwidth}
    \centering
    \includegraphics[width=\linewidth]{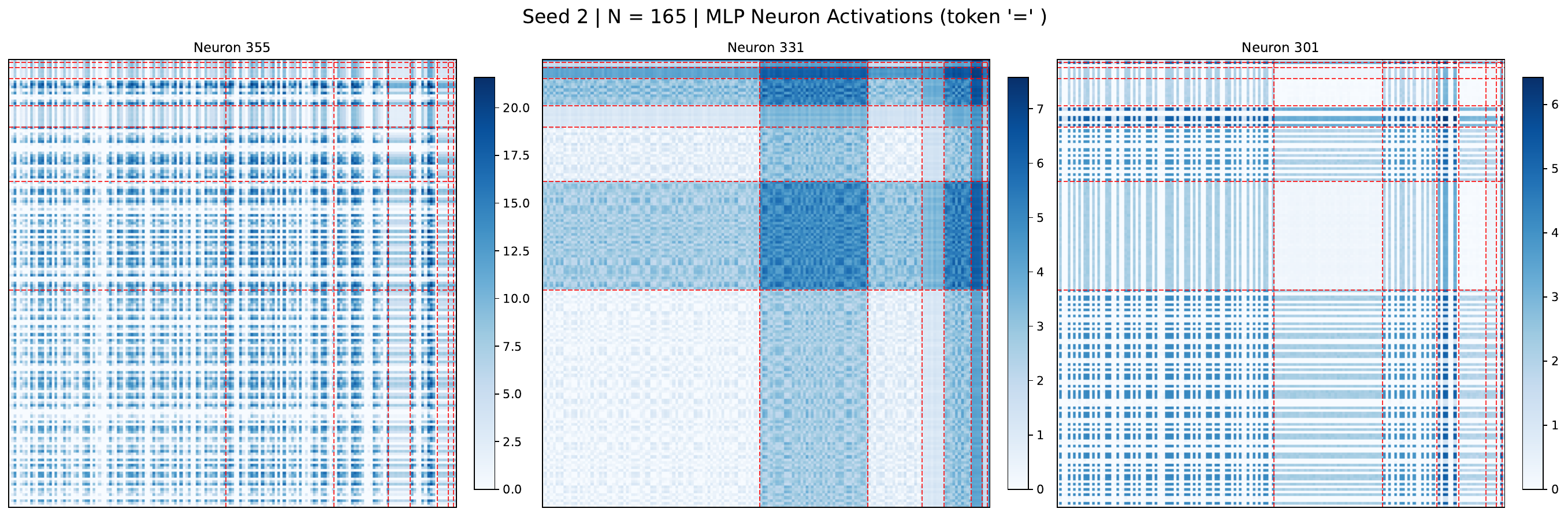}
    \caption{Seed 2}
\end{subfigure}
\hfill
\begin{subfigure}[t]{0.48\textwidth}
    \centering
    \includegraphics[width=\linewidth]{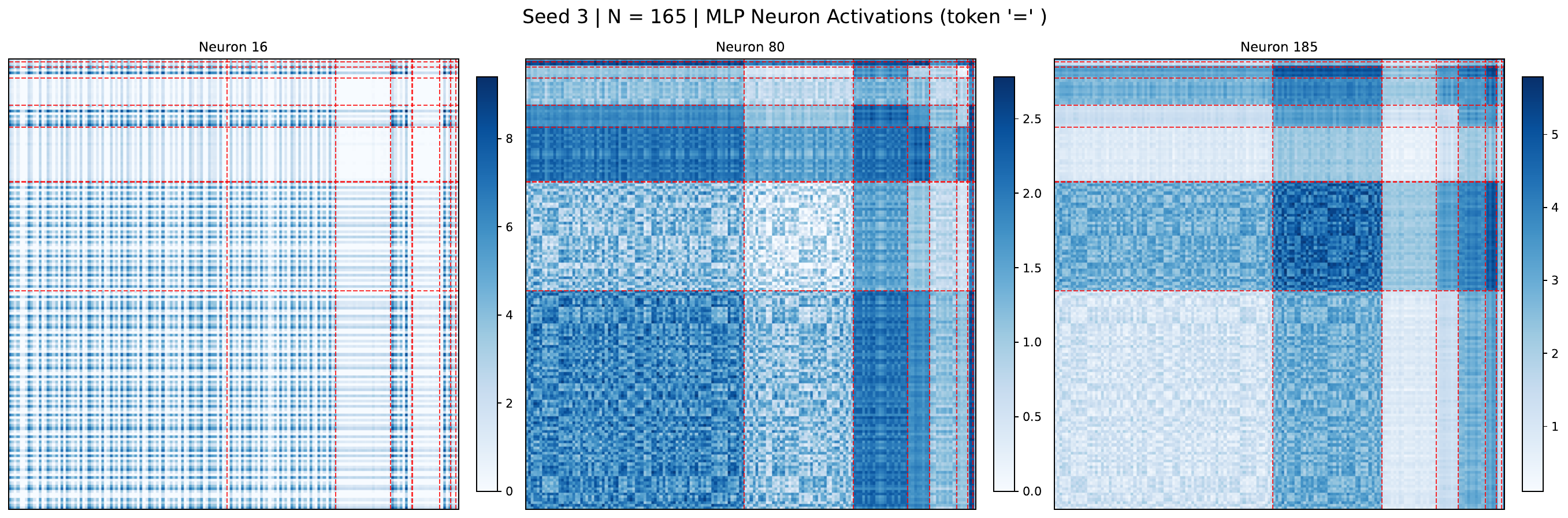}
    \caption{Seed 3}
\end{subfigure}

\vspace{0.4cm}

\begin{subfigure}[t]{0.48\textwidth}
    \centering
    \includegraphics[width=\linewidth]{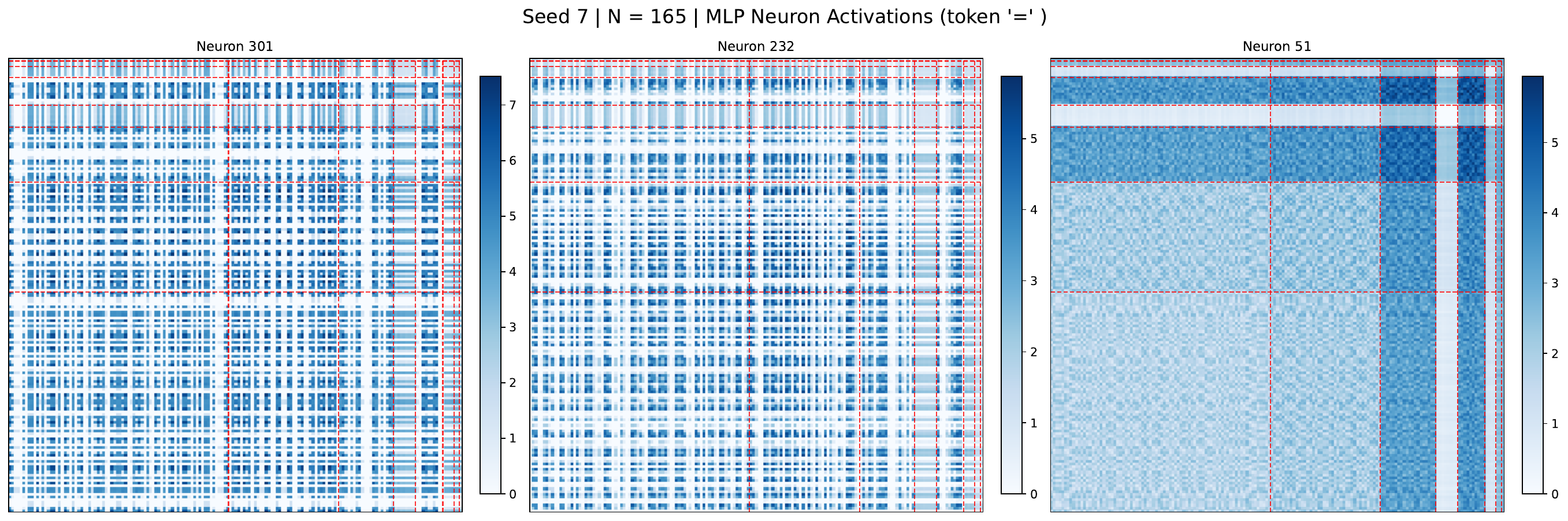}
    \caption{Seed 7}
\end{subfigure}
\hfill
\begin{subfigure}[t]{0.48\textwidth}
    \centering
    \includegraphics[width=\linewidth]{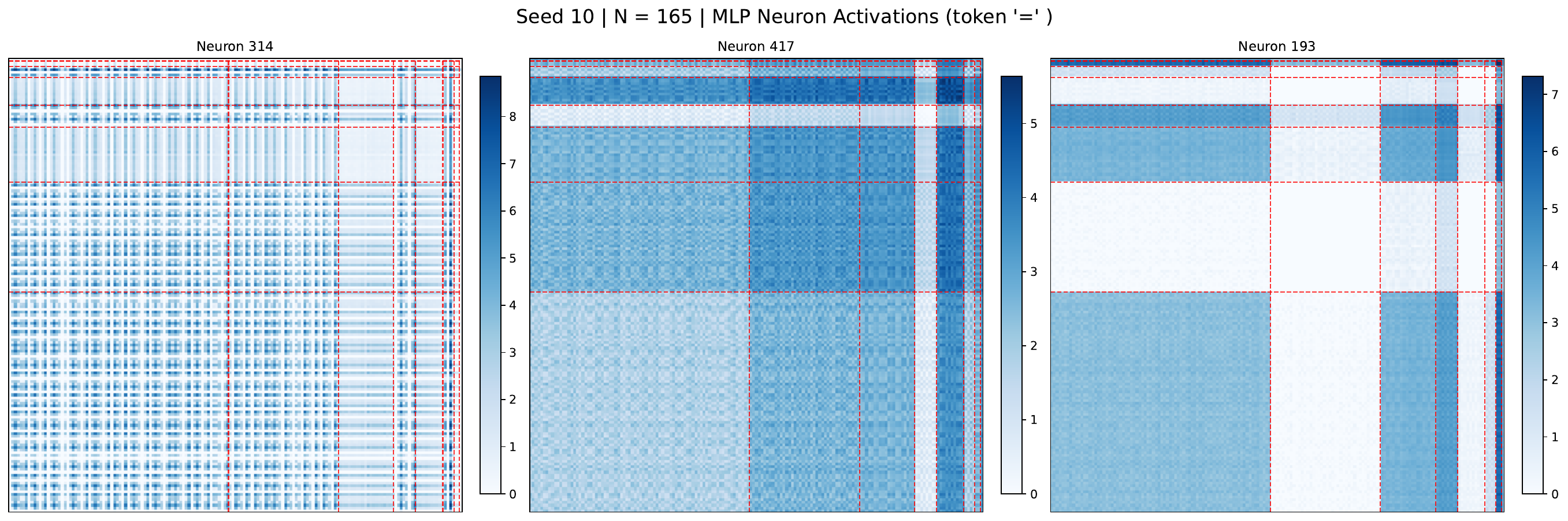}
    \caption{Seed 10}
\end{subfigure}

\caption{
\textbf{MLP feature stability for $n=165$.}
Activation maps of three randomly sampled MLP neurons, reordered by $\mathcal{J}$-class structure of $\mathbb{Z}_{165}$. Across seeds, we observe consistent emergence of block-structured representations aligned with classes, indicating robust recovery of algebraic structure in the learned embedding space.
}
\label{fig:mlp_stability_165}

\end{figure*}

\newpage

\begin{figure*}[!hb]
    \centering

\textbf{$n = 113$} \\[0.5em]

\begin{subfigure}[t]{0.48\textwidth}
    \centering
    \includegraphics[width=\linewidth]{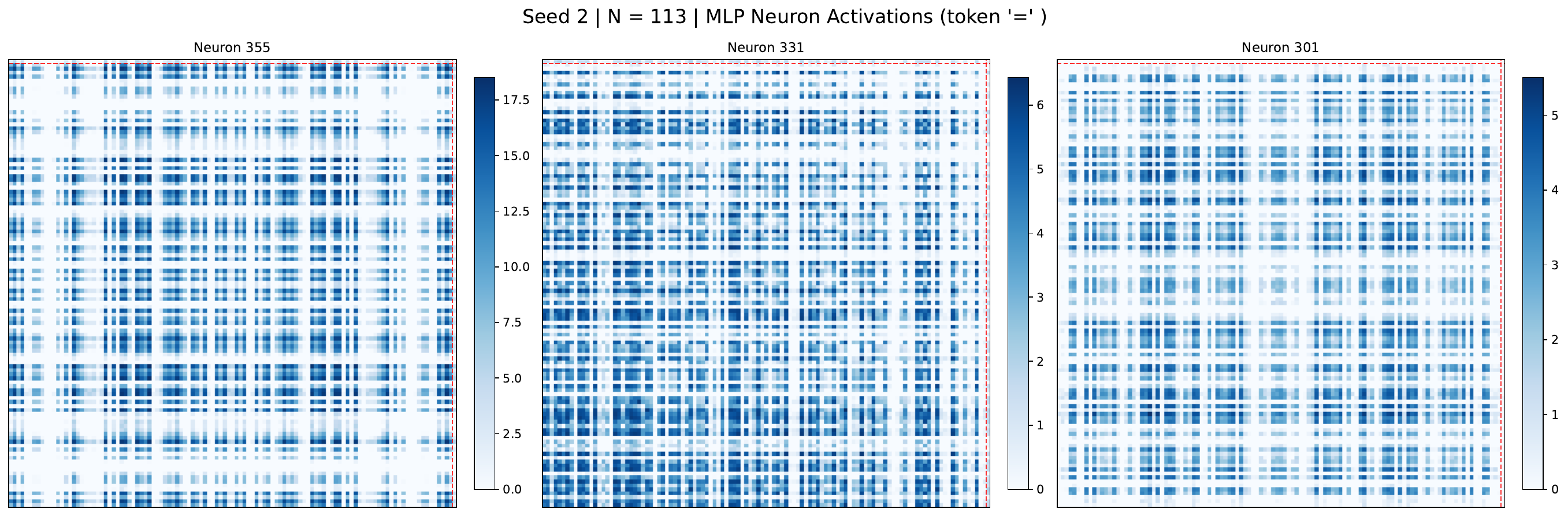}
    \caption{Seed 2}
\end{subfigure}
\hfill
\begin{subfigure}[t]{0.48\textwidth}
    \centering
    \includegraphics[width=\linewidth]{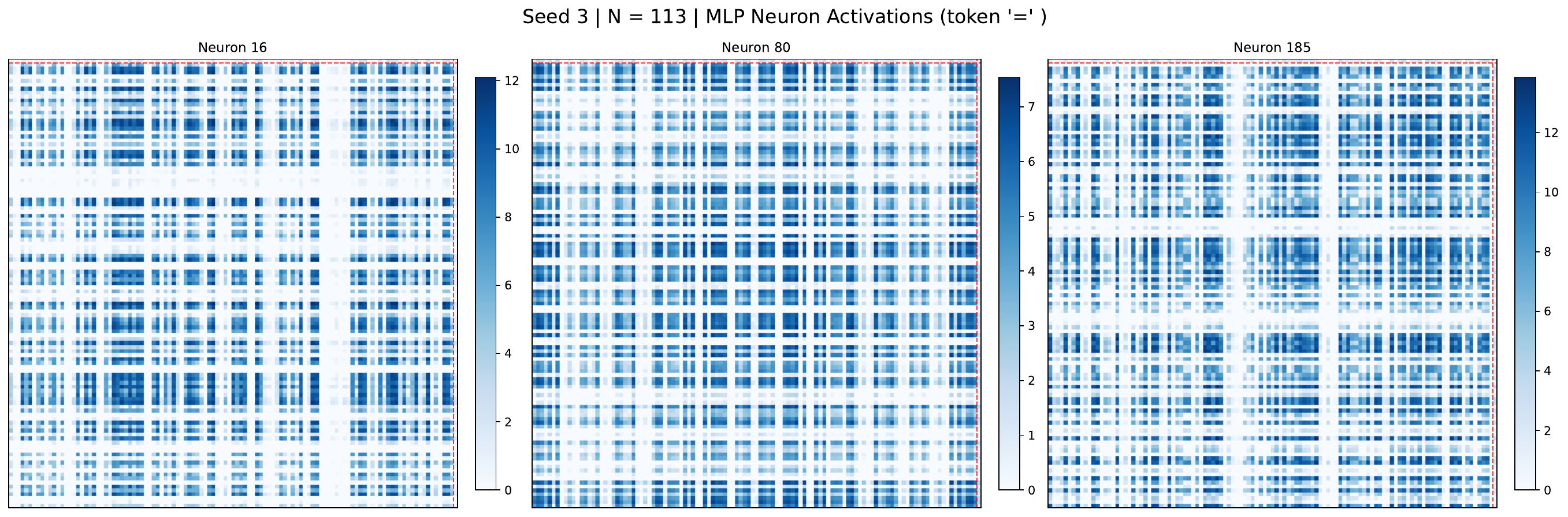}
    \caption{Seed 3}
\end{subfigure}

\vspace{0.1cm}

\begin{subfigure}[t]{0.48\textwidth}
    \centering
    \includegraphics[width=\linewidth]{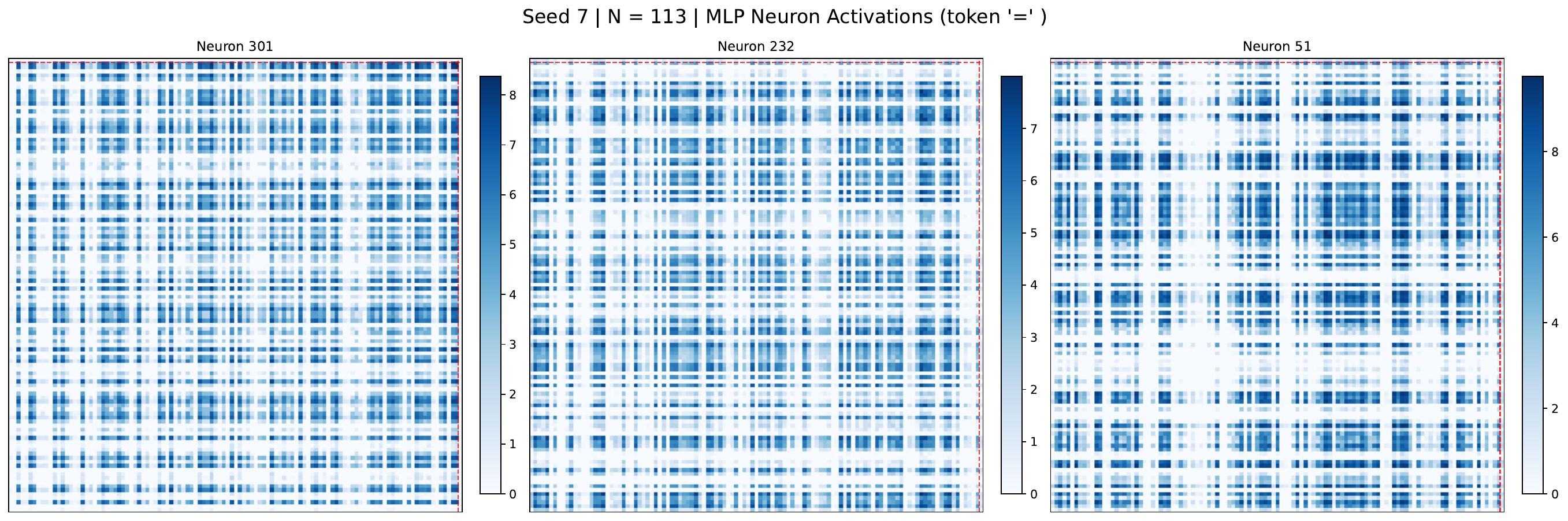}
    \caption{Seed 7}
\end{subfigure}
\hfill
\begin{subfigure}[t]{0.48\textwidth}
    \centering
    \includegraphics[width=\linewidth]{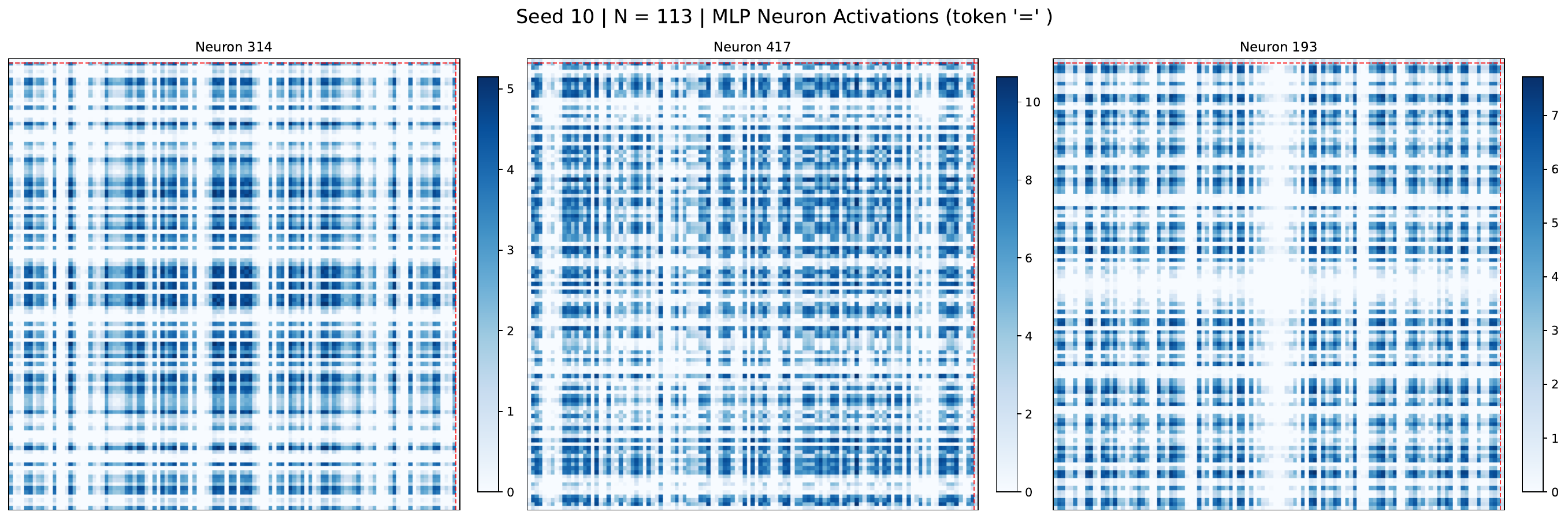}
    \caption{Seed 10}
\end{subfigure}

\vspace{0.1cm}

\textbf{$n = 143$} \\[0.5em]

\begin{subfigure}[t]{0.48\textwidth}
    \centering
    \includegraphics[width=\linewidth]{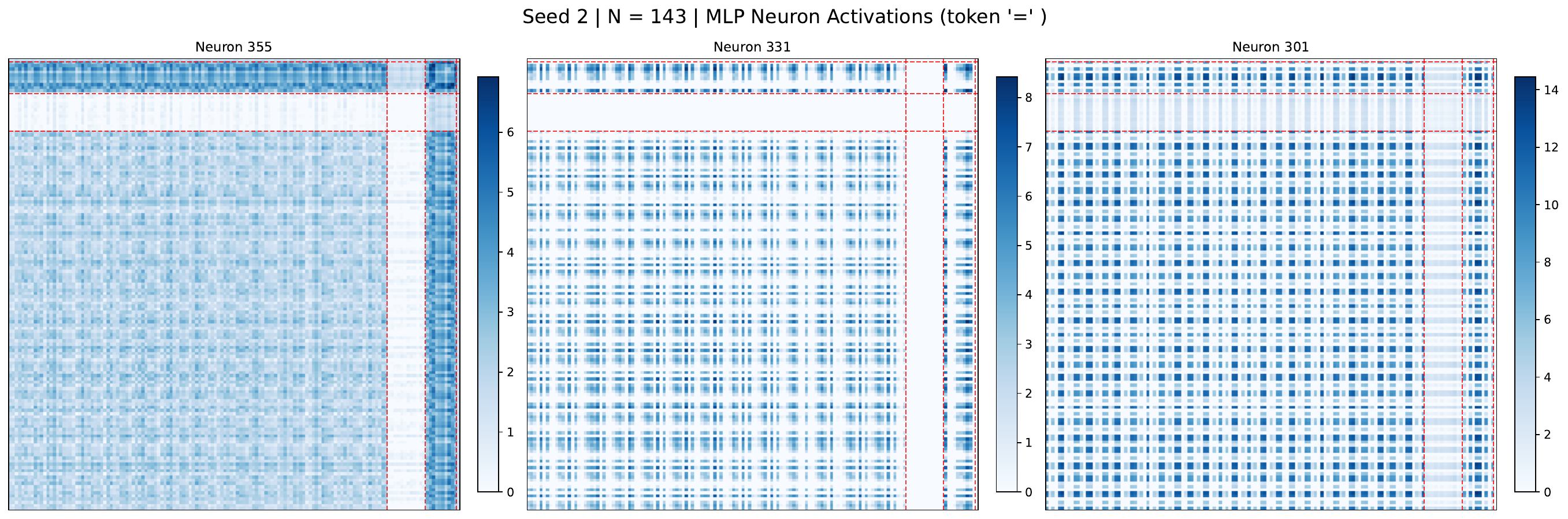}
    \caption{Seed 2}
\end{subfigure}
\hfill
\begin{subfigure}[t]{0.48\textwidth}
    \centering
    \includegraphics[width=\linewidth]{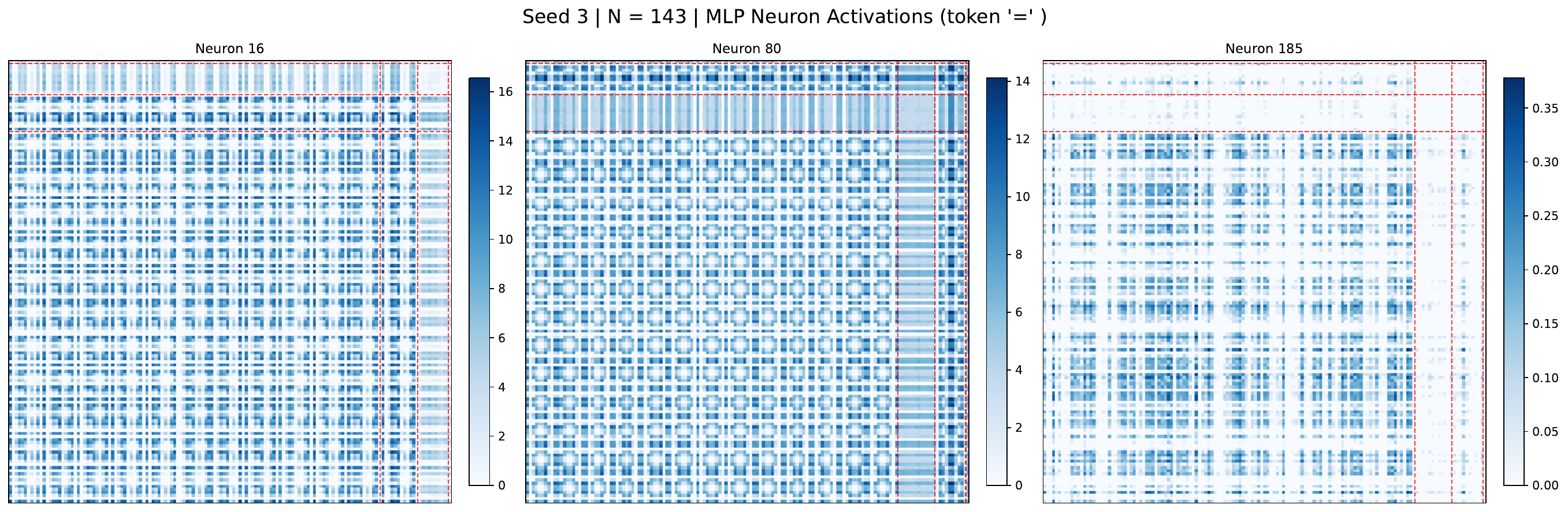}
    \caption{Seed 3}
\end{subfigure}

\vspace{0.1cm}

\begin{subfigure}[t]{0.48\textwidth}
    \centering
    \includegraphics[width=\linewidth]{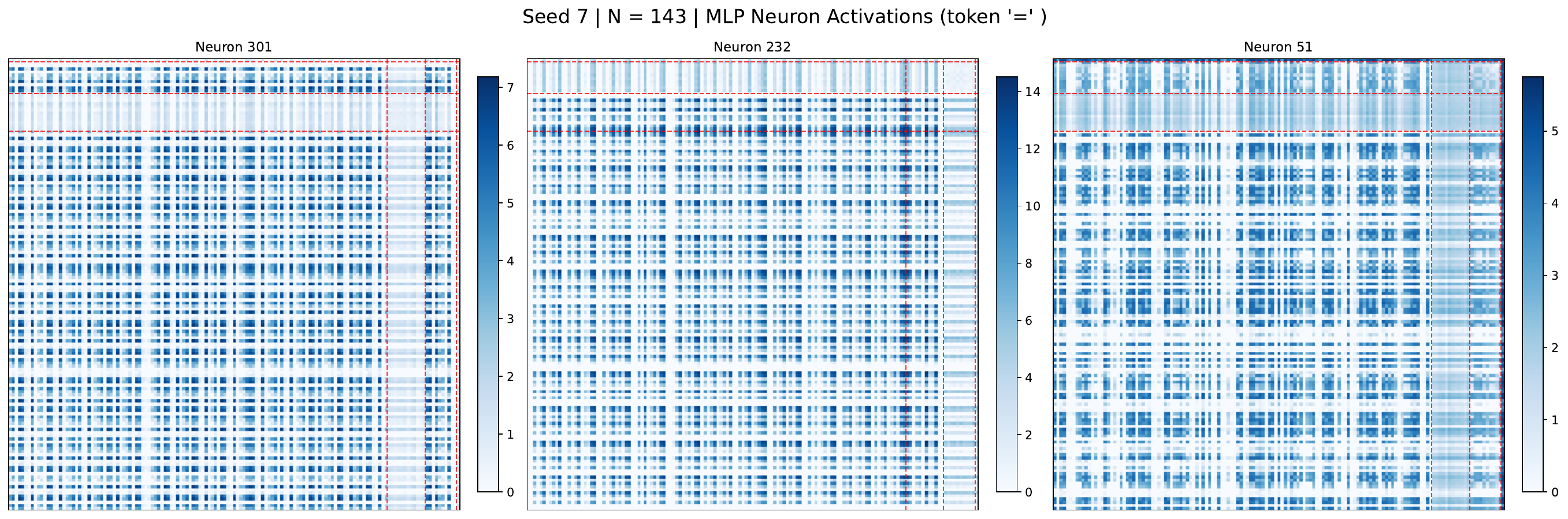}
    \caption{Seed 7}
\end{subfigure}
\hfill
\begin{subfigure}[t]{0.48\textwidth}
    \centering
    \includegraphics[width=\linewidth]{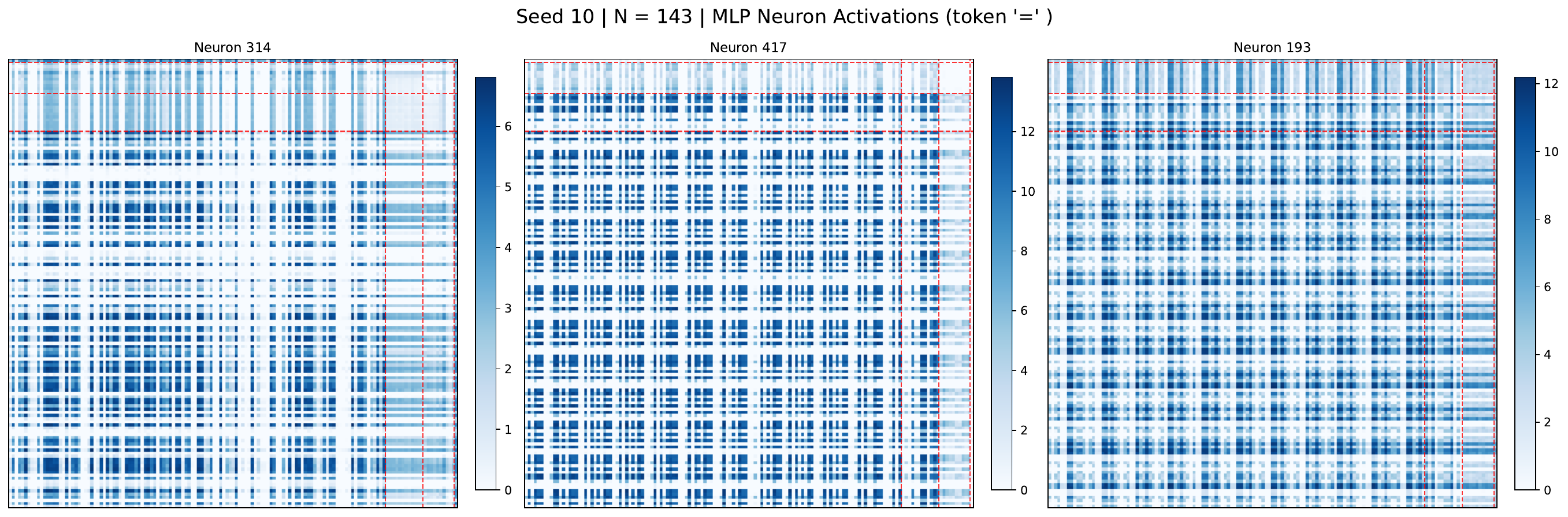}
    \caption{Seed 10}
\end{subfigure}

\vspace{0.1cm}

\textbf{$n = 154$} \\[0.5em]

\begin{subfigure}[H]{0.48\textwidth}
    \centering
    \includegraphics[width=\linewidth]{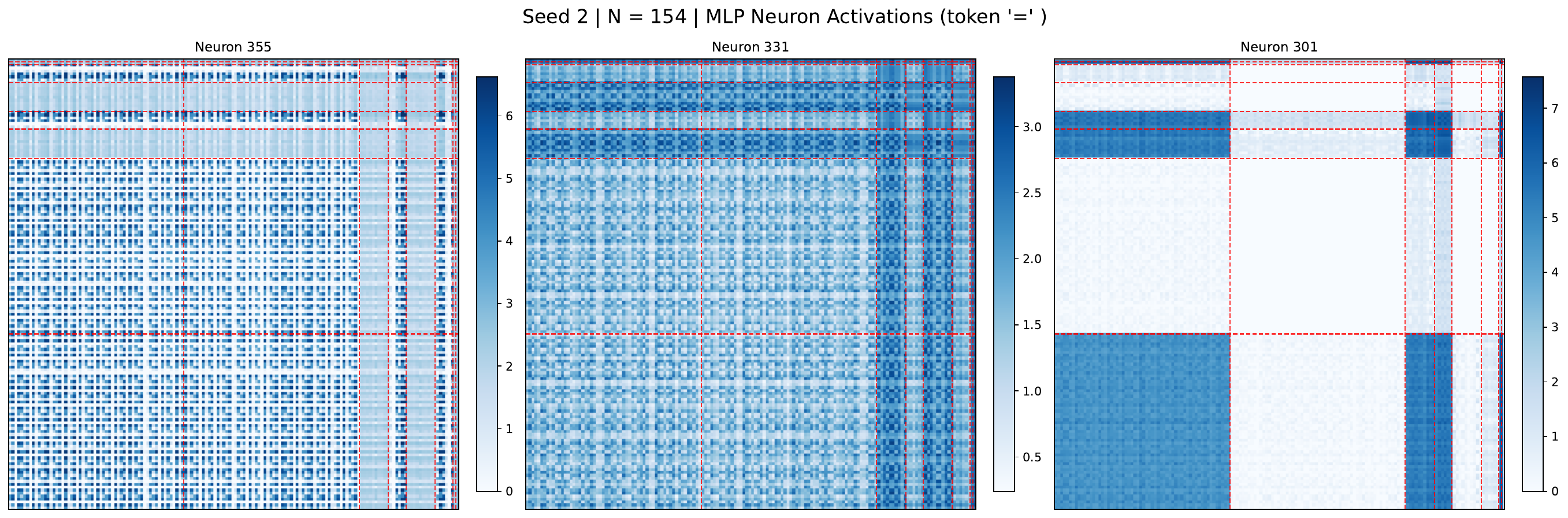}
    \caption{Seed 2}
\end{subfigure}
\hfill
\begin{subfigure}[H]{0.48\textwidth}
    \centering
    \includegraphics[width=\linewidth]{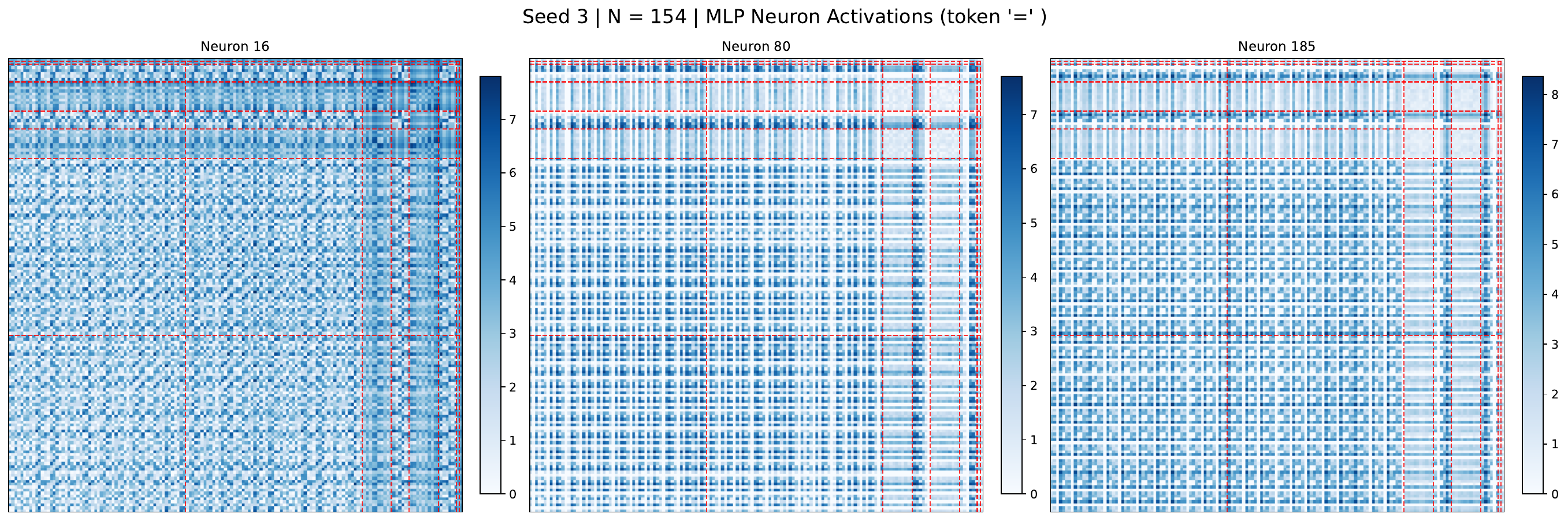}
    \caption{Seed 3}
\end{subfigure}

\vspace{0.1cm}

\begin{subfigure}[H]{0.48\textwidth}
    \centering
    \includegraphics[width=\linewidth]{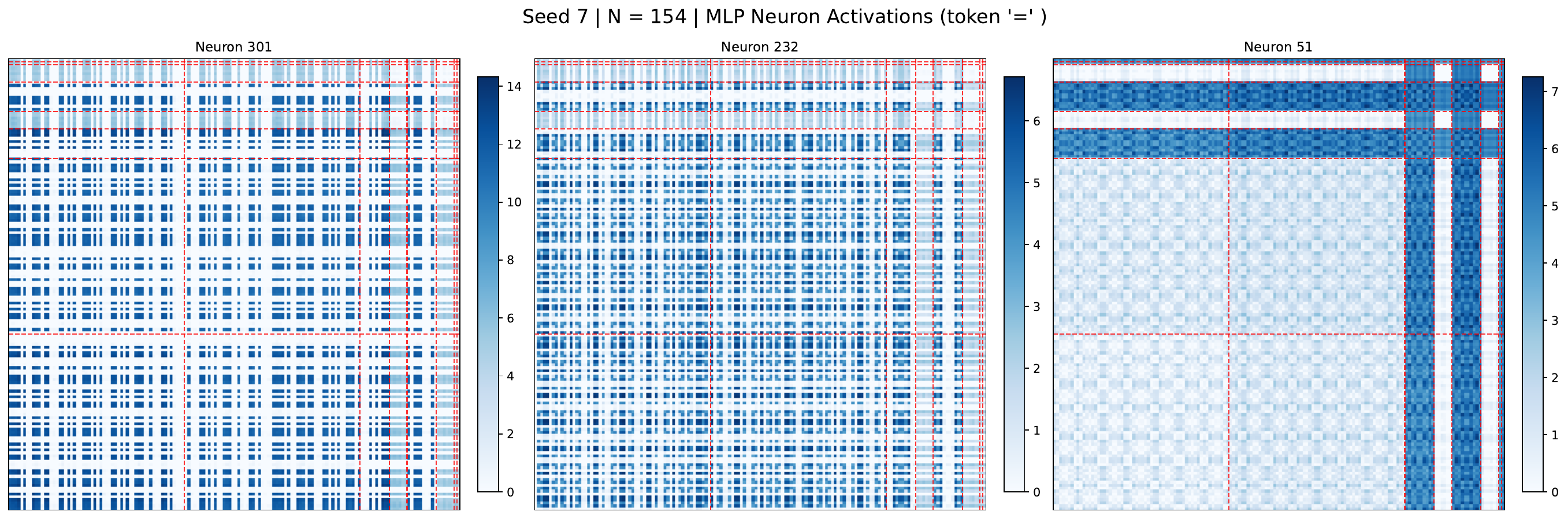}
    \caption{Seed 7}
\end{subfigure}
\hfill
\begin{subfigure}[H]{0.48\textwidth}
    \centering
    \includegraphics[width=\linewidth]{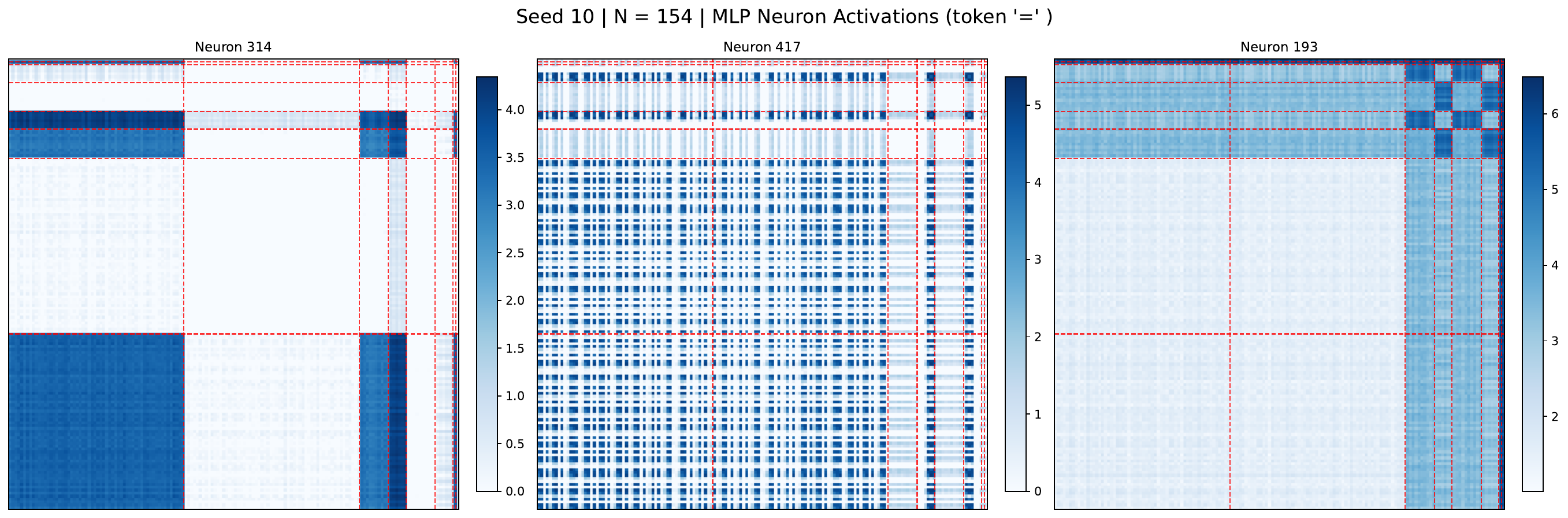}
    \caption{Seed 10}
\end{subfigure}

\vspace{0.1cm}

\caption{
\textbf{MLP feature stability across seeds and moduli.}
Each subplot shows activation maps of three randomly sampled MLP neurons, reordered according to the $\mathcal{J}$-class decomposition of $\mathbb{Z}_n$. Across seeds and moduli, we observe consistent emergence of block-structured activations aligned with multiplicative $\mathcal{J}$-classes, indicating that the learned representation is stable across random initializations.
}
\label{fig:mlp_stability_jclasses}

\end{figure*}

\newpage

\end{document}